\renewcommand*{\backref}[1]{}
\renewcommand*{\backrefalt}[4]{
\ifcase #1
  No citations.
\or
  (p. #2.)
\else
  (pp. #2.)
\fi}
\def\eqref#1{equation~\ref{#1}}
\def\1{\bm{1}}
\def\rvx{{\mathbf{x}}}
\def\rmV{{\mathbf{V}}}
\def\vtheta{{\bm{\theta}}}
\def\vb{{\bm{b}}}
\def\vu{{\bm{u}}}
\def\vw{{\bm{w}}}
\def\vx{{\bm{x}}}
\DeclareMathAlphabet{\mathsfit}{\encodingdefault}{\sfdefault}{m}{sl}
\SetMathAlphabet{\mathsfit}{bold}{\encodingdefault}{\sfdefault}{bx}{n}
\newcommand{\R}{\mathbb{R}}
\newcommand{\sigmoid}{\sigma}
\newcommand{\norm}[1]{\|#1\|}
\newcommand{\abs}[1]{\left|#1\right|}
\newcommand{\relu}{\operatorname{ReLU}}
\newcommand{\elu}{\operatorname{ELU}}
\renewcommand{\sigmoid}{\operatorname{Sigmoid}}
\renewcommand{\tanh}{\operatorname{Tanh}}
\newcommand{\elephant}{\operatorname{Elephant}}
\newcommand{\fta}{\operatorname{FTA}}
\newcommand{\lwta}{\operatorname{LWTA}}
\newcommand{\maxout}{\operatorname{Maxout}}
\newcommand{\ntk}{\operatorname{NTK}}
\newcommand{\srnn}{\operatorname{SR-NN}}
\newtheorem{property}{Property}
\newtheorem{lemma}{Lemma}
\newtheorem{definition}{Definition}
\newtheorem{remark}{Remark}
\newcommand{\figwidthone}{\textwidth}
\newcommand{\figwidthtwo}{0.48\textwidth}
\newcommand{\figwidththree}{0.32\textwidth}
\title{Efficient Reinforcement Learning by Reducing Forgetting with Elephant Activation Functions}
\author{
Qingfeng Lan \\
Department of Computing Science\\
University of Alberta\\
\texttt{qlan3@ualberta.ca} \\
\And
Gautham Vasan \\
Department of Computing Science\\
University of Alberta\\
\texttt{vasan@ualberta.ca} \\
\And
A. Rupam Mahmood \\
Department of Computing Science \\
University of Alberta \\
CIFAR AI Chair, Amii \\
\texttt{armahmood@ualberta.ca} \\
}
\begin{document}
\maketitle
\begin{abstract}
Catastrophic forgetting has remained a significant challenge for efficient reinforcement learning for decades~\citep{ring1994continual,rivest2003combining}.
While recent works have proposed effective methods to mitigate this issue, they mainly focus on the algorithmic side.
Meanwhile, we do not fully understand what architectural properties of neural networks lead to catastrophic forgetting.
This study aims to fill this gap by studying the role of activation functions in the training dynamics of neural networks and their impact on catastrophic forgetting in reinforcement learning setup.
Our study reveals that, besides sparse representations, the gradient sparsity of activation functions also plays an important role in reducing forgetting.
Based on this insight, we propose a new class of activation functions, \emph{elephant activation functions}, that can generate both sparse outputs and sparse gradients.
We show that by simply replacing classical activation functions with elephant activation functions in the neural networks of value-based algorithms, we can significantly improve the resilience of neural networks to catastrophic forgetting, thus making reinforcement learning more sample-efficient and memory-efficient.~\footnote{Code release: \url{https://github.com/qlan3/ENN}}
\end{abstract}

\section{Introduction}

One of the greatest challenges to achieving efficient learning is the decades-old issue of \emph{catastrophic forgetting}~\citep{french1999catastrophic}.
Catastrophic forgetting stands for the phenomenon that, when used with backpropagation, artificial neural networks tend to forget prior knowledge drastically, which is commonly encountered in both continual supervised learning~\citep{hsu2018re,farquhar2018towards,van2019three,delange2021continual} and reinforcement learning (RL)~\citep{schwarz2018progress,atkinson2021pseudo,khetarpal2022towards}.
By its very nature, RL requires continual learning abilities even in single-task setups.
Just like continual supervised learning problems, single-task RL faces non-stationarity due to value bootstrapping, policy iterations, and the resulting shifts in data distributions~\citep{cahill2011catastrophic,rusu2016progressive,lan2023memory,vasan2024deep,elsayed2024streaming}.
Without continual learning abilities, an RL agent would override previously learned skills during later training, resulting in deteriorating performance and inefficient learning~\citep{ghiassian2020improving,pan2020fuzzy}.

In recent years, researchers have made significant progress in mitigating catastrophic forgetting and proposed many effective methods, such as replay methods~\citep{mendez2022modular}, regularization-based methods~\citep{riemer2018learning}, parameter-isolation methods~\citep{mendez2020lifelong}, and optimization-based methods~\citep{farajtabar2020orthogonal}. 
Most of these methods tackle the forgetting problem from the algorithmic approach while the other approach, alleviating forgetting by designing neural networks with specific properties, is less explored.
There is still a lack of full understanding of what properties of neural networks lead to catastrophic forgetting.
Recently, \citet{mirzadeh2022wide} find that the width of a neural network significantly affects forgetting and provide explanations from the perspectives of gradient orthogonality, gradient sparsity, and lazy training regime.
Furthermore, \citet{mirzadeh2022architecture} study the forgetting issue on large-scale benchmarks with various network architectures, demonstrating that architectures can also play an important role in reducing forgetting.

The interaction between catastrophic forgetting and neural network architectures remains under-explored.
In this work, we aim to better understand this interaction by studying the impact of various architectural choices of neural networks on catastrophic forgetting in single-task RL, where many of the continual learning issues appear.
Specifically, we focus on activation functions, one of the most important elements in neural networks.
Theoretically, we investigate the role of activation functions in the training dynamics of neural networks.
Experimentally, we study the effect of various activation functions on catastrophic forgetting under the setting of continual supervised learning.
These results suggest that not only sparse representations but also sparse gradients are essential for mitigating forgetting.
Based on this discovery, we develop a new type of activation function called \textit{elephant activation functions}.
They can generate sparse function values and gradients, thus enhancing neural networks' resilience to catastrophic forgetting.
To verify the effectiveness of our method, we substitute classical activation functions with elephant activation functions in agent neural networks and test these RL agents across a range of tasks, from basic Gymnasium tasks~\citep{towers2023gymnasium} to intricate Atari games~\citep{mnih2013playing}.
The experimental results demonstrate that, under extreme memory constraints, integrating elephant activation functions into RL agents alleviates the forgetting issue, leading to comparable or improved performance while significantly enhancing memory efficiency.
Moreover, we show that even when large replay buffers are utilized, applying elephant activation functions remains beneficial, substantially boosting the learning performance.

\section{Related Work}\label{sec:related}
\paragraph{Architecture-Based Continual Learning}
\citet{mirzadeh2022wide,mirzadeh2022architecture} are particularly notable for studying the effect of network architectures on continual learning.
Several approaches involve the selection and allocation of a subset of weights in a network for each task~\citep{ammar2014online,mallya2018packnet,sokar2021spacenet,fernando2017pathnet,serra2018overcoming,masana2021ternary,li2019learn,yoon2018lifelong,mendez2020lifelong}, or the allocation of a specific network to each task~\citep{rusu2016progressive,aljundi2017expert}.
Some methods expand networks dynamically~\citep{yoon2018lifelong,hung2019compacting,ostapenko2019learning} as training continues.
Inspired by biological neural circuits, \citet{shen2021algorithmic,bricken2023sparse}, and \citet{madireddy2023improving} propose novel networks which generate sparse representations by design.
Finally, our method is closely related to sparse activation functions, such as fuzzy tiling activation ($\fta$)~\citep{pan2020fuzzy}, $\maxout$~\citep{goodfellow2013maxout}, and local winner-take-all ($\lwta$)~\citep{srivastava2013compete}.
Specifically, inspired by tile coding~\citep{sutton2011reinforcement}, $\fta$ maps a scalar to a vector with a controllable sparsity level.
$\maxout$ outputs the maximum value across $k$ input features.
$\lwta$ is similar to $\maxout$ with a slight difference: while $\maxout$ only selects and outputs maximum values, $\lwta$ selects maximum values, sets others to zeros, and then outputs them together.
Compared to these activation functions, $\elephant$ is simpler to implement and can generate both sparse representations and gradients, as supported by our theoretical and experimental results.

\paragraph{Sparsity in Deep Learning}
Sparse representations are known to help reduce forgetting for decades~\citep{french1992semi}.
In supervised learning, dynamic sparse training~\citep{dettmers2019sparse,liu2020dynamic,sokar2021spacenet}, dropout variants~\citep{srivastava2013compete,goodfellow2013maxout,mirzadeh2020dropout,abbasi2022sparsity,sarfraz2023sparse}, and pruning methods~\citep{guo2016dynamic,frankle2019lottery,blalock2020state,zhou2020go,wang2022sparcl} are shown to speed up training and improve generalization.
Additionally, \citet{lee2021gst,sokar2022dynamic}, and \citet{tan2023rlx2} propose dynamic sparse training approaches for RL which achieve comparable or improved performance, higher sample efficiency, and better computation efficiency.
Furthermore, \citet{le2017learning} and \citet{liu2019utility} show that sparse representations stabilize training and improve performance in RL.
Finally, \citet{ceron2024value} demonstrates that increasing network sparsity by gradual magnitude pruning improves the learning performance of value-based RL agents.
Our approach differs from them in its simplicity and effectiveness---by simply replacing classical activation functions with $\elephant$, we can enhance the efficiency of training RL agents.

\paragraph{Local Elasticity and Memorization}
\citet{he2020local} propose the concept of local elasticity.
\citet{chen2020label} introduce label-aware neural tangent kernels, showing that models trained with these kernels are more locally elastic.
\citet{mehta2021extreme} prove a theoretical connection between the scale of network initialization and local elasticity, demonstrating extreme memorization using large initialization scales.
Incorporating Fourier features in the input of a network also induces local elasticity, which is greatly affected by the initial variance of the Fourier basis~\citep{li2021functional}.

\section{Investigating Catastrophic Forgetting via Training Dynamics}

First, we look into the forgetting issue via the training dynamics of neural networks.
For simplicity, we consider a regression task.
Let a scalar-valued function $f_{\vw}(\vx)$ be represented as a neural network, parameterized by $\vw$, with input $\vx$.
$F(\vx)$ is the true function and the loss function is $L(f,F,\vx)$. For example, for squared error, we have $L(f,F,\vx) = (f_{\vw}(\vx) - F(\vx))^2$.
At each time step $t$, a new sample $\{\vx_t, F(\vx_t)\}$ arrives. Given this new sample, to minimize the loss function $L(f,F,\vx_t)$, we update the weight vector by $\vw' = \vw + \Delta_\vw$ where $\Delta_\vw$ is the weight difference.
With the stochastic gradient descent (SGD) algorithm, we have $\vw' = \vw - \alpha \nabla_\vw L(f,F,\vx_t)$, where $\alpha$ is the learning rate.
So $\Delta_\vw = \vw' - \vw = - \alpha \nabla_\vw L(f,F,\vx_t) = - \alpha \nabla_f L(f,F,\vx_t) \nabla_\vw f_{\vw}(\vx_t)$.
With Taylor expansion,
\begin{equation}\label{eq:taylor}
f_{\vw'}(\vx) - f_{\vw}(\vx)
= -\alpha \nabla_f L(f,F,\vx_t) \, \langle \nabla_\vw f_{\vw}(\vx), \nabla_\vw f_{\vw}(\vx_t) \rangle + O(\Delta_\vw^2),
\end{equation}
where $\langle \cdot, \cdot \rangle$ denotes dot product or Frobenius inner product depending on the context.
In this equation, since $-\alpha \nabla_f L(f,F,\vx_t)$ is unrelated to $\vx$, we only consider the quantity $\langle \nabla_\vw f_{\vw}(\vx), \nabla_\vw f_{\vw}(\vx_t) \rangle$, which is known as the neural tangent kernel (NTK)~\citep{jacot2018neural}.
Without loss of generality, assume that the original prediction $f_{\vw}(\vx_t)$ is wrong, i.e., $f_{\vw}(\vx_t) \neq F(\vx_t)$ and $\nabla_f L(f,F,\vx_t) \neq 0$.
To correct the wrong prediction while avoiding forgetting, we expect this NTK to satisfy two properties that are essential for continual learning:

\begin{property}[error correction]\label{pro:1}
For $\vx = \vx_t$, $\langle \nabla_\vw f_{\vw}(\vx), \nabla_\vw f_{\vw}(\vx_t) \rangle \neq 0$.
\end{property}

\begin{property}[zero forgetting]\label{pro:2}
For $\vx \neq \vx_t$, $\langle \nabla_\vw f_{\vw}(\vx), \nabla_\vw f_{\vw}(\vx_t) \rangle = 0$.
\end{property}

In particular, \cref{pro:1} allows for error correction by optimizing $f_{\vw'}(\vx_t)$ towards the true value $F(\vx_t)$, so that we can learn new knowledge (i.e., update the learned function).
If $\langle \nabla_\vw f(\vx), \nabla_\vw f_{\vw}(\vx_t) \rangle = 0$, we then have $f_{\vw'}(\vx) - f_{\vw}(\vx) \approx 0$, failing to correct the wrong prediction at $\vx = \vx_t$.
Essentially, \cref{pro:1} requires the gradient norm to be non-zero.
On the other hand, \cref{pro:2} is much harder to be satisfied, especially for nonlinear approximations.
To make this property hold, except for $\vx=\vx_t$, the neural network $f$ is required to achieve zero forgetting after one step optimization, i.e., $\forall \vx \neq \vx_t, f_{\vw'}(\vx) = f_{\vw}(\vx)$.
It is the violation of \cref{pro:2} that leads to the forgetting issue.
For tabular cases (e.g., $\vx$ is a one-hot vector and $f_{\vw}(\vx)$ is a linear function), this property may hold by sacrificing the generalization ability of deep neural networks.
In order to benefit from generalization, we propose~\cref{pro:3} by relaxing~\cref{pro:2}:

\begin{property}[mild forgetting]\label{pro:3}
$\langle \nabla_\vw f_{\vw}(\vx), \nabla_\vw f_{\vw}(\vx_t) \rangle \approx 0$ for $\vx$ that is dissimilar to $\vx_t$ in a certain sense.
\end{property}

The above properties together formally define the concept of local elasticity, which is first proposed by~\cite{he2020local}.
A function $f$ is locally elastic if $f_{\vw}(\vx)$ is not significantly changed after the function is updated at $\vx_t$ that is dissimilar to $\vx$ in a certain sense, and vice versa.
For example, we can characterize the dissimilarity with the $2$-norm distance.
Although~\citet{he2020local} show that neural networks with nonlinear activation functions are locally elastic in general, there is a lack of theoretical understanding about the connection of neural network architectures and the degrees of local elasticity.
In our experiments, we find that the degrees of local elasticity of classical neural networks are not enough to address the forgetting issue, as we will show next.

\section{Understanding the Success and Failure of Sparse Representation}

Here, we use the above properties to understand the effectiveness of sparse representations in catastrophic forgetting.
To be specific, we argue that sparse representations are effective in reducing forgetting in linear function approximations but are less useful in nonlinear function approximations.

Deep neural networks can automatically generate effective representations (a.k.a. features) to extract key properties from input data.
In particular, we call a set of representations sparse when only a small part of representations is non-zero for a given input.
It is known that sparse representations reduce forgetting and interference in both continual supervised learning and RL~\citep{shen2021algorithmic,liu2019utility}.
Formally, let $\vx$ be an input and $\phi$ be an encoder that transforms the input $\vx$ into its representation $\phi(\vx)$.
The representation $\phi(\vx)$ is sparse when most of its elements are zeros.

First, consider the case of linear approximations. A linear function is defined as $f_{\vw}(\vx) =\vw^\top \phi(\vx): \R^n \mapsto \R$, where $\vx \in \R^n$ is an input, $\phi: \R^n \mapsto \R^m$ is a fixed encoder, and $\vw \in \R^m$ is a weight vector.
Assume the representation $\phi(\vx)$ is sparse and non-zero (i.e., $\norm{\phi(\vx)}_2 > 0$) for $\vx \in \R^n$.
Next, we show that both \cref{pro:1} and \cref{pro:3} are satisfied in this case.
Easy to see $\nabla_\vw f_{\vw}(\vx) = \phi(\vx)$.
Together with~\cref{eq:taylor}, we have
\begin{equation}\label{eq:linear_ntk}
f_{\vw'}(\vx) - f_{\vw}(\vx) = -\alpha \nabla_f L(f,F,\vx_t) \, \phi(\vx)^\top \phi(\vx_t).
\end{equation}
By assumption, $f_{\vw}(\vx_t) \neq F(\vx_t)$ and $\nabla_f L(f,F,\vx_t) \neq 0$.
Then \cref{pro:1} holds since $f_{\vw'}(\vx_t) - f_{\vw}(\vx_t) = -\alpha \nabla_f L(f,F,\vx_t) \, \norm{\phi(\vx_t)}_2^2 \neq 0$.
Moreover, when $\vx \neq \vx_t$, it is very likely that $\langle \phi(\vx), \phi(\vx_t) \rangle \approx 0$ due to the sparsity of $\phi(\vx)$ and $\phi(\vx_t)$.
Thus, $f_{\vw'}(\vx) - f_{\vw}(\vx) = -\alpha \nabla_f L(f,F,\vx_t) \, \langle \phi(\vx), \phi(\vx_t) \rangle \approx 0$ and~\cref{pro:3} holds.
We conclude that sparse representations successfully mitigate catastrophic forgetting in linear approximations.

However, for nonlinear approximations, sparse representations can no longer guarantee~\cref{pro:3}.
Consider a multilayer perceptron (MLP) with one hidden layer $f_{\vw}(\vx) = \vu^\top \sigma(\rmV\vx+\vb): \R^n \mapsto \R$, where $\sigma$ is a non-linear activation function, $\vx \in \R^n$, $\vu \in \R^m$, $\rmV \in \R^{m \times n}$, $\vb \in \R^m$, and $\vw=\{\vu, \rmV, \vb\}$.
We compute the NTK in this case, resulting in the following lemma.

\begin{restatable}[NTK in non-linear approximations]{lemma}{firstntklem}\label{lem:ntk}
Given a non-linear function $f_{\vw}(\vx) = \vu^\top\sigma(\rmV\vx+\vb): \R^n \mapsto \R$, where $\sigma$ is a non-linear activation function, $\vx \in \R^n$, $\vu \in \R^m$, $\rmV \in \R^{m \times n}$, $\vb \in \R^m$, and $\vw=\{\vu, \rmV, \vb\}$.
The NTK of this nonlinear function is
\begin{align*}
\langle \nabla_\vw f_{\vw}(\vx), \nabla_\vw f_{\vw}(\vx_t) \rangle
= \sigma(\rmV \vx + \vb)^\top \sigma(\rmV \vx_t + \vb) + \vu^\top \vu (\vx^\top \vx_t + 1) \sigma'(\rmV \vx + \vb)^\top \sigma'(\rmV \vx_t + \vb),
\end{align*}
where $\langle \cdot, \cdot \rangle$ denotes dot product or Frobenius inner product depending on the context.
\end{restatable}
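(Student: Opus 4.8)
The plan is to exploit the block structure of the parameter collection $\vw = \{\vu, \rmV, \vb\}$. Because the full gradient $\nabla_\vw f_{\vw}$ is just the concatenation of the three partial gradients $\nabla_\vu f_{\vw}$, $\nabla_\rmV f_{\vw}$, and $\nabla_\vb f_{\vw}$, the inner product $\langle \nabla_\vw f_{\vw}(\vx), \nabla_\vw f_{\vw}(\vx_t)\rangle$ splits additively into three contributions, one per block. So the whole computation reduces to differentiating $f_{\vw}(\vx) = \vu^\top \sigma(\rmV\vx + \vb)$ with respect to each block and then pairing the results at $\vx$ and $\vx_t$.

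First I would write $\vz = \rmV\vx + \vb$ for the pre-activation and compute each partial gradient directly. The $\vu$-block gives $\nabla_\vu f_{\vw}(\vx) = \sigma(\vz)$. Since $b_i$ enters $f$ only through the $i$-th coordinate, the $\vb$-block gives $\nabla_\vb f_{\vw}(\vx) = \vu \odot \sigma'(\vz)$, a Hadamard product. Since $V_{ij}$ enters through the term $u_i\,\sigma'(z_i)\,x_j$, the $\rmV$-block gives the rank-one outer product $\nabla_\rmV f_{\vw}(\vx) = (\vu \odot \sigma'(\vz))\,\vx^\top$. Pairing the $\vu$-blocks at $\vx$ and $\vx_t$ immediately yields $\sigma(\rmV\vx+\vb)^\top\sigma(\rmV\vx_t+\vb)$, the first stated term.

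Next I would assemble the remaining two inner products. The $\vb$-block contributes the dot product of the two Hadamard vectors $(\vu\odot\sigma'(\vz))$ and $(\vu\odot\sigma'(\vz_t))$. The $\rmV$-block is a Frobenius inner product of two rank-one matrices, which I would collapse via the trace identity $\langle \va\,\vx^\top,\, \va_t\,\vx_t^\top\rangle = (\va^\top\va_t)(\vx^\top\vx_t)$ (using cyclicity of $\Tr$ and that $\va^\top\va_t$ is a scalar), with $\va = \vu\odot\sigma'(\vz)$ and $\va_t = \vu\odot\sigma'(\vz_t)$. This shows the $\rmV$-block equals $(\vx^\top\vx_t)$ times the very same activation-derivative pairing that appears in the $\vb$-block. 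Factoring out that common pairing, the $\vb$-block supplies the constant $1$ and the $\rmV$-block supplies $\vx^\top\vx_t$, so their sum carries the factor $(\vx^\top\vx_t + 1)$, matching the second stated term; adding back the $\vu$-block completes the identity.

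The main obstacle is the Frobenius inner product over the weight matrix $\rmV$: one must recognize that each $\nabla_\rmV f_{\vw}$ is rank one and then correctly contract the trace of the product of two such outer products into the clean scalar factor $\vx^\top\vx_t$. A related point demanding care is the bookkeeping of the Hadamard structure in the $\vb$- and $\rmV$-blocks, so that the $\vu$-weights and the $\sigma'$ factors recombine into the form $\vu^\top\vu\,(\vx^\top\vx_t+1)\,\sigma'(\rmV\vx+\vb)^\top\sigma'(\rmV\vx_t+\vb)$ stated in the lemma; this recombination is exactly where a careless treatment could conflate an elementwise weighting with a plain dot product, so I would track the coordinatewise structure explicitly before collapsing it into the final expression.
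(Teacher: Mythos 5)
Your route is exactly the paper's: split the inner product over the three parameter blocks, compute $\nabla_\vu f = \sigma(\rmV\vx+\vb)$, $\nabla_\vb f = \vu\circ\sigma'(\rmV\vx+\vb)$, $\nabla_\rmV f = (\vu\circ\sigma'(\rmV\vx+\vb))\,\vx^\top$ (with $\circ$ the Hadamard product), and collapse the Frobenius pairing of the two rank-one matrices into the scalar factor $\vx^\top\vx_t$. Up to that point both arguments are correct and yield
\begin{equation*}
\sigma(\rmV\vx+\vb)^\top\sigma(\rmV\vx_t+\vb) \;+\; (\vx^\top\vx_t+1)\,\bigl(\vu\circ\sigma'(\rmV\vx+\vb)\bigr)^\top\bigl(\vu\circ\sigma'(\rmV\vx_t+\vb)\bigr).
\end{equation*}
The step you single out as ``demanding care''---recombining the Hadamard-weighted pairing into $\vu^\top\vu\,\sigma'(\rmV\vx+\vb)^\top\sigma'(\rmV\vx_t+\vb)$---cannot actually be carried out: writing $\va=\sigma'(\rmV\vx+\vb)$ and $\va_t=\sigma'(\rmV\vx_t+\vb)$, the quantity you have is $\sum_i u_i^2 a_i a_{t,i}$ while the claimed form is $\bigl(\sum_i u_i^2\bigr)\bigl(\sum_i a_i a_{t,i}\bigr)$, and these differ for $m>1$. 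The paper's own proof performs this same unjustified factorization in its final line, so the lemma as displayed is not an exact identity; the correct second term keeps $u_i^2$ inside the sum. Your instinct to distrust this recombination was right, and no amount of coordinatewise bookkeeping closes it. The discrepancy is harmless for the paper's qualitative conclusions (sparsity of $\sigma'$ annihilates the weighted sum just as it would the unweighted one), but a proof of a true statement should stop at the displayed expression above.
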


The proof of this lemma can be found in~\cref{appendix:proof}.
Note that the encoder $\phi$ is no longer fixed, and we have $\phi_{\vtheta}(\vx) = \sigma(\rmV\vx+\vb)$, where $\vtheta = \{\rmV, \vb\}$ are learnable parameters.
By~\cref{lem:ntk}, 
\begin{align}\label{eq:nonlinear_ntk}
\langle \nabla_\vw f_{\vw}(\vx), \nabla_\vw f_{\vw}(\vx_t) \rangle
= \phi_{\vtheta}(\vx)^\top \phi_{\vtheta}(\vx_t) + \vu^\top \vu (\vx^\top \vx_t + 1) \phi_{\vtheta}'(\vx)^\top \phi_{\vtheta}'(\vx_t).
\end{align}
Compared with the NTK in linear approximations (\cref{eq:linear_ntk}), \cref{eq:nonlinear_ntk} has an additional term $\vu^\top \vu (\vx^\top \vx_t + 1) \phi_{\vtheta}'(\vx)^\top \phi_{\vtheta}'(\vx_t)$, due to a learnable encoder $\phi_{\vtheta}$.
With sparse representations, we have $\phi_{\vtheta}(\vx)^\top \phi_{\vtheta}(\vx_t) \approx 0$. 
However, it is not necessarily true that $\vu^\top \vu (\vx^\top \vx_t + 1) \phi_{\vtheta}'(\vx)^\top \phi_{\vtheta}'(\vx_t) \approx 0$ even when $\vx$ and $\vx_t$ are quite dissimilar, which violates~\cref{pro:3}.
For example, when $\tanh$ is used as the activation function, for $x_1=0, x_2>0$, we have $\tanh(x_1)\tanh(x_2)=0$ while $\tanh'(x_1)\tanh'(x_2) > 0$.
To conclude, our analysis indicates that sparse representations alone are not enough to reduce forgetting in nonlinear approximations.

\section{Obtaining Sparsity with Elephant Activation Functions}\label{sect:elephant}

Although~\cref{lem:ntk} shows that the forgetting issue can not be fully addressed with sparse representations solely in deep learning methods, it also points out a possible solution: sparse gradients. 
With sparse gradients, we could have $\phi_{\vtheta}'(\vx)^\top \phi_{\vtheta}'(\vx_t) \approx 0$.
Together with sparse representations, we may still satisfy~\cref{pro:3} in nonlinear approximations and thus reduce more forgetting.
Specifically, we aim to design new activation functions to obtain both sparse representations and sparse gradients.

To begin with, we first define the sparsity of a function, which also applies to activation functions.
\begin{definition}[sparse function]\label{def:sparse}
For a function $\sigma: \R \mapsto \R$, we define the sparsity of function $\sigma$ on input domain $[-C, C]$ as
\begin{equation*}
S_{\epsilon,C}(\sigma)
= \frac{\abs{\{x \mid \abs{\sigma(x)} \le \epsilon, x \in [-C, C]\}}}{\abs{\{x \mid x \in [-C, C]\}}}
= \frac{\abs{\{x \mid \abs{\sigma(x)} \le \epsilon, x \in [-C, C]\}}}{2C},
\end{equation*}
where $\epsilon$ is a small positive number and $C>0$.
As a special case, when $\epsilon \rightarrow 0^+$ and $C \rightarrow \infty$, define
\begin{equation*}
S(\sigma)
= \lim_{\epsilon \rightarrow 0^+} \lim_{C \rightarrow \infty} S_{\epsilon,C}(\sigma).
\end{equation*}
We call $\sigma$ a $S(\sigma)$-sparse function.
In particular, $\sigma$ is called a sparse function when $S(\sigma)=1$.
\end{definition}

\begin{remark}
Easy to verify that $0 \le S(\sigma) \le 1$. 
The sparsity of a function shows the fraction of nearly zero outputs given a symmetric input domain. 
For example, neither $\operatorname{ReLU}(x)$ nor $\operatorname{ReLU}'(x)$ is a sparse functions.
$\tanh(x)$ is not a sparse function while $\tanh'(x)$ is a sparse function.
In \cref{appendix:activation}, we present more examples as well as visualizations of the activation functions and their gradients.
It is worth noting that sparse activation functions and sparse representations are not the same---sparse activation functions are one-to-one mappings while sparse representations are vectors in which most elements are zeros.
By incorporating sparse activation functions in neural networks, we increase the likelihood of generating sparse representations given diverse inputs.
\end{remark}

\begin{figure}[tbp]
\centering
\vspace{-1em}
\subcaptionbox{A drawing of an elephant.}{
\includegraphics[width=\figwidththree]{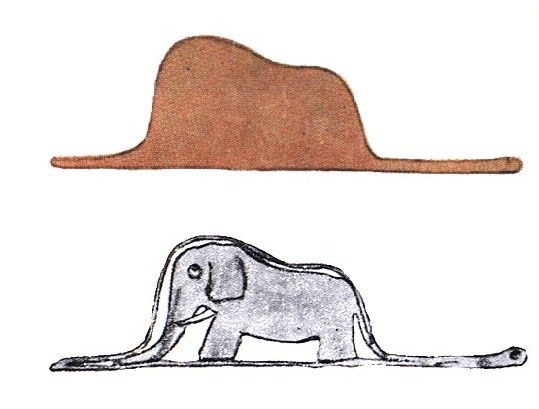}}
\subcaptionbox{$\elephant(x)$}{
\includegraphics[width=\figwidththree]{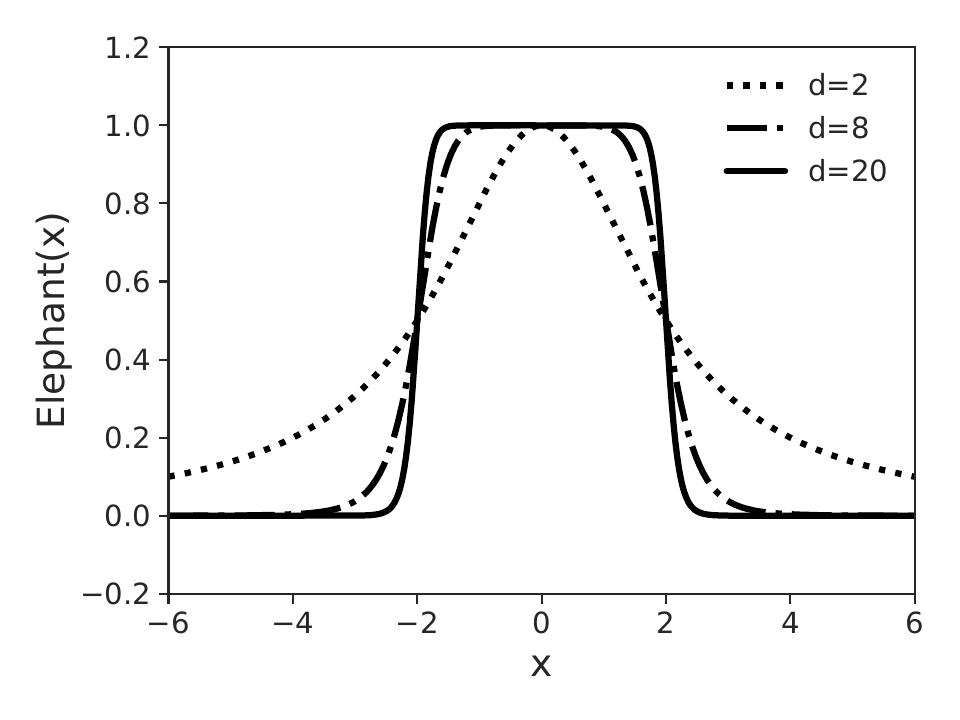}}
\subcaptionbox{$\elephant'(x)$}{
\includegraphics[width=\figwidththree]{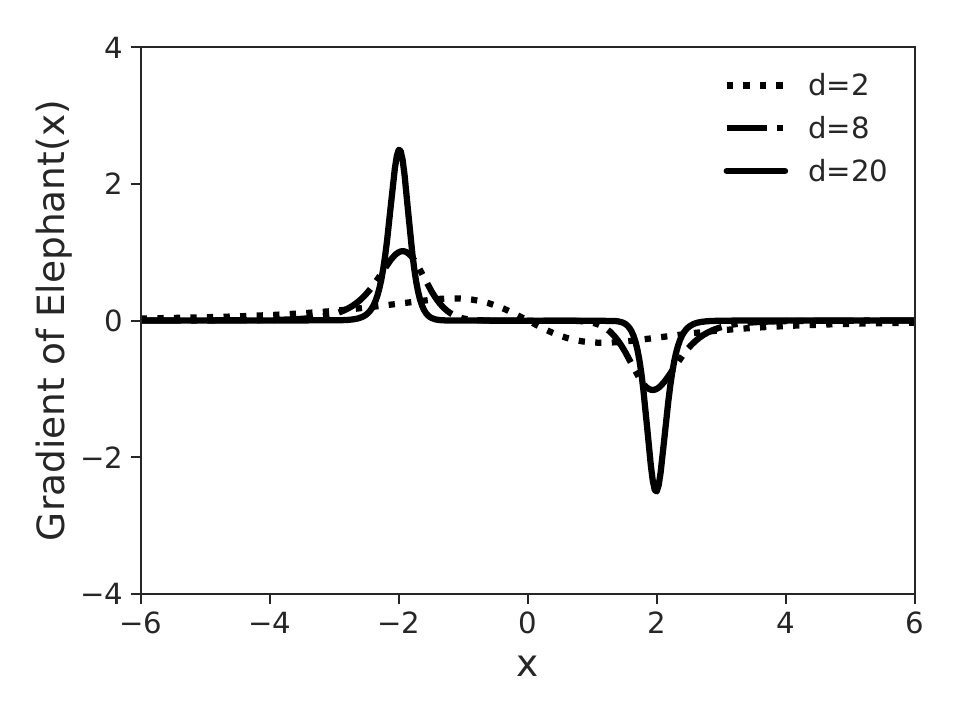}}
\caption{(a) ``My drawing was not a picture of a hat. It was a picture of a boa constrictor digesting an elephant.'' \textit{The Little Prince}, by Antoine de Saint Exup\'ery. (b) Elephant functions with $a=2$, $h=1$, and various $d$. (c) Gradients of elephant functions with $a=2$, $h=1$, and various $d$.}
\label{fig:elephant}
\vspace{-1em}
\end{figure}

Next, we propose a novel class of bell-shaped activation functions, \textit{elephant activation functions}.~\footnote{We name this bell-shaped activation function the \textit{elephant} function, as it suggests that this activation empowers neural networks with continual learning ability, echoing the saying ``an elephant never forgets.'' The bell shape also resembles the silhouette of an elephant (see \cref{fig:elephant}), paying homage to \textit{The Little Prince} by Antoine de Saint Exup\'ery.}
Formally, an elephant function is defined as
\begin{equation}\label{eq:elephant}
\elephant(x) = \frac{h}{1+\abs{\frac{x}{a}}^d},
\end{equation}
where $a$ controls the width of the function, $h$ is the height, and $d$ controls the slope.
We call a neural network that uses elephant activation functions an \textit{elephant neural network (ENN)}.
For example, for MLPs, we have \textit{elephant MLPs (EMLPs)} correspondingly.

As shown in~\cref{fig:elephant}, elephant activation functions have both sparse function values and sparse gradient values, which can be formally proved as well (see~\cref{lem:elephant} in~\cref{appendix:proof}).
Specifically, $d$ controls the sparsity of gradients for elephant functions.
The larger the value of $d$, the sharper the slope and the sparser the gradient.
On the other hand, $a$ controls the sparsity of the function itself.
Since both the gradient of an elephant function and the elephant function itself are sparse functions, we increase the likelihood of both sparse representations and gradients (see~\cref{eq:nonlinear_ntk}).
Formally, we show that \cref{pro:3} holds under certain conditions for elephant functions.

\begin{restatable}[]{thm}{firstthmmain}\label{thm:main}
Define $f_{\vw}(\vx)$ as in~\cref{lem:ntk}. Let $\sigma$ be the elephant activation function with $h=1$ and $d \rightarrow \infty$. When $\abs{\rmV (\vx - \vx_t)} \succ 2 a \mathbf{1}_{m}$, we have $\langle \nabla_\vw f_{\vw}(\vx), \nabla_\vw f_{\vw}(\vx_t) \rangle = 0$, where $\succ$ denotes an element-wise inequality symbol and $\mathbf{1}_{m} = [1, \cdots, 1]^\top \in \R^m$.
\end{restatable}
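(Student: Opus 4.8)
The plan is to start from the closed-form NTK in~\cref{lem:ntk} and show that, in the limit $d \to \infty$, each of its two summands vanishes unit by unit. Writing $z_i = (\rmV\vx + \vb)_i$ and $z_{t,i} = (\rmV\vx_t + \vb)_i$, the kernel reads
\begin{equation*}
\langle \nabla_\vw f_{\vw}(\vx), \nabla_\vw f_{\vw}(\vx_t) \rangle
= \sum_{i=1}^m \sigma(z_i)\sigma(z_{t,i})
+ \vu^\top\vu\,(\vx^\top\vx_t + 1)\sum_{i=1}^m \sigma'(z_i)\sigma'(z_{t,i}).
\end{equation*}
Since $z_i - z_{t,i} = (\rmV(\vx - \vx_t))_i$, the hypothesis $\abs{\rmV(\vx - \vx_t)} \succ 2a\,\mathbf{1}_{m}$ is exactly the statement that $\abs{z_i - z_{t,i}} > 2a$ for every $i$. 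The scalar prefactor $\vu^\top\vu\,(\vx^\top\vx_t + 1)$ does not depend on $d$, so it suffices to prove that each summand $\sigma(z_i)\sigma(z_{t,i})$ and $\sigma'(z_i)\sigma'(z_{t,i})$ tends to $0$.

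The geometric heart of the argument is that $[-a,a]$ has length $2a$, so two reals more than $2a$ apart cannot both lie in it. Hence for each fixed $i$ at least one of $z_i, z_{t,i}$ satisfies $\abs{z} > a$ strictly; call that one $p$ and the other $q$. First I would record the elementary estimates for $\sigma(x) = 1/(1 + \abs{x/a}^d)$ and its derivative: for $\abs{x} > a$, with $r = \abs{x}/a > 1$, one has $\sigma(x) \le r^{-d}$ and $\abs{\sigma'(x)} \le (d/a)\,r^{-(d+1)}$, both $\to 0$; meanwhile $\sigma$ is bounded by $1$ everywhere and, for any fixed $q$, $\abs{\sigma'(q)}$ grows at most linearly in $d$ (the global sup of $\abs{\sigma'}$ is of order $d/a$).

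With these in hand the two terms follow. For the representation term, $\abs{\sigma(p)\sigma(q)} \le \sigma(p) \le r^{-d} \to 0$ because $\sigma(q) \le 1$ is bounded, so there is no rate issue. For the gradient term I would combine $\abs{\sigma'(p)} = O(d\,r^{-d})$ with $\abs{\sigma'(q)} = O(d)$ to obtain $\abs{\sigma'(p)\sigma'(q)} = O(d^2 r^{-d}) \to 0$. Summing the $m$ vanishing summands and reinstating the $d$-independent prefactor gives $\langle \nabla_\vw f_{\vw}(\vx), \nabla_\vw f_{\vw}(\vx_t) \rangle \to 0$, as claimed.

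The step I expect to be the genuine obstacle, rather than bookkeeping, is the gradient term in the boundary case $q = \pm a$: there $\sigma'(q)$ blows up like $d/(4a)$, so $\sigma'(p)\sigma'(q)$ is a $0\cdot\infty$ indeterminate form and one cannot simply multiply pointwise limits. The resolution is exactly the rate comparison above: the exponential factor $r^{-d}$ coming from $\abs{p} > a$ dominates the polynomial growth of both derivatives, forcing the product to $0$. Making this quantitative (and handling $x=0$ and negative arguments via the evenness of $\sigma$ and oddness of $\sigma'$) is the only place the argument needs real care.
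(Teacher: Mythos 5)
Your proof is correct, and it takes a genuinely different --- and in fact more careful --- route than the paper's. The paper first passes to the limit in the activation itself, replacing $\sigma$ by the rectangular function ($1$ on $\abs{x}<a$, $\tfrac12$ at $\abs{x}=a$, $0$ on $\abs{x}>a$), and then asserts that $\abs{x-y}>2a$ forces both $\sigma(x)\sigma(y)=0$ and $\sigma'(x)\sigma'(y)=0$ for that limit function; this is short and geometrically transparent, but it quietly interchanges the limit $d\to\infty$ with differentiation, and the derivative of the rectangular function is undefined at $\pm a$, so the product $\sigma'(x)\sigma'(y)$ is exactly the $0\cdot\infty$ form you identify. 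You instead keep $d$ finite, use the closed form from the lemma, observe that $\abs{z_i-z_{t,i}}>2a$ forces at least one of the pair strictly outside $[-a,a]$, and then beat the $O(d)$ growth of the worst-case derivative with the exponential decay $r^{-d}$ ($r>1$) of the other factor, so each summand vanishes in the limit for honest quantitative reasons. What the paper's argument buys is brevity and a clean picture of \emph{why} the theorem is true (disjoint supports of the limiting bumps); what yours buys is a rigorous justification of the limit interchange and a correct resolution of the boundary case $\abs{q}=a$, which is precisely the step the paper elides. The only cosmetic point worth tightening is to state explicitly that the conclusion ``$\langle \nabla_\vw f_{\vw}(\vx), \nabla_\vw f_{\vw}(\vx_t) \rangle = 0$'' is to be read as the limit of the finite-$d$ kernels, which is the interpretation your argument establishes.
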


\begin{remark}
\cref{thm:main} mainly proves that when $d \rightarrow \infty$, \cref{pro:3} holds with elephant functions.
However, even when $d$ is a small integer (e.g., 8), we can still obtain this property, as we will show in the experiment section. The proof is in~\cref{appendix:proof}.
\end{remark}

\section{Experiments}

In this section, we experimentally validate a series of hypotheses regarding the effectiveness of elephant activation functions under regression and RL settings.

\subsection{Streaming Learning for Regression}

RL is relatively complex due to agent-environment interactions.
Instead, we first perform experiments in a simple regression task in the streaming learning setting to answer the following question:
\begin{adjustwidth}{20pt}{}
\textit{Can we achieve mild forgetting and local elasticity with elephant activation functions?}
\end{adjustwidth}

In this setting, a learning agent is presented with one sample only at each time step and then performs learning updates.
Moreover, the learning happens in a single pass of the whole dataset; that is, each sample only occurs once.
Furthermore, the data stream is assumed to be non-independent and identically distributed (non-iid).
Finally, the evaluation happens after each new sample arrives, which requires the agent to learn quickly while avoiding forgetting.
Streaming learning methods enable real-time adaptation; thus, they are more suitable in real-world scenarios where data is received in a continuous flow.

We consider approximating a sine function in the streaming learning setting.
In this task, there is a stream of data $(x_1,y_1), (x_2,y_2), \cdots, (x_n, y_n)$, where $0 \le x_1 < x_2 < \cdots < x_n \le 2$, $y_i = \sin(\pi x_i)$, and $n=200$.
The learning agent $f$ is an MLP with one hidden layer of size $1,000$.
At each time step $t$, the agent only receives one sample $(x_t,y_t)$.
We minimize the loss $l_t = (f(x_t) - y_t)^2$, where $f(x_t)$ is the agent's prediction.
We measure the agent performance by the mean square error (MSE) on a test dataset with $1,000$ samples, where the inputs are evenly spaced over the interval $[0,2]$.
Additional experimental details are in~\cref{appendix:streaming}.

We compare our method $\elephant$ with two kinds of baselines.
One is classical activation functions, including $\relu$, $\sigmoid$, $\elu$, and $\tanh$.
The other is the sparse representation neural network ($\srnn$)~\citep{liu2019utility}, which generates sparse representations by regularization.
We summarize test MSEs in~\cref{tb:sin} in~\cref{appendix:streaming}.
Lower is better.
Clearly, $\elephant$ has the best performance, achieving a test MSE that is two orders of magnitude lower compared to baselines, which have similar orders to each other.
Moreover, $\srnn$ performs slightly better than classical activation functions, showing the benefits of sparse representations.
Yet, compared with $\elephant$, the test MSE of $\srnn$ is still large, indicating that it fails to approximate well.
To analyze in depth, we plot the true function $\sin(\pi x)$, the learned function $f(x)$, and the NTK function $\ntk(x) = \langle \nabla_\vw f_{\vw}(x), \nabla_\vw f_{\vw}(x_t) \rangle$ at different training stages for $\elephant$ in~\cref{fig:sin}.
We normalize $\ntk(\vx)$ such that the function value is in $[-1, 1]$.
The plots in the first row show that for $\elephant$ with $d=8$, $\ntk(x)$ quickly decreases to $0$ as $x$ moves away from $x_t$, demonstrating the local elasticity of applying $\elephant$ with a small $d$.
However, $\srnn$s (and MLPs with classical activation functions) are not locally elastic; the learned function basically evolves as a linear function, a phenomenon that often appears in over-parameterized neural networks~\citep{jacot2018neural,chizat2019lazy}.

By injecting local elasticity to a neural network, we can break the inherent global generalization ability~\citep{ghiassian2020improving} of the neural networks, constraining the output changes of the neural network to small local areas.
Utilizing this phenomenon, we can update a wrong prediction by ``editing'' outputs of a neural network nearly point-wisely.
To verify, we first train a neural network to approximate $\sin(\pi x)$ well enough, calling it the old learned function.
Now assume that the original $y$ value of an input $x$ is changed to $y'$, while the true values of other inputs remain the same.
Our goal is to update the prediction for input $x$ to $y'$, while keeping the predictions of other inputs without expensive re-training on the whole dataset.
Note that this requirement is common in RL (see next section).
Specifically, we choose $x=1.5$, $y=-1.0$, and $y'=-1.5$; and perform experiments with $\elephant$ and $\relu$, showing in~\cref{fig:elastic}.
Both methods successfully update the prediction at $x=1.5$ to $y'$.
However, besides the prediction at $x=1.5$, the learned function with $\relu$ is changed globally while the changes with $\elephant$ are mainly confined in a small local area around $x=1.5$.
That is, we can successfully correct the wrong prediction nearly point-wise by ``editing'' the output value for elephant neural networks (ENNs), but not for classical neural networks.

To conclude, we showed that (1) sparse representations do not suffice to address the forgetting issue, (2) ENNs are locally elastic even when $d$ is small, and (3) ENNs can continually learn to solve regression tasks by reducing forgetting.

\begin{figure}[tbp]
\centering
\vspace{-1em}
\subcaptionbox{Approximating a sine function\label{fig:sin}}{
\includegraphics[width=0.63\textwidth]{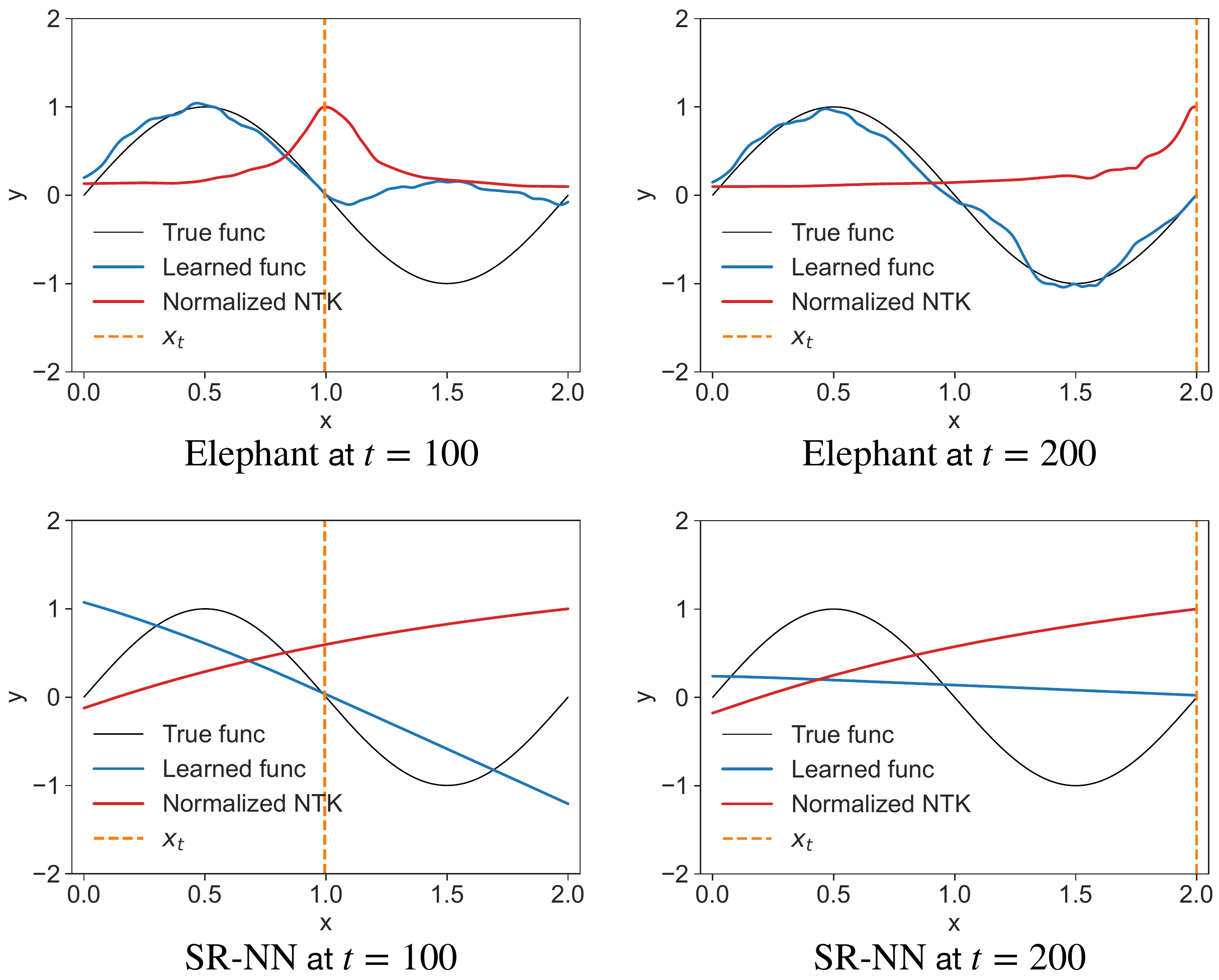}
}
\hspace{1em}
\subcaptionbox{Updating a wrong prediction\label{fig:elastic}}{
\includegraphics[width=0.3\textwidth]{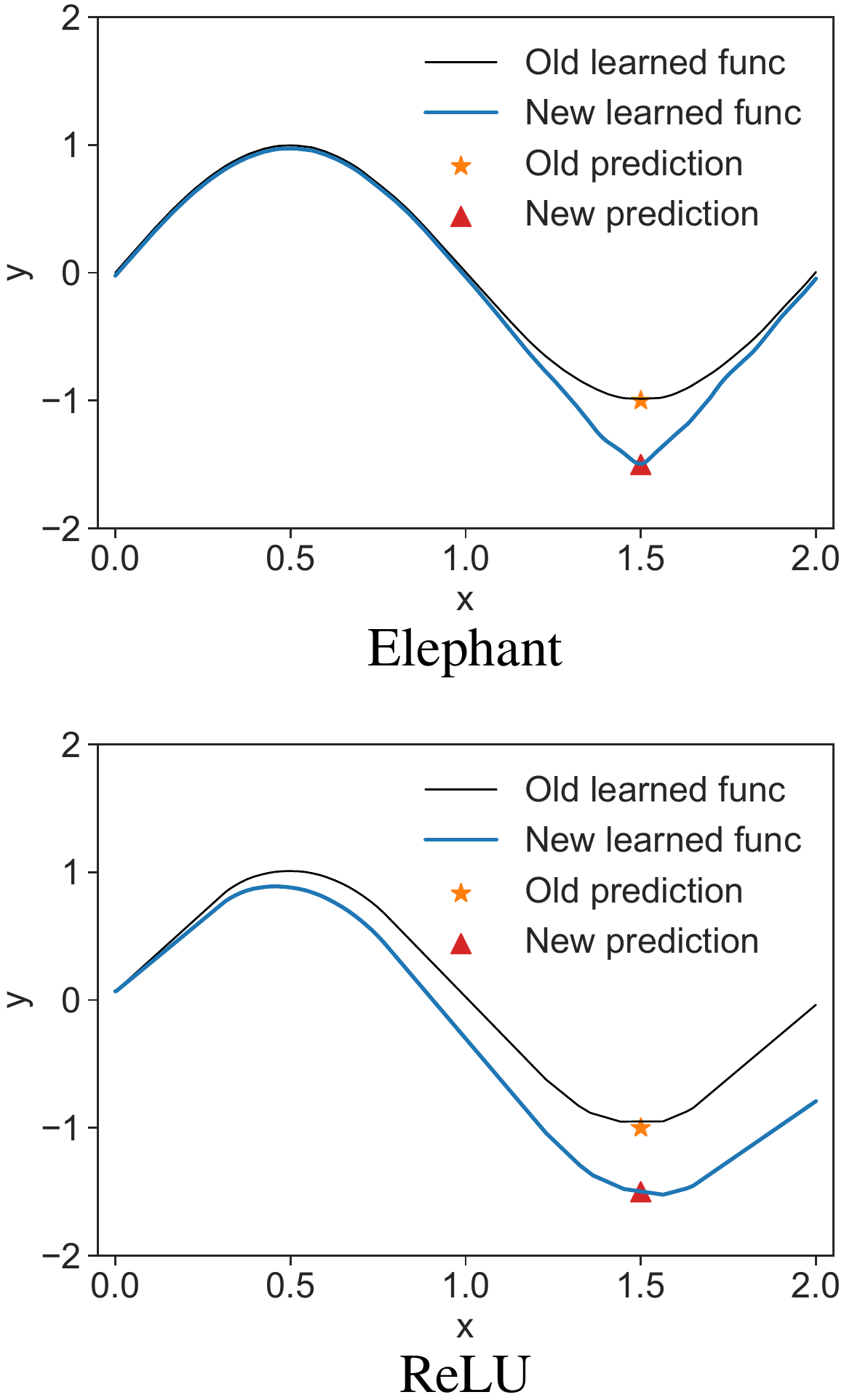}
}
\caption{(a) Plots of the true function $\sin(\pi x)$, the learned function $f(x)$, and the NTK function $\ntk(x)$ at different training stages for $\elephant$ and $\srnn$. The plots of classical activation functions are not presented since they are similar to the plots of $\srnn$. The $\ntk(x)$ of $\elephant$ quickly reduces to 0 as $x$ moves away from $x_t$, demonstrating local elasticity. (b) $\elephant$ allows updating the old prediction nearly point-wisely by ``editing'' the output value of the neural network, a capability lacking in classical activation functions.}
\label{fig:sin_elastic}
\vspace{-1em}
\end{figure}

\subsection{Reinforcement Learning}

Recently, \citet{lan2023memory} showed that the forgetting issue exists even in single RL tasks and a large replay buffer largely masks it.
Without a replay buffer, a single RL task can be viewed as a series of tasks without clear boundaries~\citep{dabney2021value}.
For example, in temporal difference (TD) learning, the true value function is approximated by $V$ with bootstrapping: $V(s_t) \leftarrow r_{t+1} + \gamma V(s_{t+1})$,
where $s_t$ and $s_{t+1}$ are two successive states and $r_{t+1} + \gamma V(s_{t+1})$ is named the TD target.
During training, the TD target constantly changes due to bootstrapping, non-stationary state distribution, and changing policy.
To speed up learning while reducing forgetting, it is crucial to update $V(S_t)$ to the new TD target without changing other state values too much, where local elasticity can help.

In the following, we aim to demonstrate that incorporating elephant activation functions helps reduce forgetting in RL, thus improving sample efficiency and learning performance.
We mainly consider two representative value-based RL algorithms --- deep Q-network (DQN)~\citep{mnih2013playing,mnih2015human} and Rainbow~\citep{hessel2018rainbow}.
We consider other activation functions as baselines, including $\relu$, $\tanh$, $\maxout$~\citep{goodfellow2013maxout}, $\lwta$~\citep{srivastava2013compete}, and $\fta$~\citep{pan2020fuzzy}.
While $\relu$ and $\tanh$ are widely used classical activation functions, $\maxout$, $\lwta$, and $\fta$ are designed to generate sparse representations (see~\cref{sec:related} for detailed introductions).
The full experimental details can be found in~\cref{appendix:rl}.

\subsubsection{Elephant Improves Memory Efficiency}

First, we focus on addressing the following question:
\begin{adjustwidth}{20pt}{}
\textit{Can elephant activation functions help achieve strong performance with a small memory budget?}
\end{adjustwidth}

Specifically, we test DQN in 4 classical RL tasks (i.e., MountainCar-v0, Acrobot-v1, Catcher, and Pixelcopter) from Gymnasium~\citep{towers2023gymnasium} and PyGame Learning Environment~\citep{tasfi2016PLE} under various buffer sizes.
Note that we select these small-scale RL tasks so that we could perform thorough hyperparameter tuning and ensure a fair comparison within a reasonable time frame.
For instance, we consider buffer sizes in $\{32, 1e2, 3e2, 1e3, 3e3, 1e4\}$, where the default buffer size is $1e4$.
And for each hyper-parameter setup, a best learning rate is selected from $\{1e-2, 3e-3, 1e-3, 3e-4, 1e-4, 3e-5, 1e-5, 3e-6\}$, while RMSProp~\citep{tieleman2012rmsprop} is applied with a decay rate of $0.999$ for optimization.
To make a fair comparison, we use the same hyper-parameters of $\elephant$ (i.e., $d=4$, $h=1$, and $a=0.2$) in all experiments, although tuning them for each task and buffer size could further boost the performance.

\begin{figure}[tbp]
\centering
\vspace{-1em}
\includegraphics[width=\figwidthone]{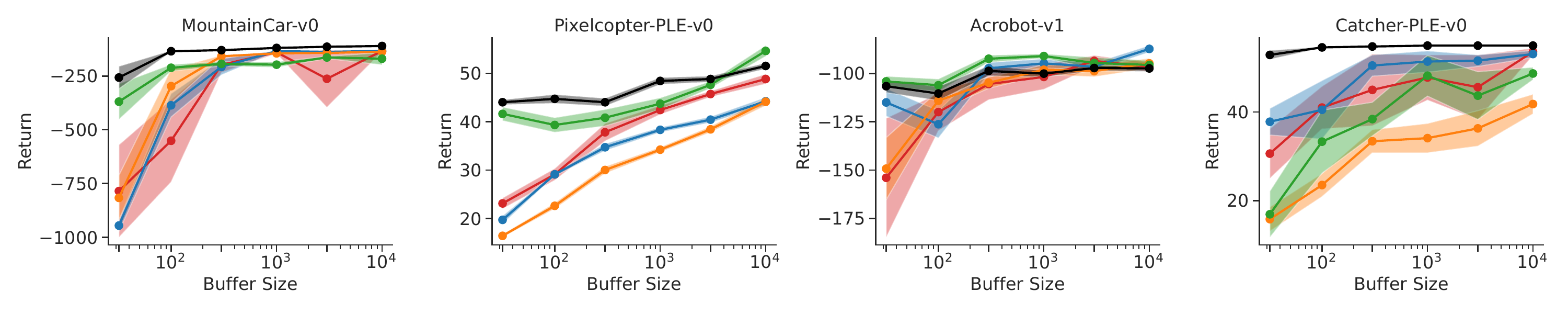}
\includegraphics[width=0.55\textwidth]{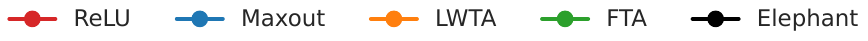}
\caption{The performance of DQN in $4$ Gymnasium and PyGame tasks with different activation functions under various buffer sizes, measured as the average return of the last 10\% episodes. 
}
\vspace{-1em}
\label{fig:gym_dqn_buffer}
\end{figure}

We plot agents' performance of different activation functions and buffer sizes in~\cref{fig:gym_dqn_buffer}.
All results are averaged over $10$ runs, and the shaded areas represent standard errors.
Clearly, $\elephant$ demonstrates its robustness to varying buffer sizes.
When the buffer size is reduced, the performance of $\elephant$ remains high in all four tasks, while the performance of all other activations drops significantly.
In summary, these results confirm the effectiveness of $\elephant$ in reducing forgetting and improving memory efficiency for DQN.

\subsubsection{Elephant Improves Sample Efficiency}

Next, we investigate the following question:
\begin{adjustwidth}{20pt}{}
\textit{Given the same amount of training samples and sufficient memory resources, can elephant activation functions outperform classical activation functions?}
\end{adjustwidth}

\begin{figure}[htbp]
\centering
\subcaptionbox{DQN \label{fig:atari_dqn_summary}}{\includegraphics[width=\figwidthtwo]{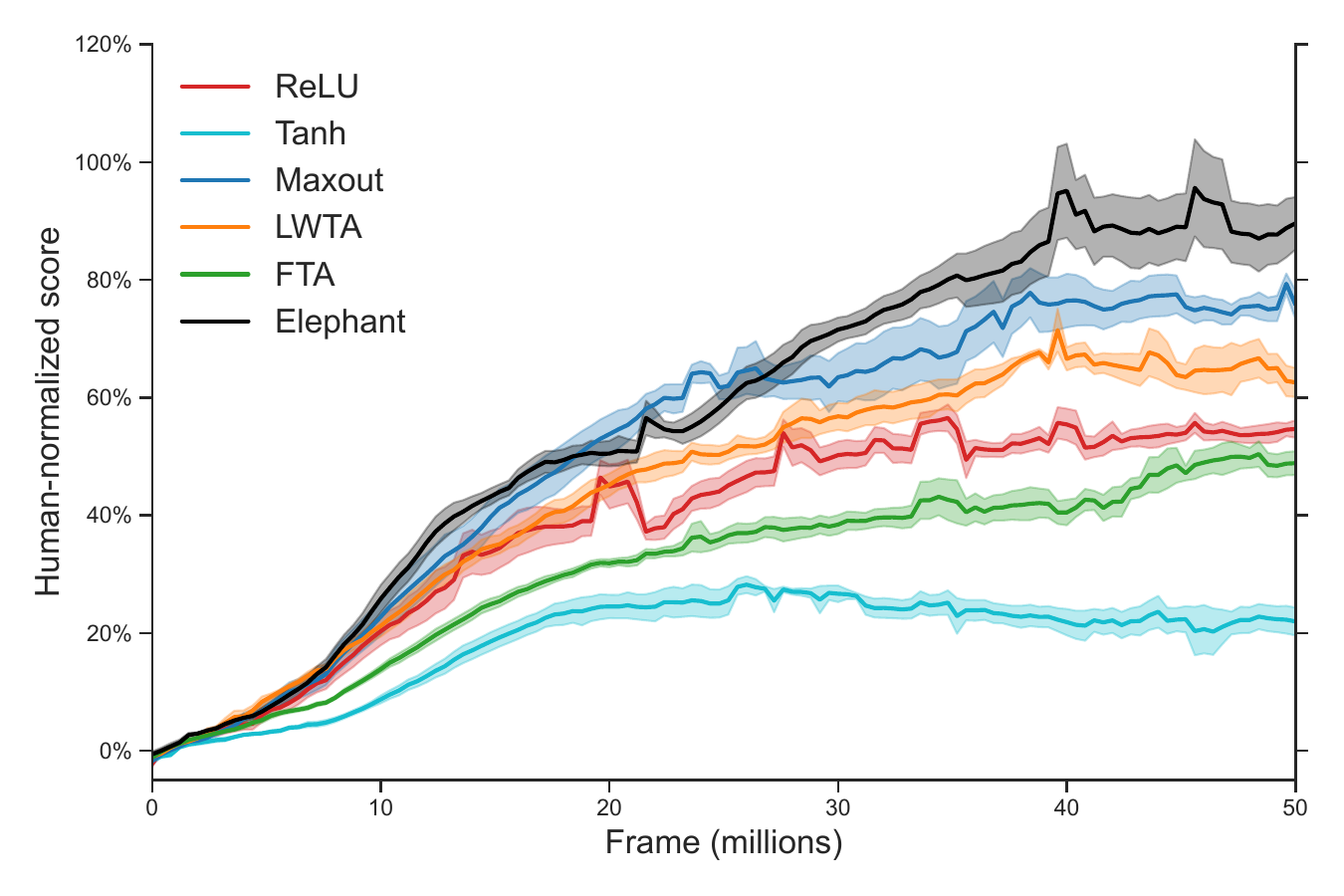}}
\subcaptionbox{Rainbow \label{fig:atari_rainbow_summary}}{\includegraphics[width=\figwidthtwo]{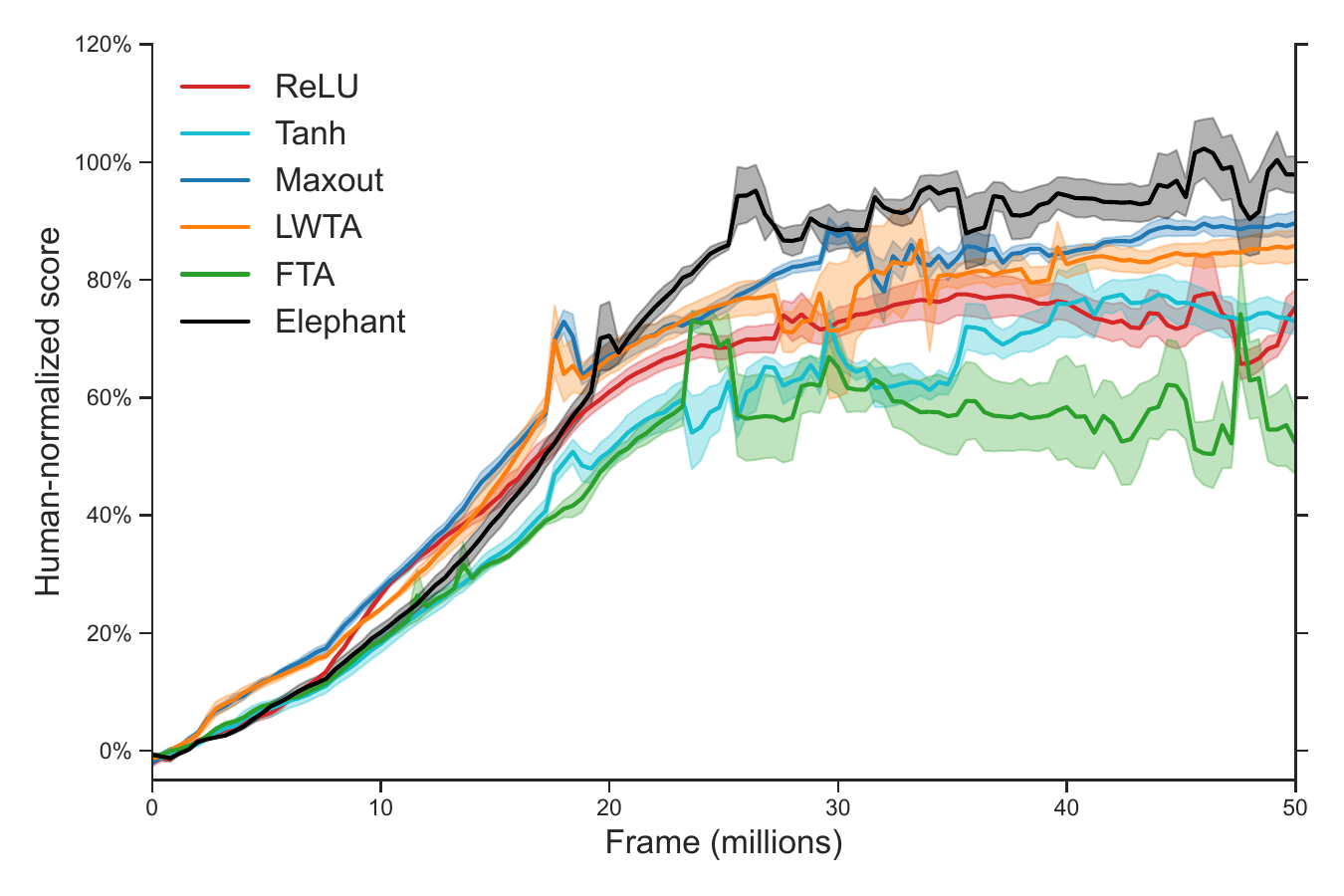}}
\caption{The learning curves of DQN and Rainbow aggregated over 10 Atari tasks for 6 activation functions.
Solid lines correspond to the average performance over 5 runs, while shaded areas correspond to standard errors.
}
\label{fig:atari_summary}
\end{figure}

To answer this question, we test DQN and Rainbow in 10 Atari games~\citep{bellemare2013arcade} with different activation functions.
Specifically, we selected 10 representative Atari games recommended by~\citet{aitchison2022atari}---Amidar, Battlezone, Bowling, Double Dunk, Frostbite, Kung-Fu Master, Name This Game, Phoenix, Q*bert, and River Raid—--which produce scores that closely correlate with the performance estimates on the full set of 57 Atari games, while requiring substantially less training cost.
The implementation of DQN and Rainbow are adapted from Tianshou~\citep{tianshou}.~\footnote{\url{https://github.com/thu-ml/tianshou/blob/v0.4.10/examples/atari/}}
We also follow the default hyper-parameters as in Tianshou unless explicitly stated otherwise.
Adam~\citep{kingma2015adam} is applied to optimize. We train all agents for 50M frames.
For DQN, the learning rate is $1e-4$ while it is $6.25e-5$ for Rainbow.
The buffer size is 1 million.

In \cref{fig:atari_summary}, we show the test human-normalized scores of DQN and Rainbow with different activation functions, aggregated over 10 Atari tasks.
Solid lines correspond to the median performance over 5 runs, while shaded areas correspond to standard errors.
Moreover, in the appendix, we present the detailed test performance of DQN and Rainbow in all 10 Atari tasks in \cref{tab:atari_individual} and the return curves in \cref{fig:atari_individual}.
Together, these results demonstrate that even when a large buffer is used, $\elephant$ still surpasses the baselines significantly.

\subsubsection{Gradient Analysis}

In \cref{sect:elephant}, we claimed that $\elephant$ induces sparse gradient and thus reduces forgetting.
Here, we perform a gradient analysis to verify the claim.
To be specific, we visualize the gradient covariance matrices at different stages of training DQN in Atari tasks.
Essentially, the gradient covariance matrix is a matrix of normalized NTK.
Formally, we estimate this matrix (denoted as $C$) by randomly sampling $k$ training samples $\rvx_1, \cdots, \rvx_k$ and compute each element as 
$C_{ij} = \frac{\langle \nabla_\theta l(\theta, \rvx_i), \nabla_\theta l(\theta, \rvx_j) \rangle}{\|\nabla_\theta l(\theta, \rvx_i)\| \|\nabla_\theta l(\theta, \rvx_j)\|}$,
where $l$ is the loss function and $\theta$ is the weight vector.
The gradient covariance matrix is strongly related to generalization and interference~\citep{fort2020stiffness,lyle2023understanding} --- negative off-diagonal entries usually indicate interference between different training samples, while positive off-diagonal entries reflect generalization.

Considering \cref{pro:3} and \cref{thm:main}, when $\elephant$ is applied, we expect the off-diagonal entries of the gradient covariance matrix to be close to zero.
Indeed, as shown in \cref{fig:atari_grad_dqn}, the gradient covariance matrix exhibits near-zero off-diagonal values throughout the entire training process when $\elephant$ is used, indicating mild generalization and reduced interference.
However, for other activation functions such as $\relu$, some off-diagonal values remain noticeably non-zero, indicating overgeneralization and strong interference.
Due to space constraints, the heatmaps for all 10 Atari tasks and 6 activation functions are included in \cref{appendix:grad}.

\begin{figure}[htbp]
\centering
\subcaptionbox{$\relu$}{\includegraphics[width=\figwidthtwo]{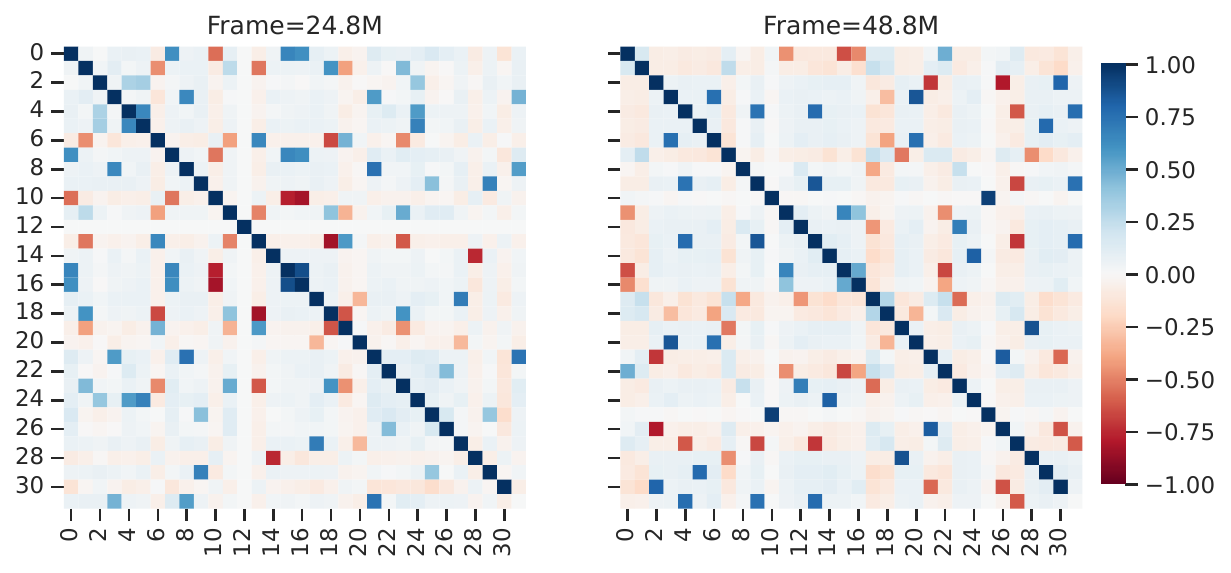}}
\subcaptionbox{$\elephant$}{\includegraphics[width=\figwidthtwo]{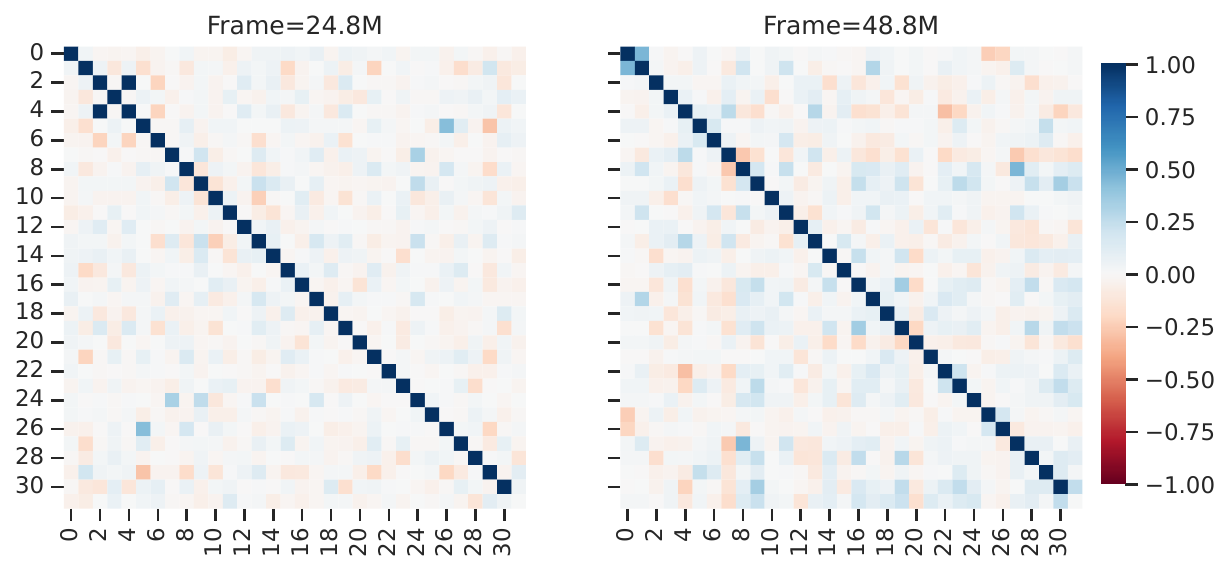}}
\hfill
\caption{Heatmaps of gradient covariance matrices for training DQN in Amidar at the midpoint (Frame = 24.8M) and end (Frame = 48.8M) of training.}
\label{fig:atari_grad_dqn}
\end{figure}

\begin{figure}[htbp]
\centering
\includegraphics[width=0.2\textwidth]{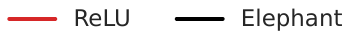} \\
\subcaptionbox{Hyper-parameter $a$}{\includegraphics[width=\figwidthtwo]{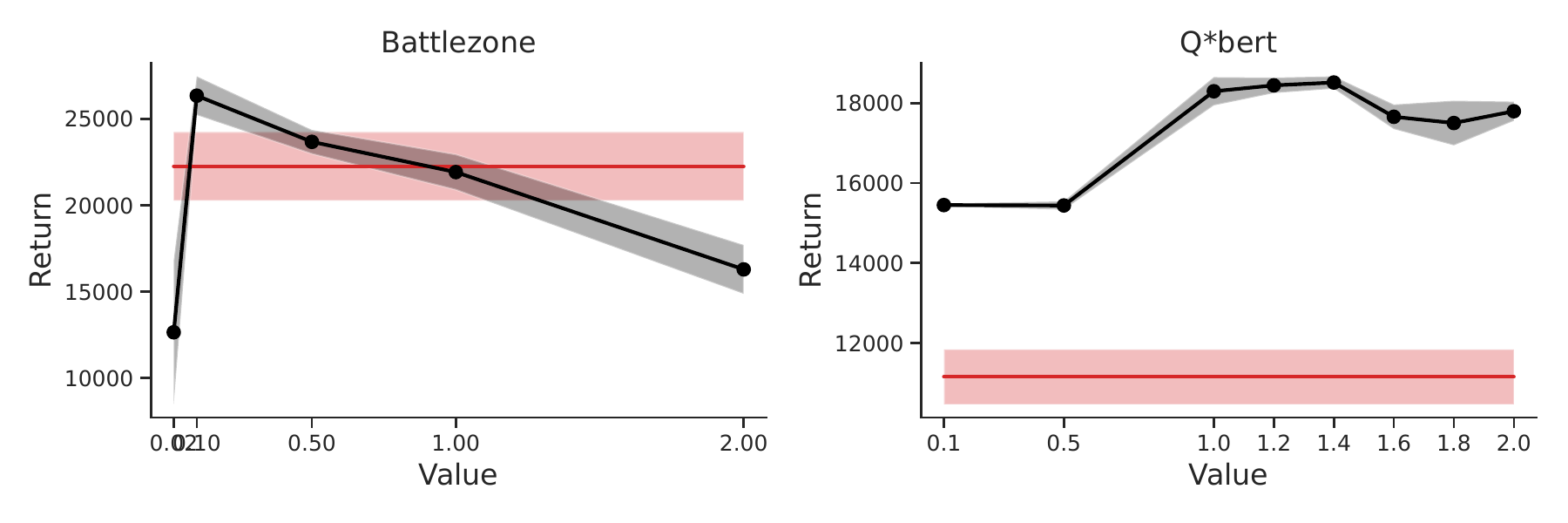}}
\subcaptionbox{Hyper-parameter $d$}{\includegraphics[width=\figwidthtwo]{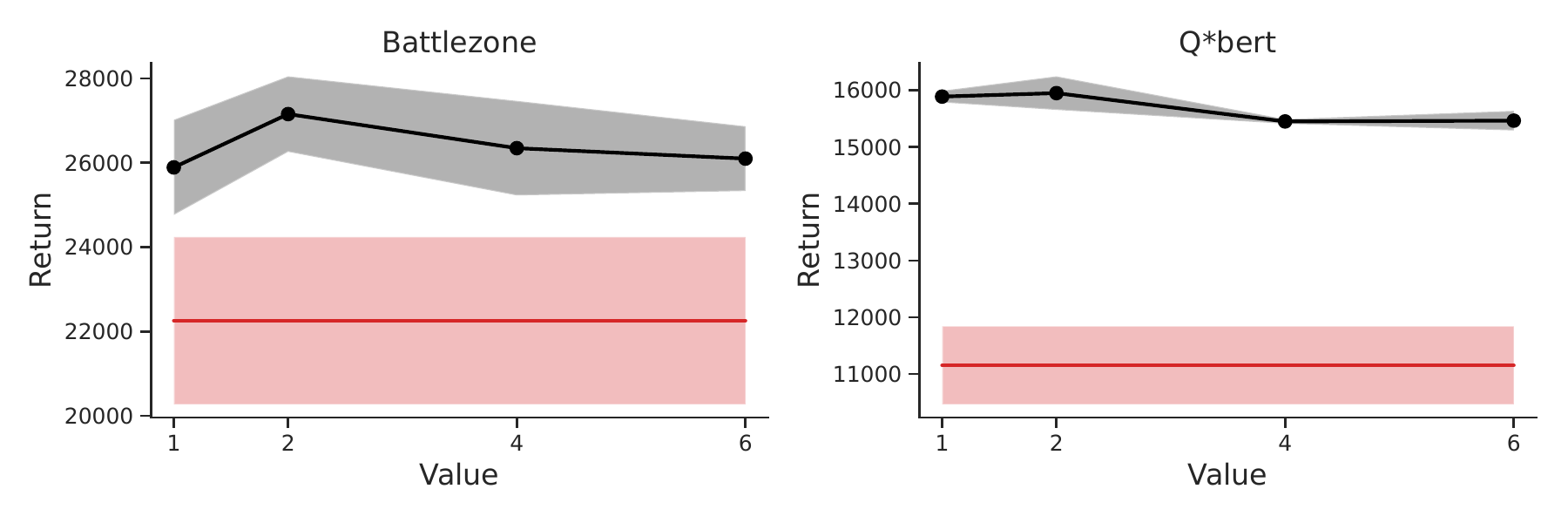}}
\caption{A sensitivity analysis of hyper-parameters $\sigma$ and $d$ in $\elephant$. We present the final test returns averaged over 5 runs, and the shaded areas represent standard errors.}
\label{fig:atari_dqn_hyper}
\end{figure}

\subsubsection{Sensitivity Analysis}

Finally, we conduct a sensitivity analysis for the hyper-parameters of $\elephant$.
Specifically, we vary $\sigma$ and $d$ (see \cref{eq:elephant}) in $\elephant$ and test DQN in Battlezone and Q*bert.
For reference, we also show the performance of $\relu$.
As shown in \cref{fig:atari_dqn_hyper}, for Battlezone, a small $a$ (around 0.1) yields the best performance, whereas for Q*bert, a relatively large $a$ (around 1.2) performs best.
As for $d$, the performance is more robust to $d$ and a small $d$ works well across tasks in general.

Please also check the appendix for additional experiments for class incremental learning (\cref{appendix:clari}) and policy gradient methods (\cref{appendix:pg}).

\section{Conclusion and Discussion}

In this work, we proposed elephant activation functions that can make neural networks more resilient to catastrophic forgetting.
Theoretically, our work provided a deeper understanding of the role of activation functions in catastrophic forgetting.
Empirically, we showed that incorporating elephant activation functions in neural networks improves memory efficiency and learning performance of value-based algorithms.

While our results demonstrate the effectiveness of elephant activation functions in model-free RL, we have not yet explored its applicability in model-based RL.
Another limitation is the lack of a principled method for selecting the hyper-parameters of elephant activation functions.
The optimal values for these parameters appear to depend on both the input data and architectural factors such as the number of input and output features in each layer.
In practice, careful tuning $a$ is required to achieve a favorable stability-plasticity trade-off.

\newpage
\bibliography{reference}
\bibliographystyle{apalike}
\newpage
\appendix

\section{Proofs}\label{appendix:proof}

\paragraph{Derivation of~\cref{eq:taylor}}
By Taylor expansion, we have
\begin{align*}
& f_{\vw'}(\vx) - f_{\vw}(\vx) \\
=& f_{\vw + \Delta_\vw}(\vx) - f_{\vw}(\vx) \\
=& f_{\vw}(\vx) + \langle \nabla_\vw f_{\vw}(\vx), \Delta_\vw \rangle + O(\Delta_\vw^2) - f_{\vw}(\vx) \\
=& \langle \nabla_\vw f_{\vw}(\vx), \Delta_\vw \rangle + O(\Delta_\vw^2) \\
=& -\alpha \nabla_f L(f,F,\vx_t) \, \langle \nabla_\vw f_{\vw}(\vx), \nabla_\vw f_{\vw}(\vx_t) \rangle + O(\Delta_\vw^2).
\end{align*}

\firstntklem*
\begin{proof}
Let $\circ$ denote Hadamard product. By definition, we have
\begin{align*}
&\langle \nabla_\vw f_{\vw}(\vx), \nabla_\vw f_{\vw}(\vx_t) \rangle \\
&= \langle \nabla_u f_{\vw}(\vx), \nabla_u f_{\vw}(\vx_t) \rangle + \langle \nabla_V f_{\vw}(\vx), \nabla_V f_{\vw}(\vx_t) \rangle + \langle \nabla_b f_{\vw}(\vx), \nabla_b f_{\vw}(\vx_t) \rangle \\
&= \langle \sigma(\rmV \vx + \vb), \sigma(\rmV \vx_t + \vb) \rangle + \langle \vu \circ \sigma'(\rmV \vx + \vb) \vx^\top, \vu \circ \sigma'(\rmV \vx_t + \vb) \vx_t^\top \rangle \\
&+ \langle \vu \circ \sigma'(\rmV \vx + \vb), \vu \circ \sigma'(\rmV \vx_t + \vb) \rangle \\
&= \sigma(\rmV \vx + \vb)^\top \sigma(\rmV \vx_t + \vb) + (\vx^\top \vx_t + 1) \left(\vu \circ \sigma'(\rmV \vx + \vb) \right)^\top \left( \vu \circ \sigma'(\mathbf{\rmV} \vx_t + \vb) \right), \\
&= \sigma(\rmV \vx + \vb)^\top \sigma(\rmV \vx_t + \vb) + \vu^\top \vu (\vx^\top \vx_t + 1) \sigma'(\rmV \vx + \vb)^\top \sigma'(\rmV \vx_t + \vb).
\end{align*}
\end{proof}

\begin{lemma}\label{lem:elephant}
$\elephant(x)$ and $\elephant'(x)$ are sparse functions.
\end{lemma}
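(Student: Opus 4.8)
The plan is to prove both claims directly from the sparsity definition in \cref{def:sparse} by showing that each of the two functions decays to $0$ as $\abs{x}\to\infty$. The observation driving everything is that if $\sigma$ tends to $0$ at $\pm\infty$, then for every fixed $\epsilon>0$ the ``heavy'' set $\{x : \abs{\sigma(x)} > \epsilon\}$ is bounded, say contained in $[-R_\epsilon, R_\epsilon]$. In that case, for all $C>R_\epsilon$,
\[
S_{\epsilon,C}(\sigma) \ge \frac{2C - 2R_\epsilon}{2C} = 1 - \frac{R_\epsilon}{C}.
\]
Combined with $S_{\epsilon,C}\le 1$, letting the inner limit $C\to\infty$ run first forces $\lim_{C\to\infty}S_{\epsilon,C}(\sigma)=1$, and the outer limit $\epsilon\to 0^+$ of the constant $1$ is again $1$, so $S(\sigma)=1$. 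Thus the whole problem reduces to bounding the heavy set of $\elephant$ and of $\elephant'$.

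For $\elephant$ itself the bound is exact rather than asymptotic. From \cref{eq:elephant}, $\elephant(x)=h/(1+\abs{x/a}^d)>0$, so $\abs{\elephant(x)}=\elephant(x)$, and this is strictly decreasing in $\abs{x}$. Hence for $\epsilon<h$ the inequality $\elephant(x)\le\epsilon$ is equivalent to $\abs{x}\ge a\,(h/\epsilon-1)^{1/d}=:R_\epsilon$, so the set where $\abs{\elephant}\le\epsilon$ is precisely the complement of the centered interval $(-R_\epsilon, R_\epsilon)$. For $C>R_\epsilon$ this gives $S_{\epsilon,C}(\elephant)=1-R_\epsilon/C$ exactly, and the limits above yield $S(\elephant)=1$.

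For $\elephant'$ I would first compute, for $x\neq 0$,
\[
\elephant'(x) = -\frac{hd}{a}\,\frac{\abs{x/a}^{d-1}\sign(x)}{\bigl(1+\abs{x/a}^d\bigr)^{2}},
\]
so that $\abs{\elephant'(x)}$ behaves like a constant times $\abs{x}^{-d-1}$ as $\abs{x}\to\infty$ and therefore tends to $0$. Together with continuity of $\elephant'$ away from the origin, this shows $\{x : \abs{\elephant'(x)}>\epsilon\}$ is bounded, contained in some $[-M_\epsilon, M_\epsilon]$, and the squeeze $1-M_\epsilon/C \le S_{\epsilon,C}(\elephant')\le 1$ finishes the proof that $S(\elephant')=1$.

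I expect the only delicate point to be the behavior of $\elephant'$ at $x=0$, where the absolute value makes the derivative piecewise: for $d>1$ it vanishes, for $d=1$ it has a finite jump, and for $0<d<1$ it blows up. None of this affects the conclusion, since any such spike lives in a bounded neighborhood of the origin and is simply absorbed into $M_\epsilon$; the essential ingredient is only the decay at infinity, not the precise shape of the heavy set. I would therefore state the decay estimate cleanly and emphasize that boundedness of $\{x:\abs{\sigma(x)}>\epsilon\}$ is all the sparsity limit requires.
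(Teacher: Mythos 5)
Your proposal is correct and follows essentially the same route as the paper's proof: bound the ``heavy'' set $\{x:\abs{\sigma(x)}>\epsilon\}$ inside a fixed interval $[-R_\epsilon,R_\epsilon]$, deduce $S_{\epsilon,C}(\sigma)\ge 1-R_\epsilon/C$, and take the limits $C\to\infty$ then $\epsilon\to 0^+$; the paper just instantiates explicit thresholds ($a(1/\epsilon-1)^{1/d}$ for $\elephant$ and $d/(2\epsilon)$ for $\elephant'$) where you invoke the decay rate $\abs{x}^{-d-1}$. Your extra remark on the behavior of $\elephant'$ at $x=0$ for $d\le 1$ is a nice point the paper omits, but it does not change the argument.
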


\begin{proof}
Without loss of generality, we set $h=1$.
So $\elephant(x) = \frac{1}{1+|\frac{x}{a}|^d}$ and $\abs{\elephant'(x)} = \frac{d}{a} \abs{\frac{x}{a}}^{d-1} (\frac{1}{1+\abs{\frac{x}{a}}^d})^2$.
For $0 < \epsilon < 1$, easy to verify that
\begin{equation*}
\abs{x} \ge a (\frac{1}{\epsilon} - 1)^{1/d} \Longrightarrow \elephant(x) \le \epsilon
\quad \text{and} \quad
\abs{x} \ge \frac{d}{2\epsilon} \Longrightarrow \abs{\elephant'(x)} \le \epsilon.
\end{equation*}

For $C > a (\frac{1}{\epsilon} - 1)^{1/d}$,
we have $S_{\epsilon,C}(\elephant) \ge \frac{C-a (\frac{1}{\epsilon} - 1)^{1/d}}{C}$,
thus
\begin{align*}
S(\elephant)
= \lim_{\epsilon \rightarrow 0^+} \lim_{C \rightarrow \infty} S_{\epsilon,C}(\elephant)
\ge \lim_{\epsilon \rightarrow 0^+} \lim_{C \rightarrow \infty} \frac{C-a (\frac{1}{\epsilon} - 1)^{1/d}}{C} 
= \lim_{\epsilon \rightarrow 0^+} 1
= 1.
\end{align*}

Similarly, for $C > \frac{d}{2\epsilon}$,
we have $S_{\epsilon,C}(\elephant') \ge \frac{C - \frac{d}{2\epsilon}}{C}$,
thus
\begin{align*}
S(\elephant')
= \lim_{\epsilon \rightarrow 0^+} \lim_{C \rightarrow \infty} S_{\epsilon,C}(\elephant')
\ge \lim_{\epsilon \rightarrow 0^+} \lim_{C \rightarrow \infty} \frac{C - \frac{d}{2\epsilon}}{C} 
= \lim_{\epsilon \rightarrow 0^+} 1
= 1.
\end{align*}

Note that $S(\elephant) \le 1$ and $S(\elephant') \le 1$. Together, we conclude that $S(\elephant)=1$ and $S(\elephant')=1$; $\elephant(x)$ and $\elephant'(x)$ are sparse functions.
\end{proof}

\firstthmmain*
\begin{proof}
When $d \rightarrow \infty$, the elephant function is a rectangular function, i.e.
\begin{align*}
\sigma(x) = \operatorname{rect}(x) =
\begin{cases}
1, & \abs{x} < a, \\
\frac{1}{2}, & \abs{x} = a, \\
0, & \abs{x} > a. \\
\end{cases}
\end{align*}
In this case, it is easy to verify that $\forall \, x, y \in \R$, $\abs{x-y} > 2a$, we have $\sigma(x) \sigma(y) = 0$ and $\sigma'(x) \sigma'(y) = 0$.
Denote $\Delta_\vx = \vx - \vx_t$.
Then when $\abs{\rmV \Delta_\vx} \succ 2 a \mathbf{1}_{m}$, we have 
$\sigma(\rmV \vx + \vb)^\top \sigma(\rmV \vx_t + \vb) = 0$ and 
$\sigma'(\rmV \vx + \vb)^\top \sigma'(\rmV \vx_t + \vb) = 0$.
In other words, when $\vx$ and $\vx_t$ are dissimilar in the sense that $\abs{\rmV (\vx - \vx_t)} \succ 2 a \mathbf{1}_{m}$, we have $\langle \nabla_\vw f_{\vw}(\vx), \nabla_\vw f_{\vw}(\vx_t) \rangle = 0$.
\end{proof}

\section{Function and Gradient Sparsity of Activation Functions}\label{appendix:activation}

\begin{figure}[htbp]
\centering
\subcaptionbox{$\relu$}{
\includegraphics[width=\figwidthtwo]{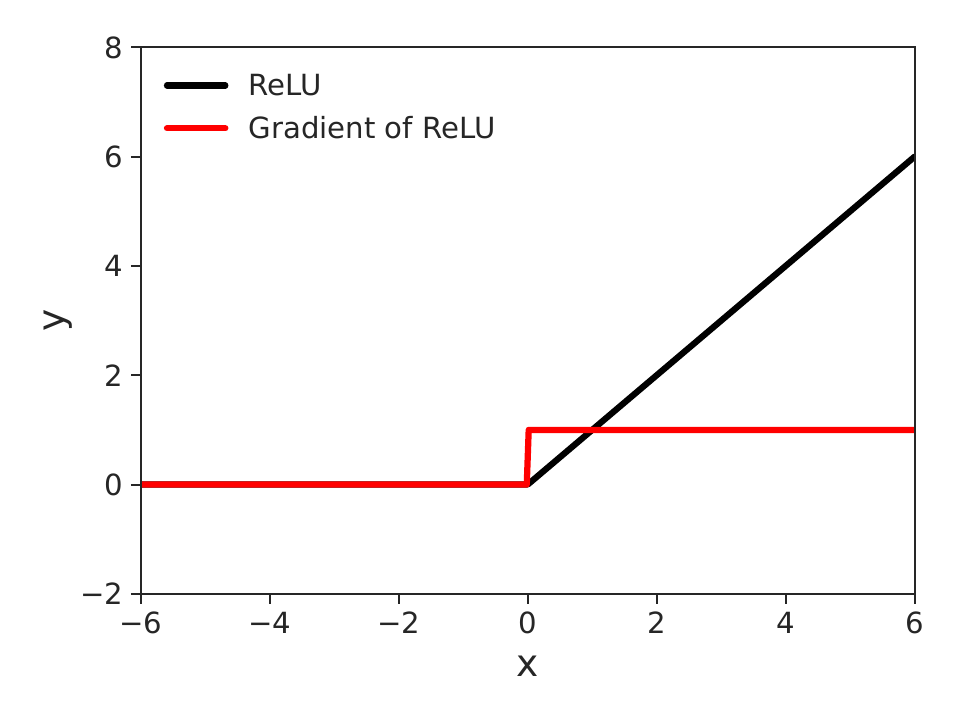}}
\subcaptionbox{$\elu$}{
\includegraphics[width=\figwidthtwo]{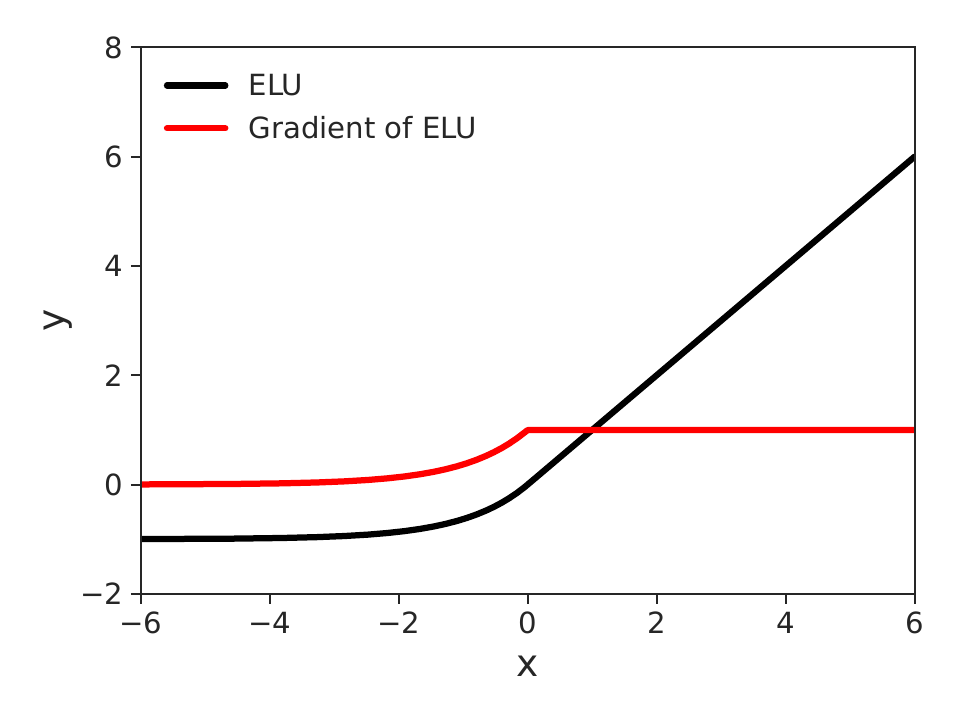}}
\subcaptionbox{$\sigmoid$}{
\includegraphics[width=\figwidthtwo]{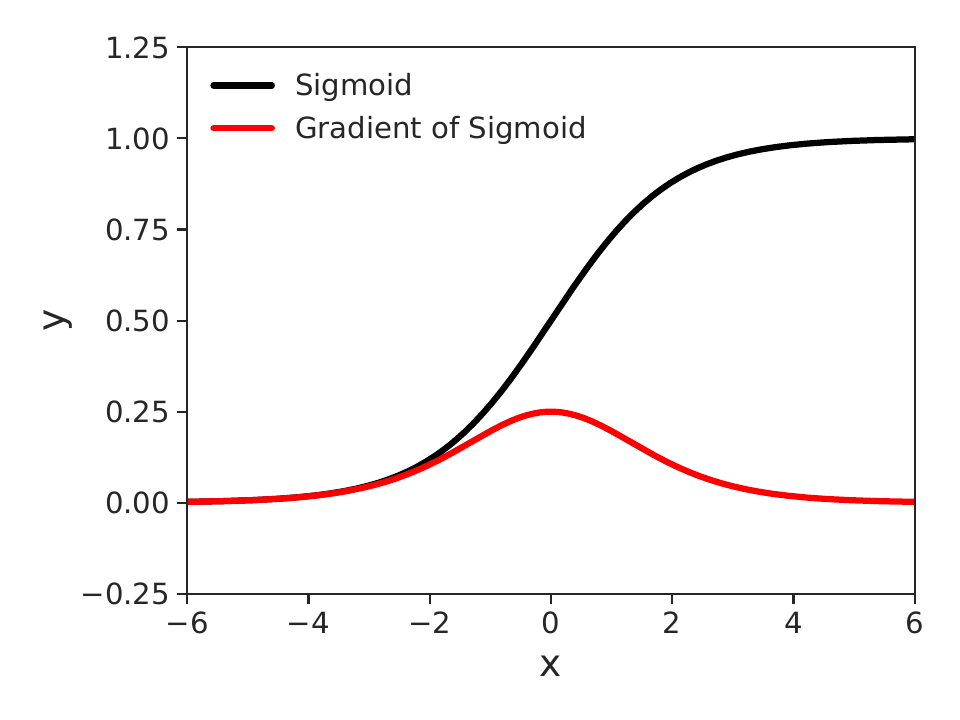}}
\subcaptionbox{$\tanh$}{
\includegraphics[width=\figwidthtwo]{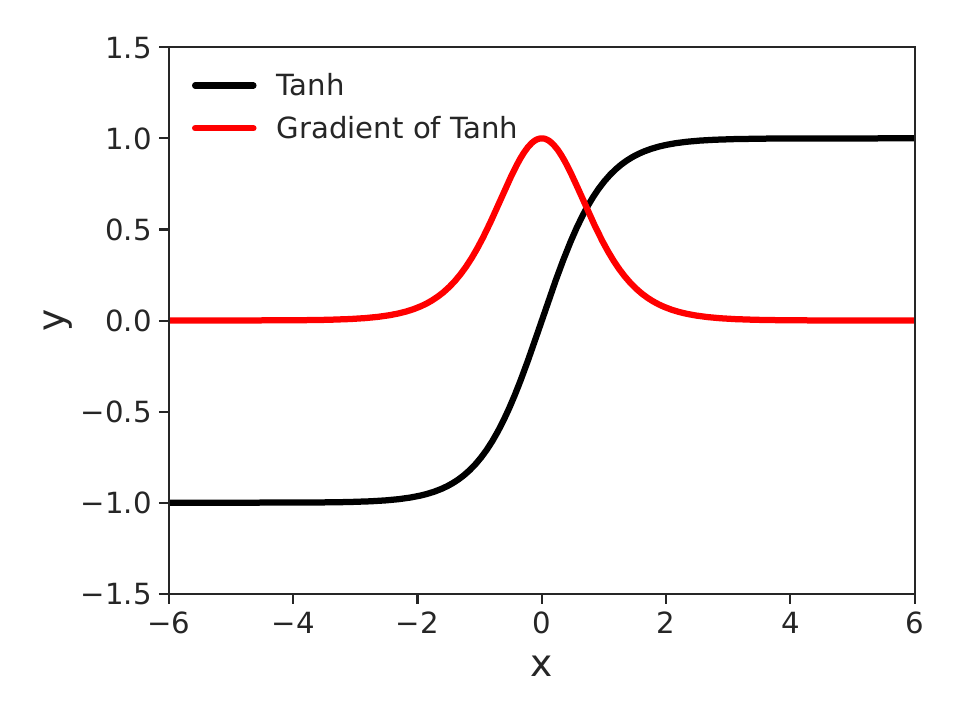}}
\caption{Visualizations of common activation functions and their gradients.}
\label{fig:act}
\end{figure}

\begin{table}[htbp]
\vspace{-1em}
\caption{The function sparsity and gradient sparsity of various activation functions. Among them, only $\elephant$ is sparse in terms of both function values and gradient values, according to \cref{def:sparse} and \cref{lem:elephant}.}
\label{tb:activation}
\centering
\begin{tabular}{lcc}
\toprule
Activation & Function Sparsity & Gradient Sparsity \\
\midrule
$\relu$     & 1/2 & 1/2 \\
$\sigmoid$  & 1/2 & 1   \\
$\tanh$     & 0 & 1 \\
$\elu$      & 0 & 1/2 \\
$\elephant$ & 1 & 1 \\
\bottomrule
\end{tabular}
\vspace{-1em}
\end{table}

\section{Experimental Details}\label{appendix:exp_detail}

All simulation experiments are conducted on Intel Gold 6148 Skylake CPUs and V100-16GB GPUs.
The total computation required to reproduce all experiments is around $18$ CPU months and $16$ GPU months.
However, the exact amount of the full research project is hard to estimate but it should be greater than this number due to preliminary and failed experiments.

For elephant activation functions, we use the following setup unless explicitly stated otherwise.
Specifically, since the activation area of $\elephant$ is narrow and centered around zero (i.e., from $-a$ to $a$), we apply layer normalization~\citep{ba2016layer} to normalize the input vector in order to get a better trade-off between stability and plasticity~\citep{mermillod2013stabilityplasticity,lin2022towards,jung2022new,lyle2023understanding}.
We also make some parameters of $\elephant$ adaptive and learnable, assigning an individual elephant activation function to each unit of a neural network.
Particularly, all elephant functions are initialized with the same hyper-parameters at the beginning of training. During training, we optimize $h$ and $a$ using gradient descent while keeping $d$ fixed for each elephant function (see~\cref{eq:elephant}).
Compared to fixing $h$ and $a$ during training, this approach is shown to match or even improve performance in our preliminary experiments.
Furthermore, when $\elephant$ is applied, to improve the diversity of initially generated features, we initialize bias values in a linear layer with evenly spaced numbers over the interval $[-\sqrt{3} \sigma_{bias}, \sqrt{3} \sigma_{bias}]$, where $\sigma_{bias}$ is a positive hyper-parameter.
When other activation functions are used, bias values are initialized with zeros.
For weight values in a linear layer, we follow the default initialization as PyTorch by sampling weight values from a uniform distribution $U[-\sqrt{k}, \sqrt{k}]$, where $k=1/\text{in\_features}$.

\subsection{Streaming Learning for Regression}\label{appendix:streaming}

In this experiment, we use an MLP with one hidden layer of size $1,000$.
For each new arriving example, we perform $10$ updates with Adam~\citep{kingma2015adam} optimizer.
The best learning rate is selected from $\{3e-3, 1e-3, 3e-4, 1e-4, 3e-5, 1e-5\}$.
For $\elephant$, we set $d=8$, $h=1$, $a=0.08$, and $\sigma_{bias}=1.28$.
For clarity of demonstration, we omit layer normalization before applying $\elephant$ and fix $h$ and $a$ during training, avoiding their influence on the visualization of the NTK functions.
For $\srnn$, we apply various classical activation functions ($\relu$, $\sigmoid$, $\elu$, and $\tanh$), tune Set KL loss weight $\lambda$ and $\beta$ (see~\citet{liu2019utility} for details), and present the best result in this work.
Specifically, we choose $\lambda$ from $\{0, 0.1, 0.01, 0.001\}$ and choose $\beta$ from $\{0.05, 0.1, 0.2\}$.

We present the test MSEs of various methods in \cref{tb:sin}. Lower is better. All results are averaged over $5$ runs, reported with standard errors. For $\srnn$, we present the best result among the combinations of $\srnn$s and classical activation functions.

\begin{table}[htbp]
\vspace{-1em}
\caption{The test MSEs of various methods in streaming learning for a simple regression task.}
\label{tb:sin}
\centering
\begin{tabular}{lc}
\toprule
Method & Test Performance (MSE) \\
\midrule
$\relu$      & $0.4729 \pm 0.0110$ \\
$\sigmoid$   & $0.4583 \pm 0.0008$ \\
$\tanh$      & $0.4461 \pm 0.0013$ \\
$\elu$       & $0.4521 \pm 0.0019$ \\
$\srnn$      & $0.4061 \pm 0.0036$ \\
$\elephant$  & \textbf{0.0081} $\pm$ \textbf{0.0009} \\
\bottomrule
\end{tabular}
\vspace{-1em}
\end{table}

\subsection{Reinforcement Learning}\label{appendix:rl}

\subsubsection{Classical RL Tasks}\label{appendix:classical}

We implement DQN with Jax~\citep{jax2018github} from scratch based on \citet{Explorer}.
We apply RMSProp~\citep{tieleman2012rmsprop} with a decay rate of $0.999$ for optimization.
A best learning rate is selected from $\{1e-2, 3e-3, 1e-3, 3e-4, 1e-4, 3e-5, 1e-5, 3e-6\}$ for each hyper-parameter setup.
For buffer sizes, we consider values in $\{32, 1e2, 3e2, 1e3, 3e3, 1e4\}$, where the default buffer size is $1e4$.
The discount factor $\gamma=0.99$. The mini-batch size is $32$.

For $\maxout$ and $\lwta$, we set $k=5$.
For $\fta$, we set $k=20$ and $[l,u]=[-20,20]$ following~\citet{pan2020fuzzy}.
For $\elephant$, to make a fair comparison, we use the same set of hyper-parameters for all DQN experiments in classical RL tasks, i.e., $d=4$, $h=1$, $a=0.2$, and $\sigma_{bias}=0.4$.
Note that tuning the hyper-parameters of $\elephant$ for each task and buffer size could further improve the performance.
The default network is an MLP with one hidden layer of size $1,000$ in all tasks.
However, when different activation functions are applied, we adjust the width of the hidden layer so that the total number of parameters is close to each other.

\subsubsection{Atari Tasks}\label{appendix:atari}

Our implementations of Atari DQN and Rainbow are adapted from the implementation of Atari DQN in Tianshou~\citep{tianshou}.
We also follow the default hyper-parameters and training setups as in Tianshou unless explicitly stated otherwise.
The mini-batch size is $32$. The discount factor is $0.99$.
Adam~\citep{kingma2015adam} is applied to optimize.
We train all agents for 50M frames (i.e., 12.5M steps).
For DQN, the learning rate is $1e-4$ while it is $6.25e-5$ for Rainbow.
The buffer size is 1 million.

For $\maxout$ and $\lwta$, we set $k=4$.
For $\fta$, we set $k=20$ and $[l,u]=[-20,20]$ following~\citet{pan2020fuzzy}.
We adjust the width of the hidden layer so that the total number of parameters is close to each other when different activation functions are applied.
To make a fair comparison, we use the same hyper-parameters for $\elephant$ (i.e., $d=4$, $h=1$, $a=0.1$, and $\sigma_{bias}=2$) in all Atari experiments, although tuning them for each task could further boost the performance.
Furthermore, since Rainbow uses noisy linear layers~\citep{fortunato2018noisy}, we follow the default initialization in Rainbow and do not reinitialize the weights and biases when $\elephant$ is applied in Rainbow.
Note that for both DQN and Rainbow, we only apply these activation functions to the penultimate layer; the CNN feature network still uses $\relu$ as the default activation function.

We present the detailed test performance of all 10 Atari tasks in \cref{tab:atari_individual} and the return curves in \cref{fig:atari_individual}.

\begin{table}[htbp]
\centering
\caption{The performance comparison of DQN and Rainbow with different activation functions across 10 Atari tasks. We report the final test returns averaged over 5 runs as well as the correspond 95\% confidence intervals.}
\label{tab:atari_individual}
\subcaptionbox{DQN \label{tab:atari_dqn_individual}}{
\resizebox{\linewidth}{!}{
\begin{tabular}{lcccccc}
\toprule
\bf \diagbox{Task}{Activation} & $\relu$ & $\tanh$ & $\maxout$ & $\lwta$ & $\fta$ & $\elephant$ \\
\midrule
Amidar & 309 $\pm$ 30 & 171 $\pm$ 9 & 620 $\pm$ 133 & 462 $\pm$ 40 & 203 $\pm$ 2 & 912 $\pm$ 88 \\
Battlezone & 21664 $\pm$ 3696 & 4148 $\pm$ 3084 & 19028 $\pm$ 6012 & 21852 $\pm$ 1924 & 4760 $\pm$ 1504 & 26176 $\pm$ 2192 \\
Bowling & 44 $\pm$ 8 & 2 $\pm$ 2 & 22 $\pm$ 6 & 33 $\pm$ 7 & 20 $\pm$ 7 & 44 $\pm$ 11 \\
Double Dunk & -1.9 $\pm$ 0.5 & -1.4 $\pm$ 0.2 & -5.2 $\pm$ 2.4 & -6.0 $\pm$ 1.3 & -2.2 $\pm$ 0.5 & -5.0 $\pm$ 1.4 \\
Frostbite & 3074 $\pm$ 167 & 1501 $\pm$ 628 & 3734 $\pm$ 505 & 3510 $\pm$ 208 & 2613 $\pm$ 433 & 6001 $\pm$ 724 \\
Kung-Fu Master & 12996 $\pm$ 9809 & 0 $\pm$ 0 & 22563 $\pm$ 2230 & 10731 $\pm$ 7698 & 12251 $\pm$ 3113 & 28411 $\pm$ 2065 \\
Name This Game & 3573 $\pm$ 484 & 3889 $\pm$ 488 & 5261 $\pm$ 761 & 5890 $\pm$ 551 & 4751 $\pm$ 383 & 4847 $\pm$ 435 \\
Phoenix & 4346 $\pm$ 246 & 3060 $\pm$ 632 & 8421 $\pm$ 829 & 7659 $\pm$ 1544 & 14055 $\pm$ 1792 & 11110 $\pm$ 2387 \\
Q*bert & 11081 $\pm$ 1271 & 7697 $\pm$ 1824 & 14469 $\pm$ 898 & 15228 $\pm$ 638 & 10195 $\pm$ 1007 & 15454 $\pm$ 84 \\
River Raid & 9560 $\pm$ 290 & 4802 $\pm$ 319 & 9405 $\pm$ 243 & 9669 $\pm$ 288 & 6875 $\pm$ 110 & 14339 $\pm$ 1148 \\
\bottomrule
\end{tabular}
}}
\subcaptionbox{Rainbow \label{tab:atari_rainbow_individual}}{
\resizebox{\linewidth}{!}{
\begin{tabular}{lcccccc}
\toprule
\bf \diagbox{Task}{Activation} & $\relu$ & $\tanh$ & $\maxout$ & $\lwta$ & $\fta$ & $\elephant$ \\
\midrule
Amidar & 300 $\pm$ 37 & 285 $\pm$ 27 & 354 $\pm$ 64 & 309 $\pm$ 96 & 211 $\pm$ 12 & 402 $\pm$ 71 \\
BattleZone & 24104 $\pm$ 2224 & 9844 $\pm$ 8764 & 22320 $\pm$ 1712 & 24308 $\pm$ 3284 & 16176 $\pm$ 1520 & 19600 $\pm$ 4088 \\
Bowling & 27 $\pm$ 3 & 8 $\pm$ 5 & 31 $\pm$ 1 & 30 $\pm$ 1 & 26 $\pm$ 4 & 24 $\pm$ 7 \\
DoubleDunk & -1.9 $\pm$ 0.6 & -1.9 $\pm$ 0.4 & -2.0 $\pm$ 0.3 & -1.9 $\pm$ 0.6 & -2.0 $\pm$ 0.3 & -2.0 $\pm$ 0.3 \\
Frostbite & 2825 $\pm$ 308 & 2604 $\pm$ 662 & 3747 $\pm$ 399 & 2706 $\pm$ 913 & 292 $\pm$ 17 & 3961 $\pm$ 362 \\
KungFuMaster & 24583 $\pm$ 1642 & 17225 $\pm$ 1413 & 21275 $\pm$ 1324 & 23064 $\pm$ 2794 & 18502 $\pm$ 1282 & 25421 $\pm$ 2004 \\
NameThisGame & 12321 $\pm$ 713 & 10833 $\pm$ 1155 & 12746 $\pm$ 199 & 13425 $\pm$ 734 & 9616 $\pm$ 536 & 11384 $\pm$ 444 \\
Phoenix & 6442 $\pm$ 474 & 10649 $\pm$ 1170 & 15024 $\pm$ 1894 & 14408 $\pm$ 951 & 8207 $\pm$ 3642 & 26033 $\pm$ 5962 \\
Qbert & 14477 $\pm$ 467 & 14616 $\pm$ 514 & 15537 $\pm$ 377 & 15110 $\pm$ 880 & 11309 $\pm$ 4481 & 16160 $\pm$ 794 \\
Riverraid & 10054 $\pm$ 564 & 8109 $\pm$ 455 & 11127 $\pm$ 540 & 10504 $\pm$ 309 & 7427 $\pm$ 979 & 9563 $\pm$ 487 \\
\bottomrule
\end{tabular}
}}
\end{table}

\begin{figure}[htbp]
\centering
\includegraphics[width=0.7\textwidth]{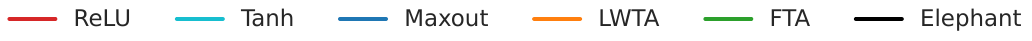}
\subcaptionbox{DQN \label{fig:atari_dqn_individual}}{\includegraphics[width=\figwidthtwo]{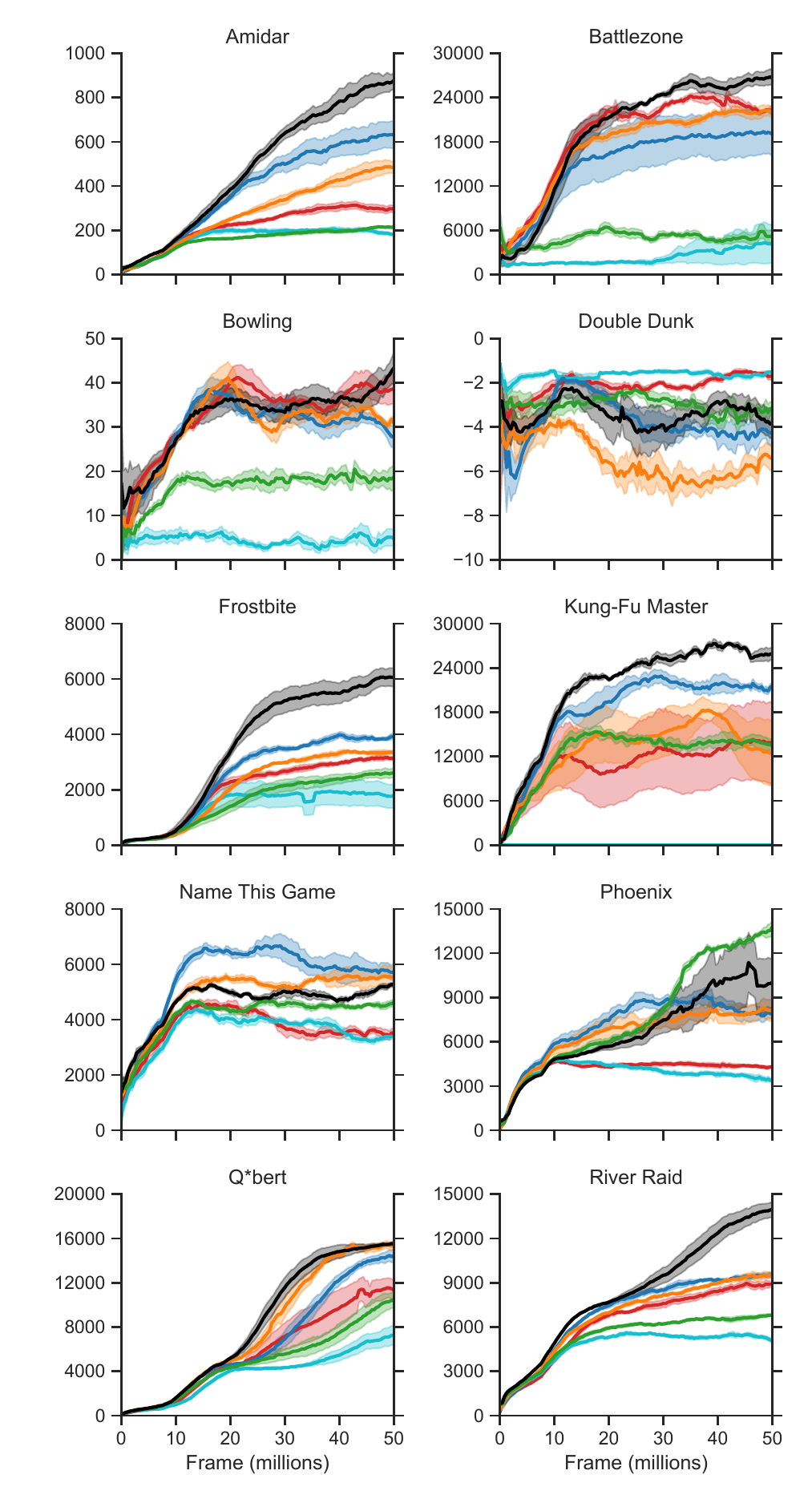}}
\subcaptionbox{Rainbow \label{fig:atari_rainbow_individual}}{\includegraphics[width=\figwidthtwo]{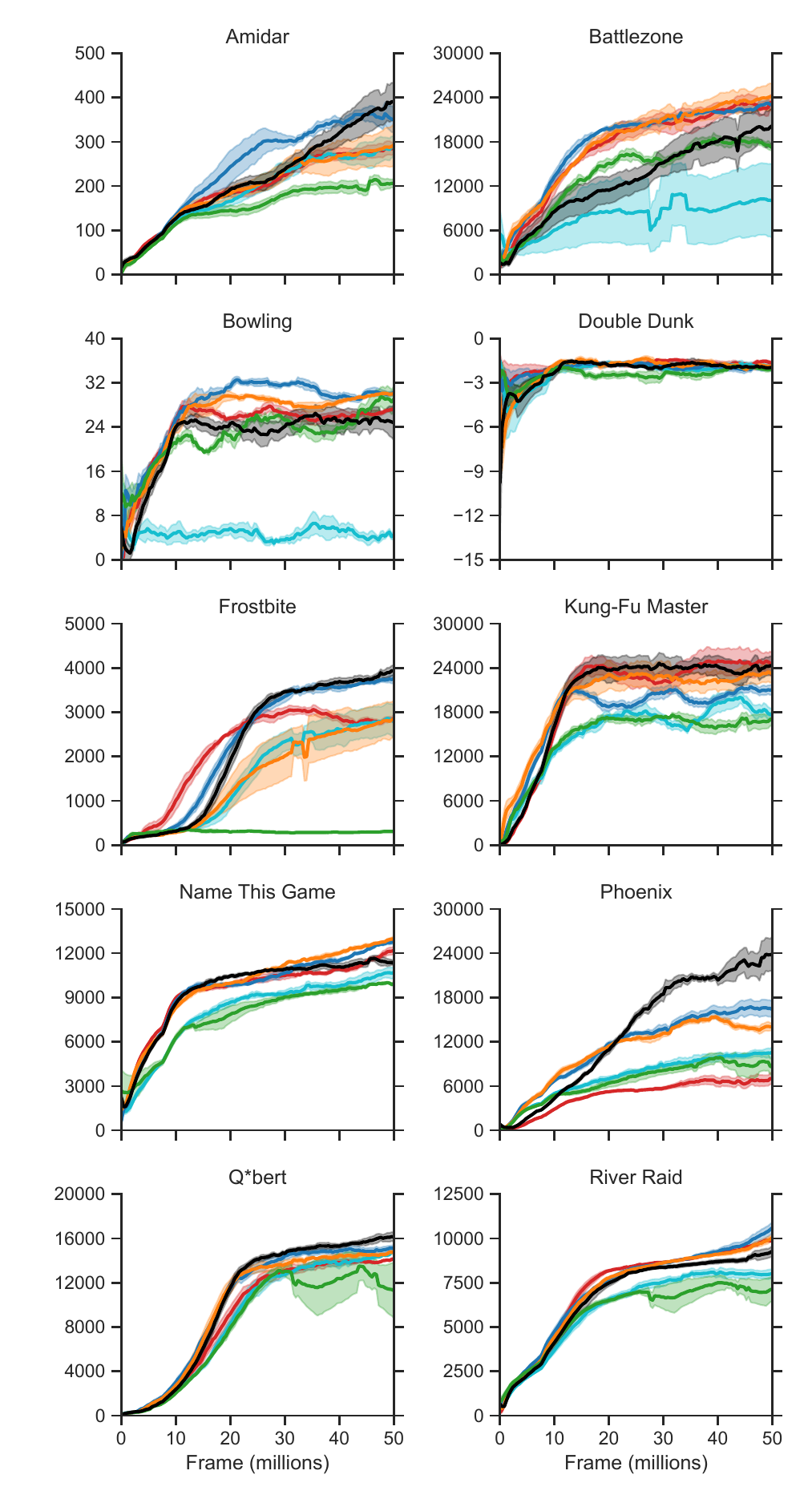}}
\caption{The return curves of DQN and Rainbow with various activations in 10 Atari tasks over 50 million training frames during test. Solid lines correspond to the median performance over 5 random seeds, and the shaded areas correspond to standard errors.}
\label{fig:atari_individual}
\end{figure}

\subsection{The Gradient Covariance Heatmaps of Training DQN in Atari Games}\label{appendix:grad}

From \cref{fig:atari_grad_dqn:Amidar} to \cref{fig:atari_grad_dqn:Riverraid}, we present the heatmaps for training DQN in all 10 Atari tasks with 6 activation functions.
Specifically, we set $k=32$ and estimate the gradient covariance matrices at the midpoint (Frame = 24.8M) and end (Frame = 48.8M) of training.

\begin{figure}[htbp]
\vspace{-2em}
\centering
\subcaptionbox{$\relu$}{\includegraphics[width=\figwidthtwo]{figures/rl/grad_atari_dqn/AtariDQN_Amidar_ReLU.pdf}}
\subcaptionbox{$\tanh$}{\includegraphics[width=\figwidthtwo]{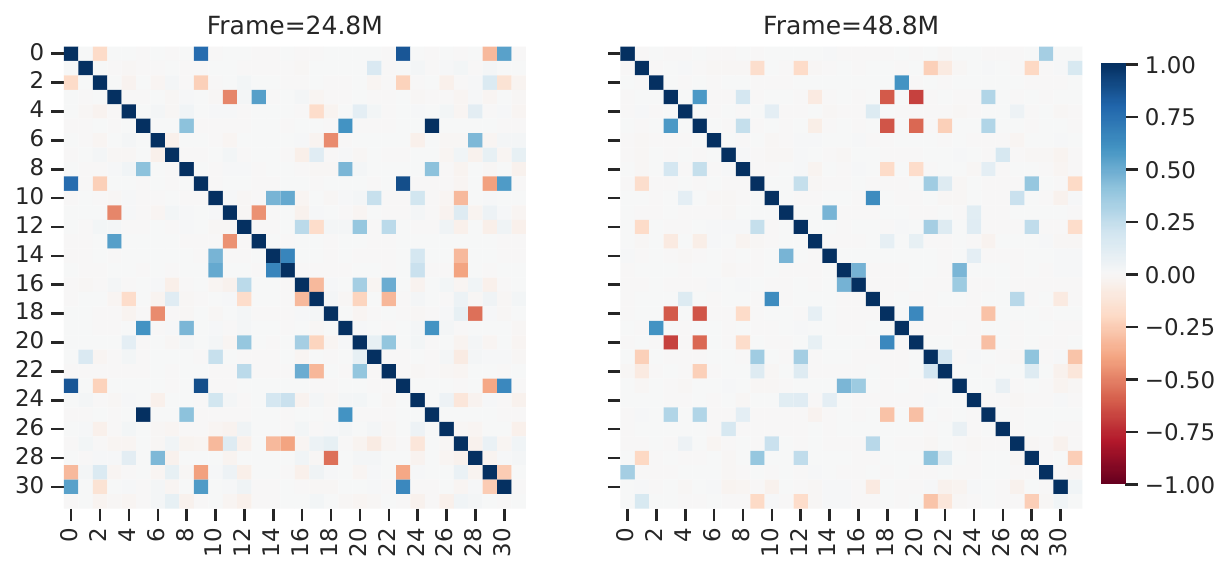}}
\subcaptionbox{$\maxout$}{\includegraphics[width=\figwidthtwo]{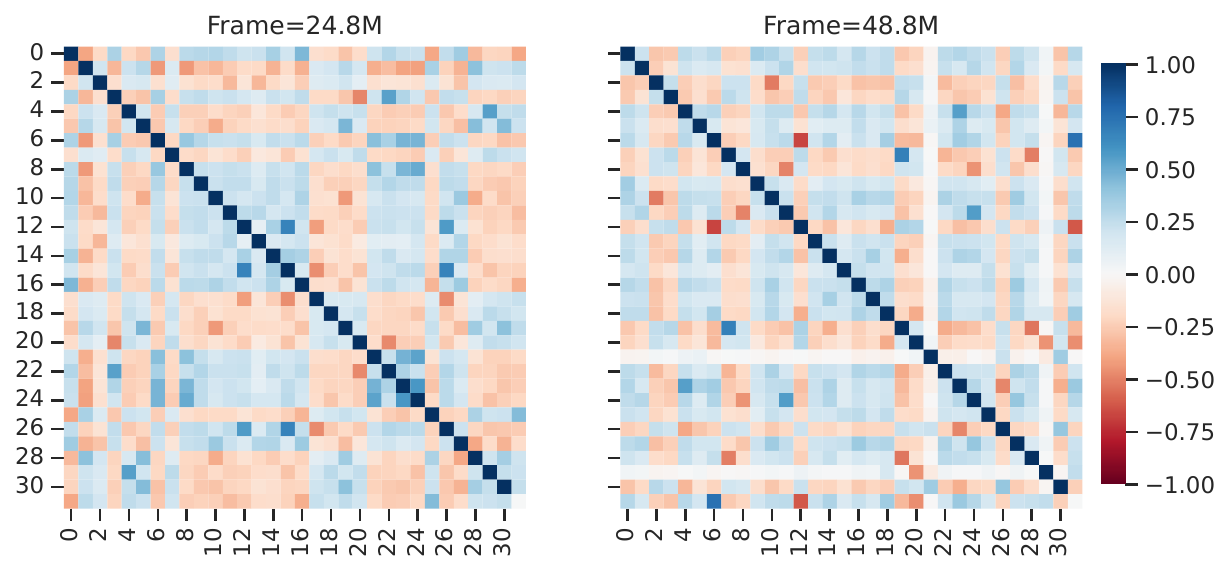}}
\subcaptionbox{$\lwta$}{\includegraphics[width=\figwidthtwo]{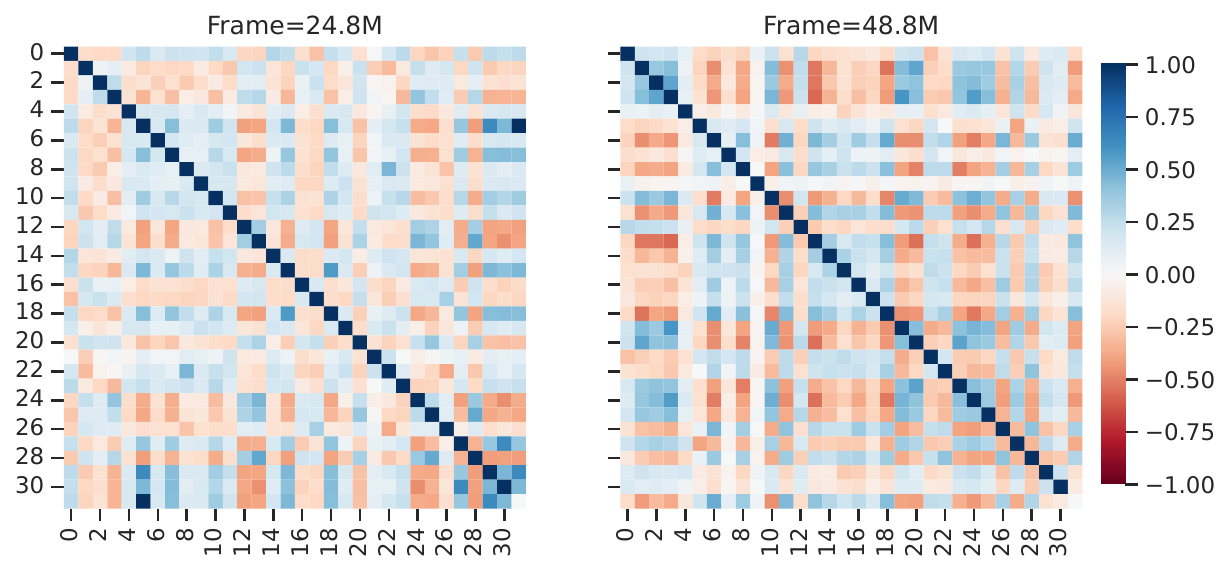}}
\subcaptionbox{$\fta$}{\includegraphics[width=\figwidthtwo]{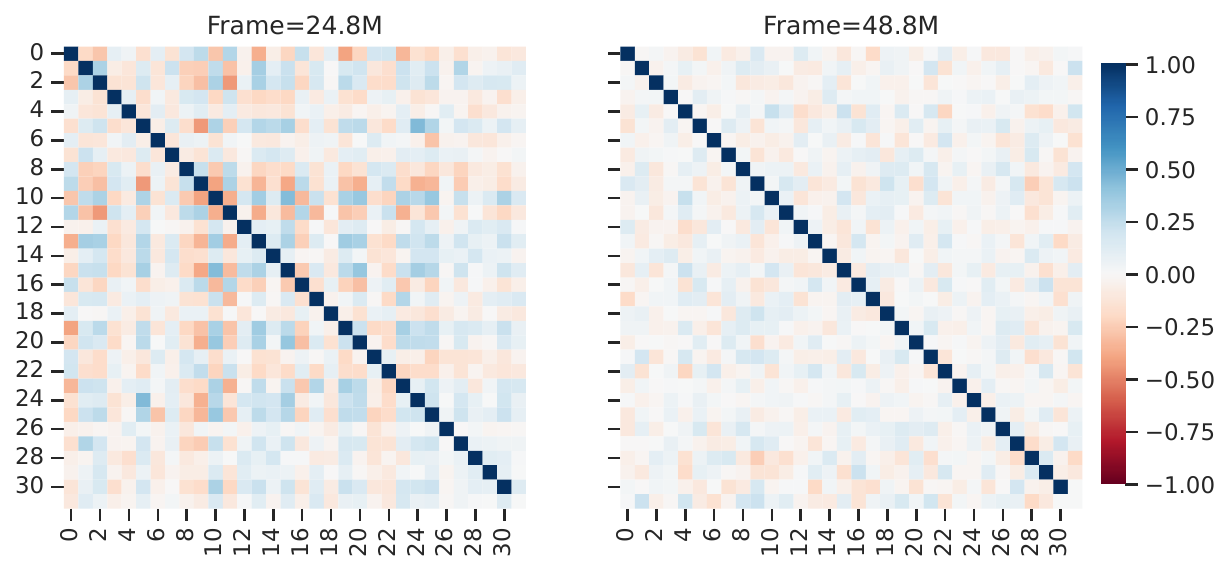}}
\subcaptionbox{$\elephant$}{\includegraphics[width=\figwidthtwo]{figures/rl/grad_atari_dqn/AtariDQN_Amidar_AdaptiveElementwiseElephant.pdf}}
\hfill
\caption{Heatmaps of gradient covariance matrices for training DQN in Amidar.}
\label{fig:atari_grad_dqn:Amidar}
\hfill \\
\subcaptionbox{$\relu$}{\includegraphics[width=\figwidthtwo]{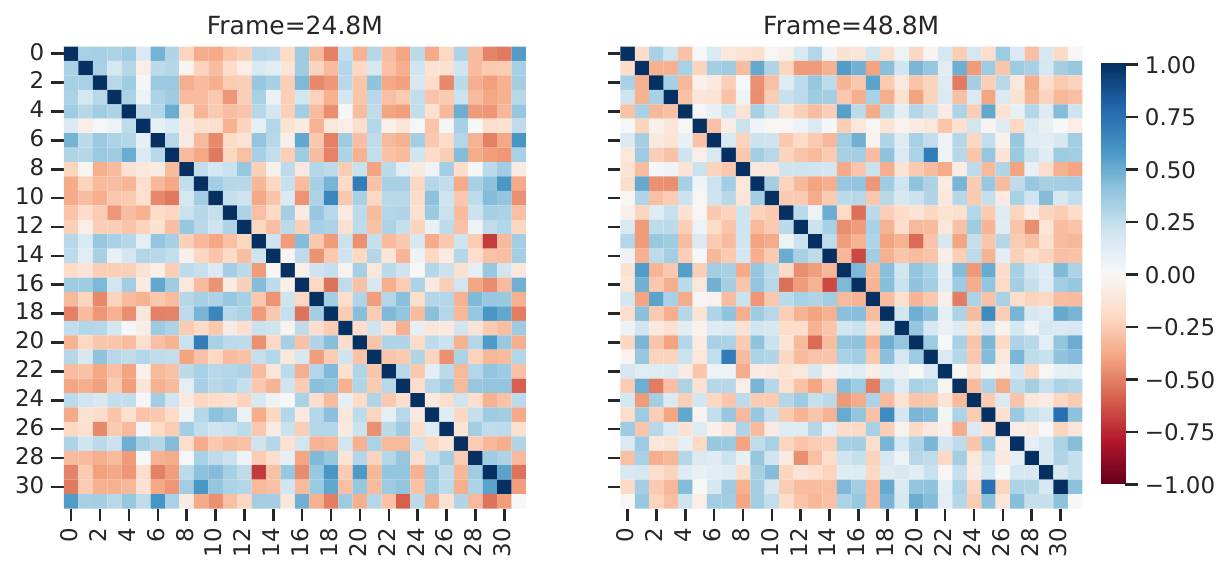}}
\subcaptionbox{$\tanh$}{\includegraphics[width=\figwidthtwo]{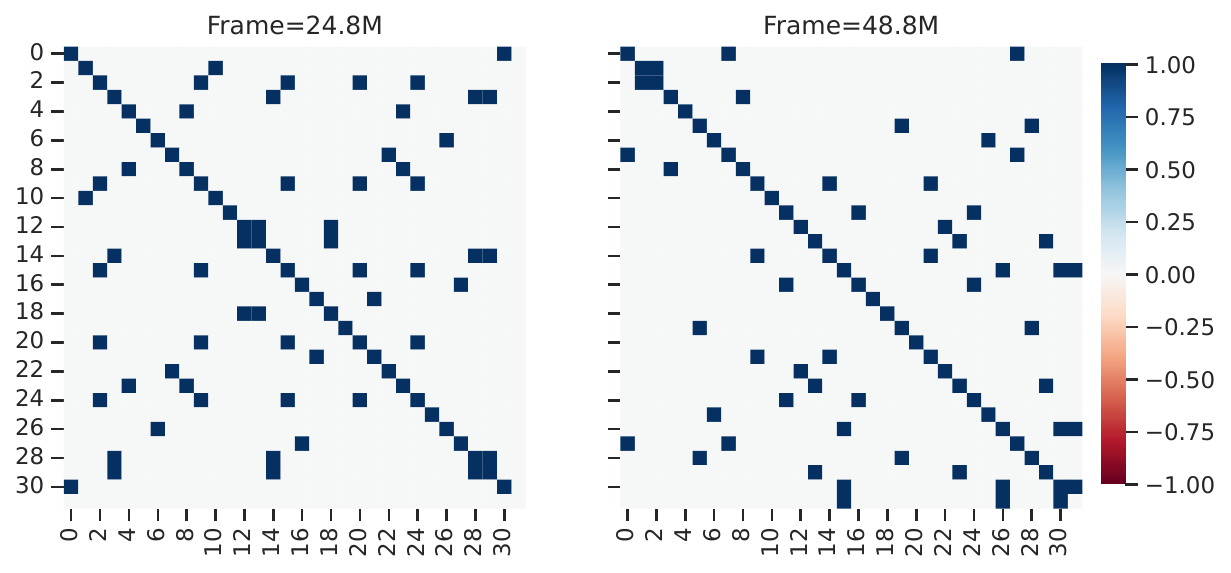}}
\subcaptionbox{$\maxout$}{\includegraphics[width=\figwidthtwo]{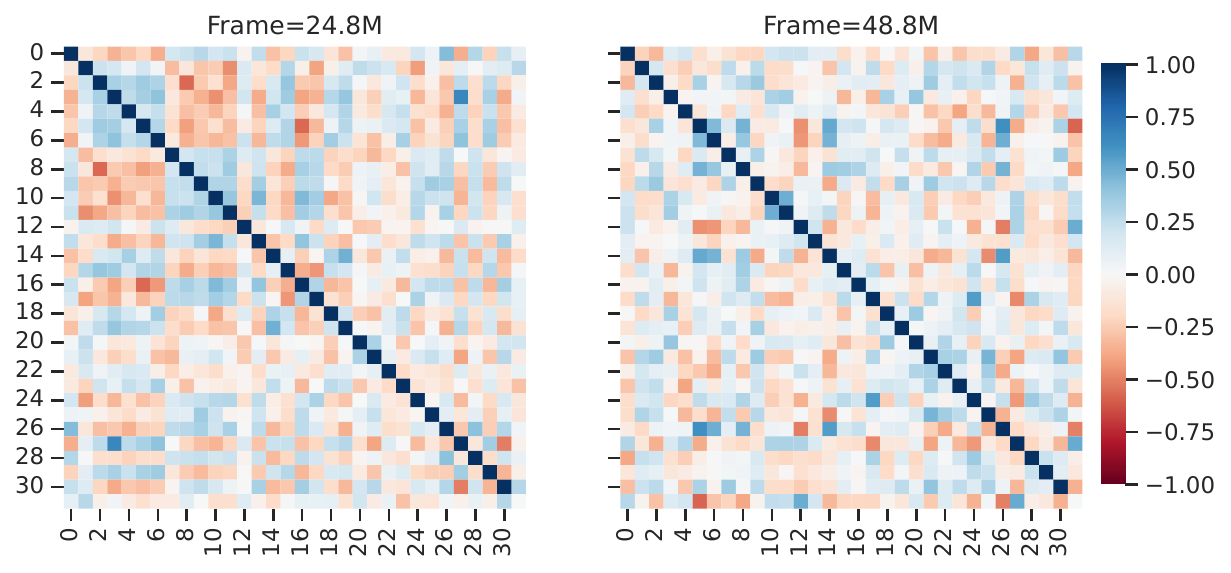}}
\subcaptionbox{$\lwta$}{\includegraphics[width=\figwidthtwo]{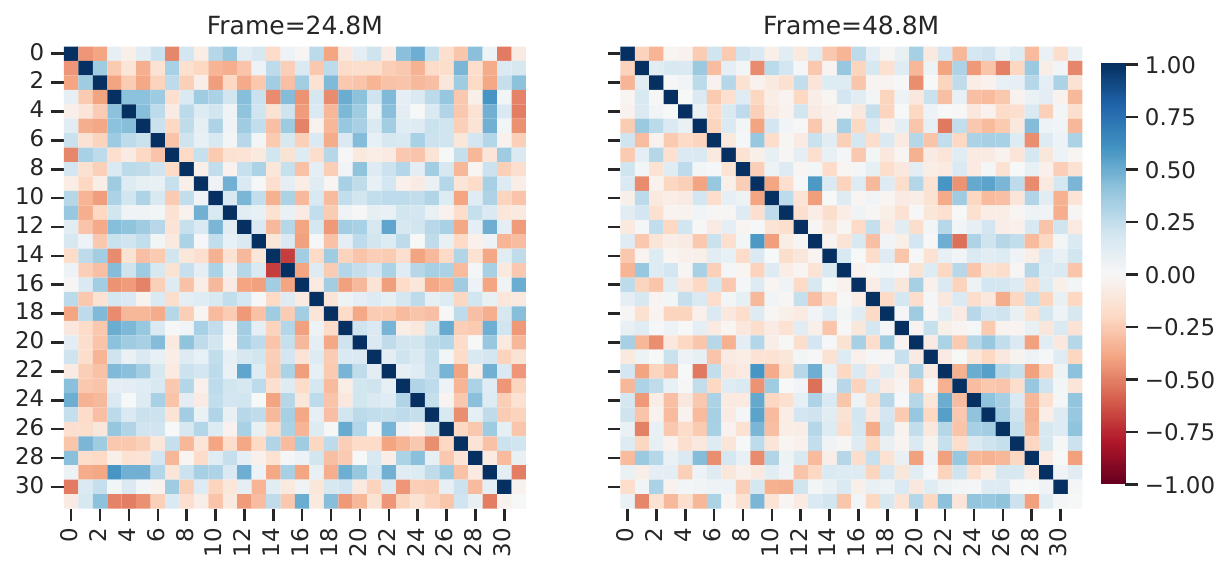}}
\subcaptionbox{$\fta$}{\includegraphics[width=\figwidthtwo]{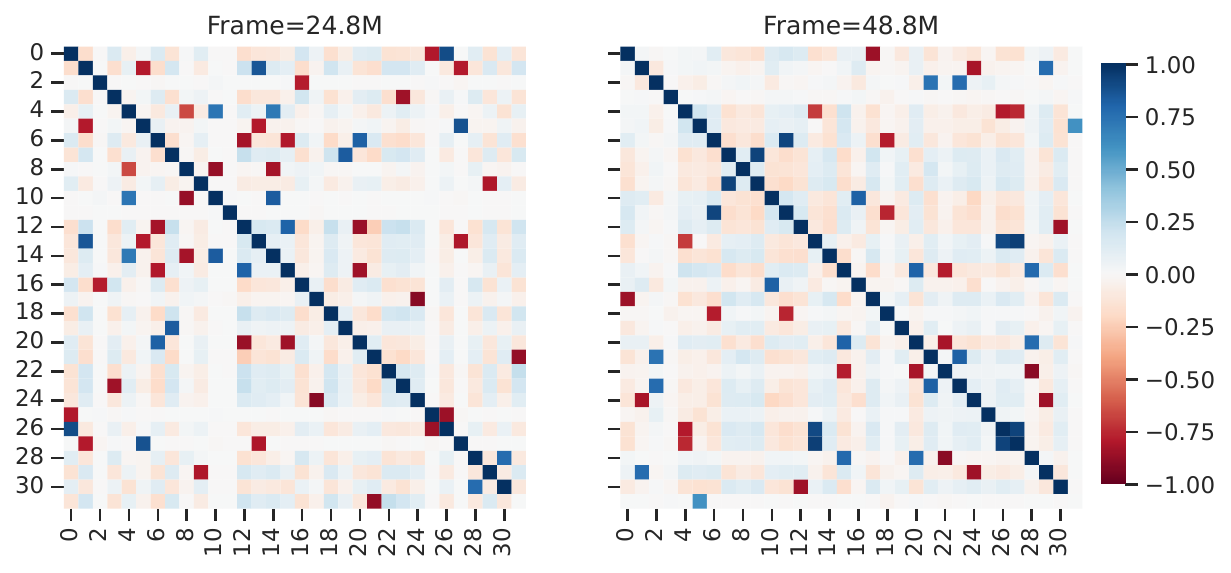}}
\subcaptionbox{$\elephant$}{\includegraphics[width=\figwidthtwo]{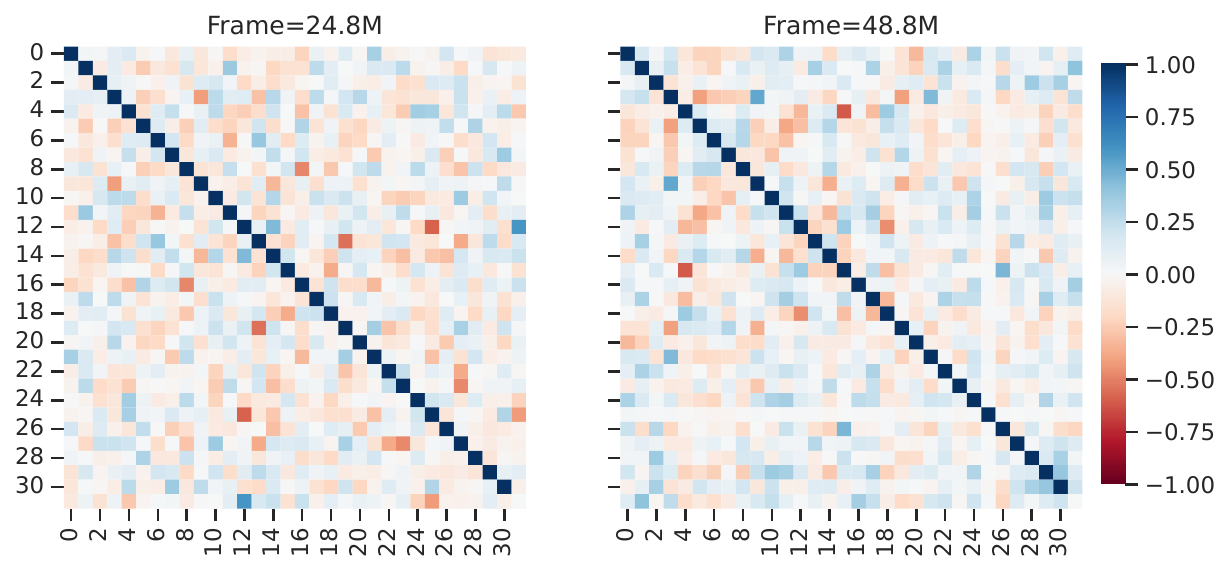}}
\hfill
\caption{Heatmaps of gradient covariance matrices for training DQN in BattleZone.}
\label{fig:atari_grad_dqn:BattleZone}
\end{figure}

\begin{figure}[htbp]
\vspace{-2em}
\centering
\subcaptionbox{$\relu$}{\includegraphics[width=\figwidthtwo]{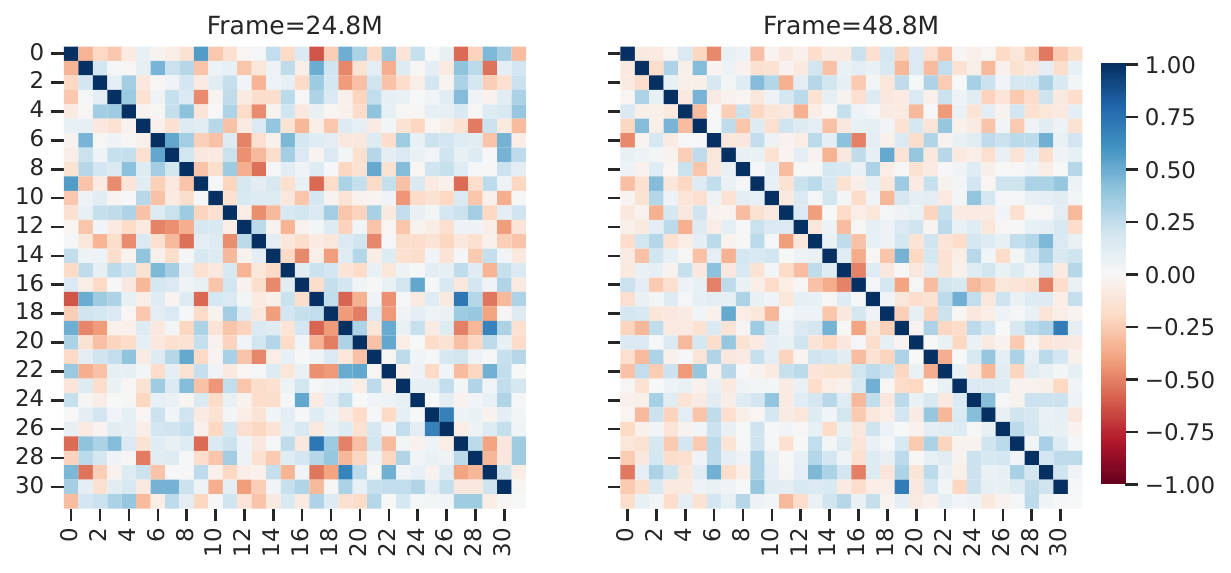}}
\subcaptionbox{$\tanh$}{\includegraphics[width=\figwidthtwo]{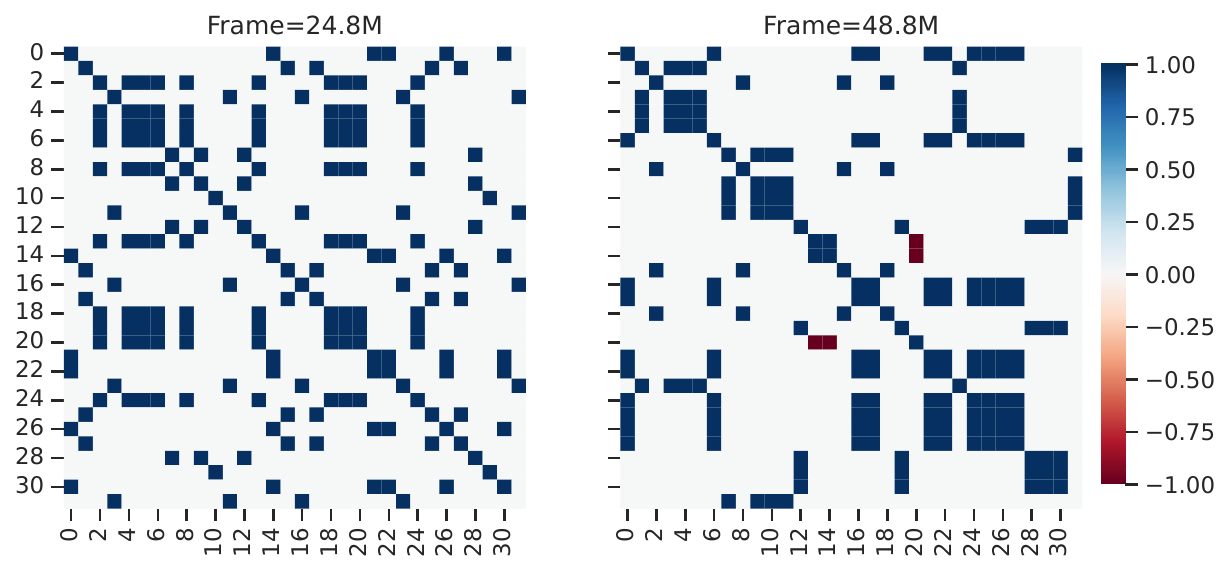}}
\subcaptionbox{$\maxout$}{\includegraphics[width=\figwidthtwo]{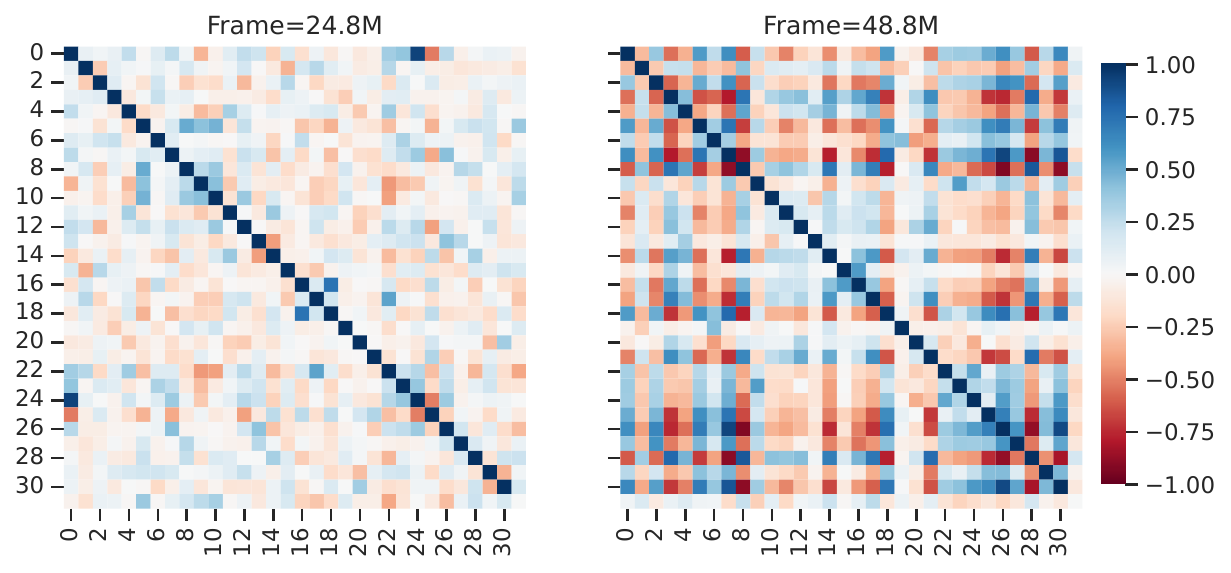}}
\subcaptionbox{$\lwta$}{\includegraphics[width=\figwidthtwo]{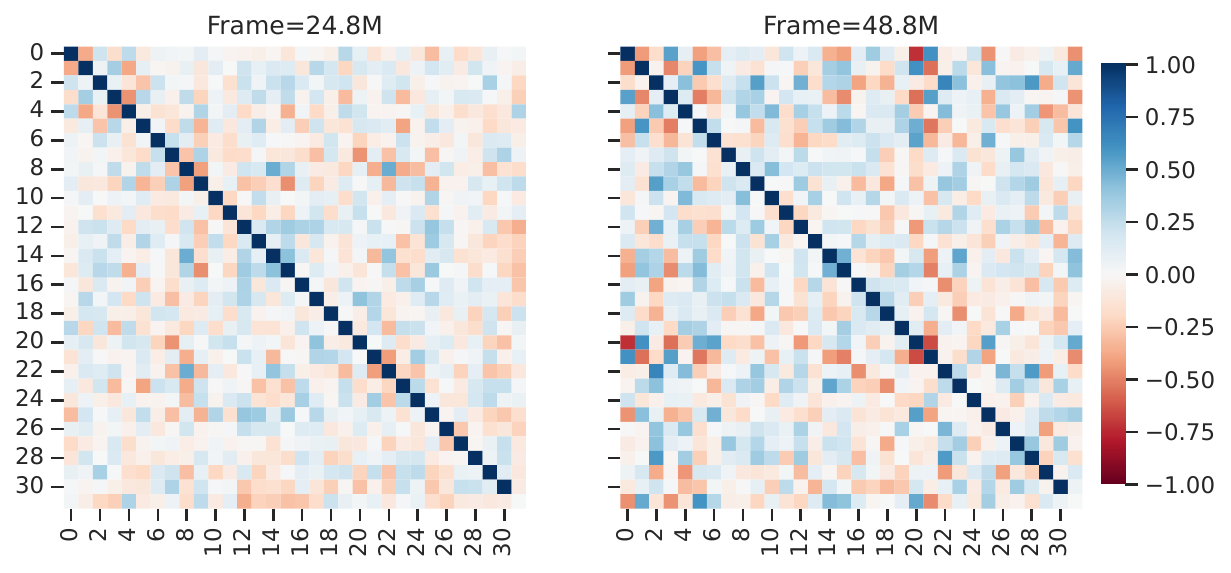}}
\subcaptionbox{$\fta$}{\includegraphics[width=\figwidthtwo]{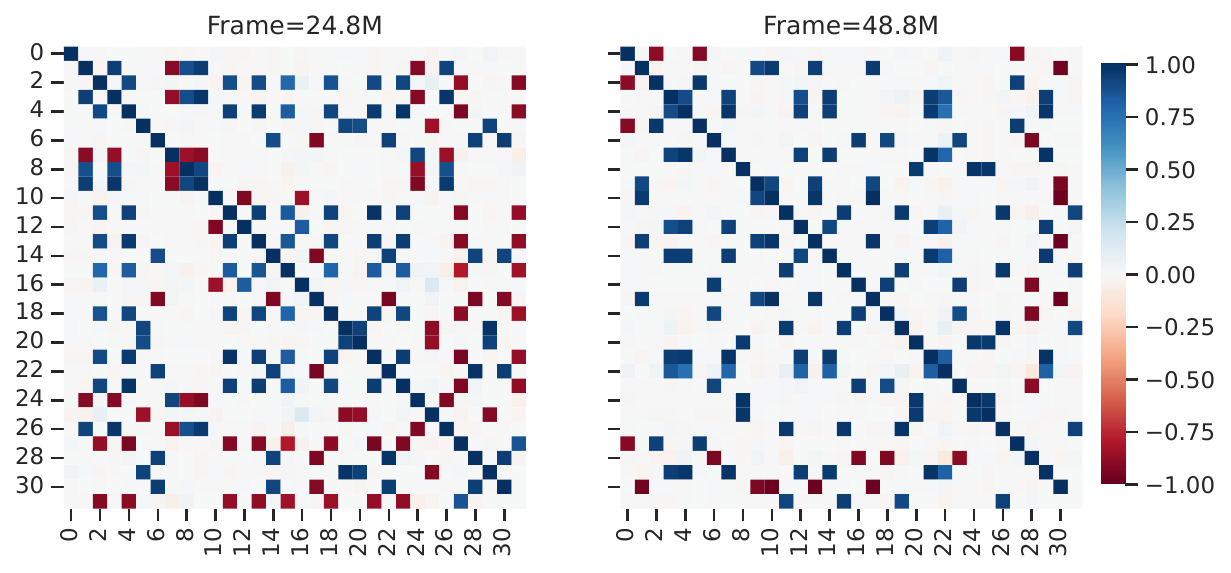}}
\subcaptionbox{$\elephant$}{\includegraphics[width=\figwidthtwo]{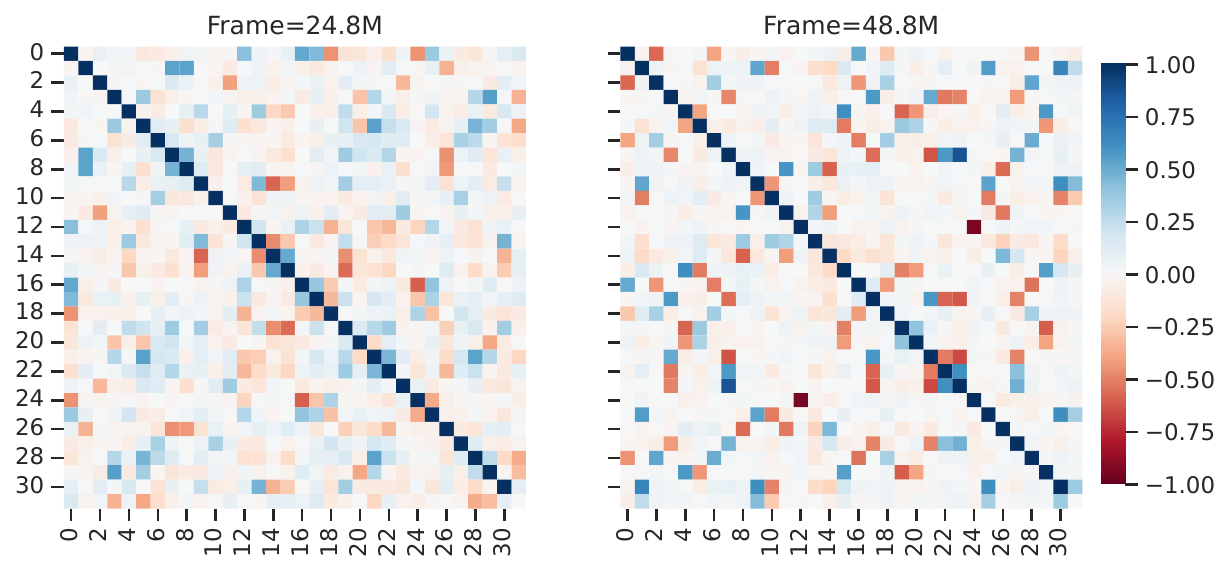}}
\hfill
\caption{Heatmaps of gradient covariance matrices for training DQN in Bowling.}
\label{fig:atari_grad_dqn:Bowling}
\hfill \\
\subcaptionbox{$\relu$}{\includegraphics[width=\figwidthtwo]{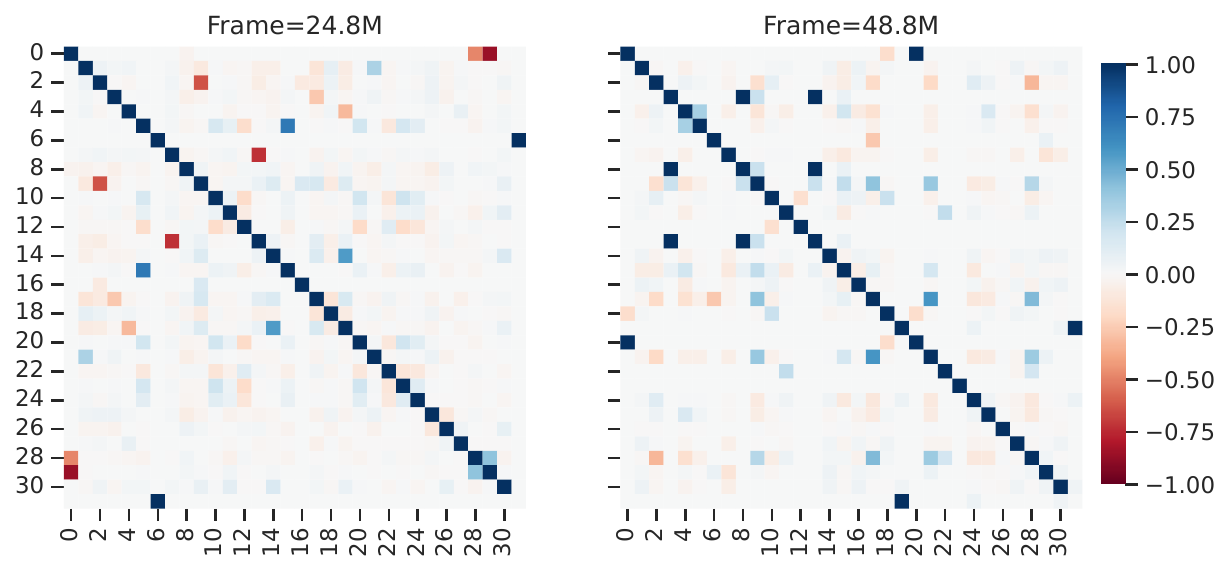}}
\subcaptionbox{$\tanh$}{\includegraphics[width=\figwidthtwo]{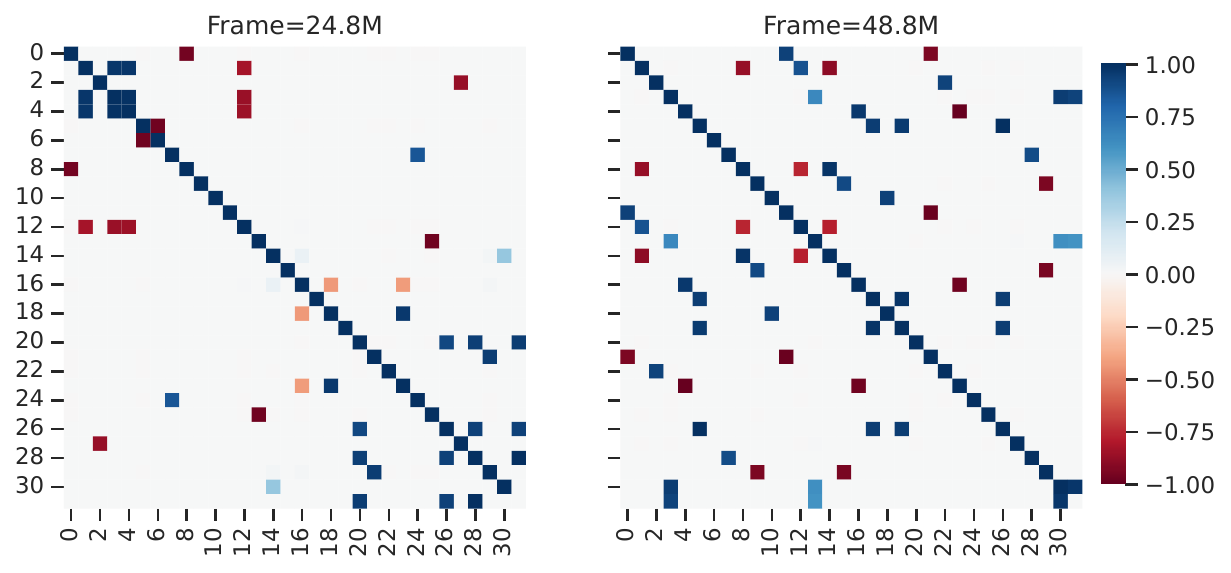}}
\subcaptionbox{$\maxout$}{\includegraphics[width=\figwidthtwo]{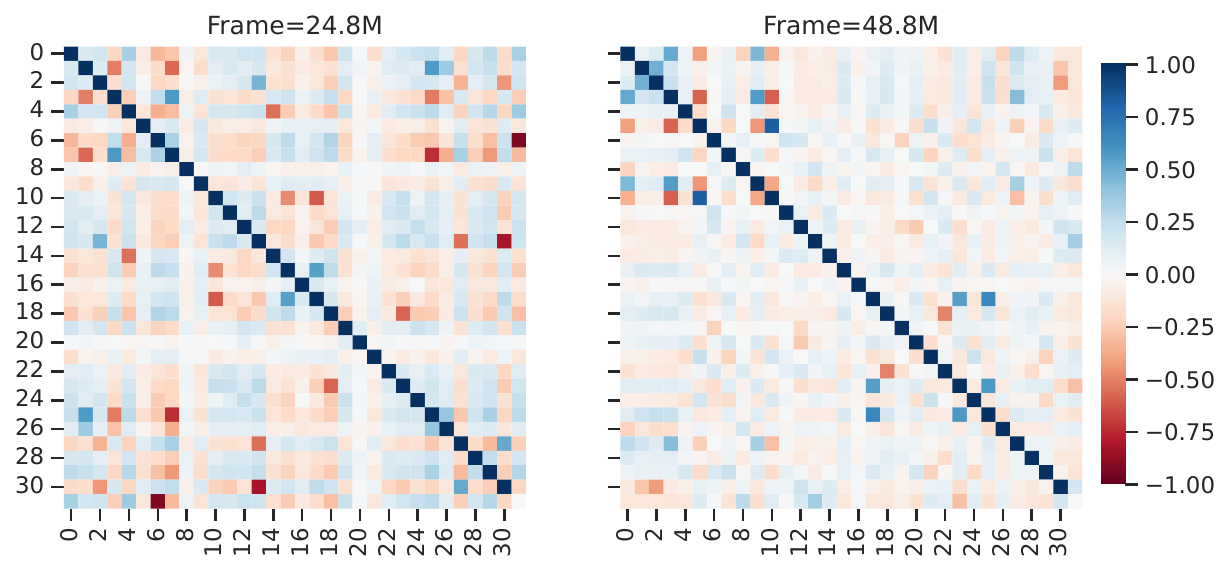}}
\subcaptionbox{$\lwta$}{\includegraphics[width=\figwidthtwo]{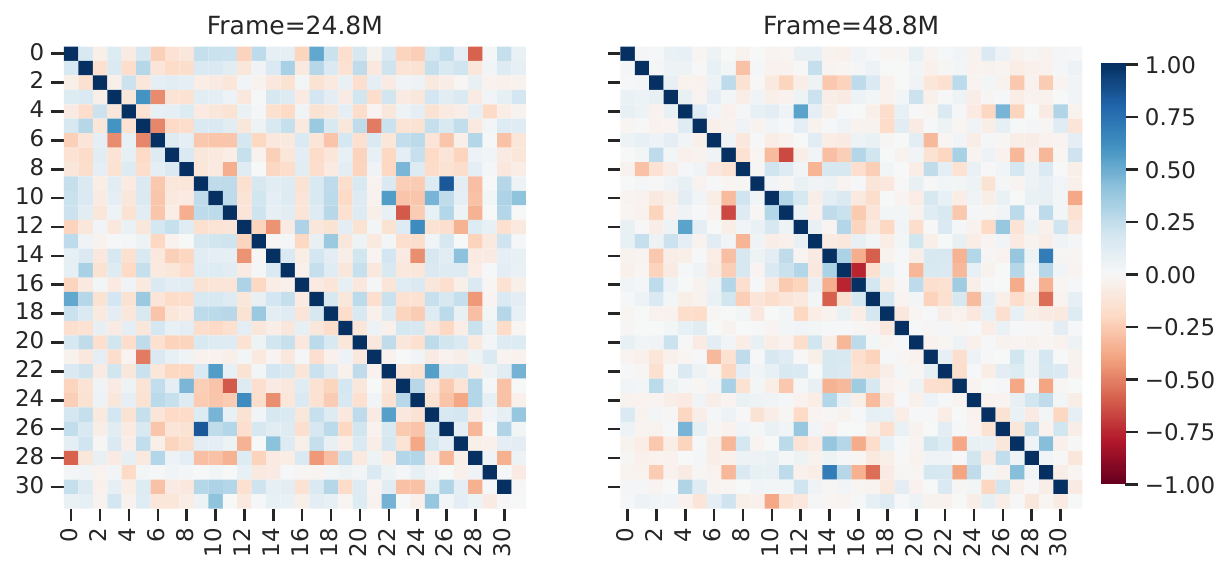}}
\subcaptionbox{$\fta$}{\includegraphics[width=\figwidthtwo]{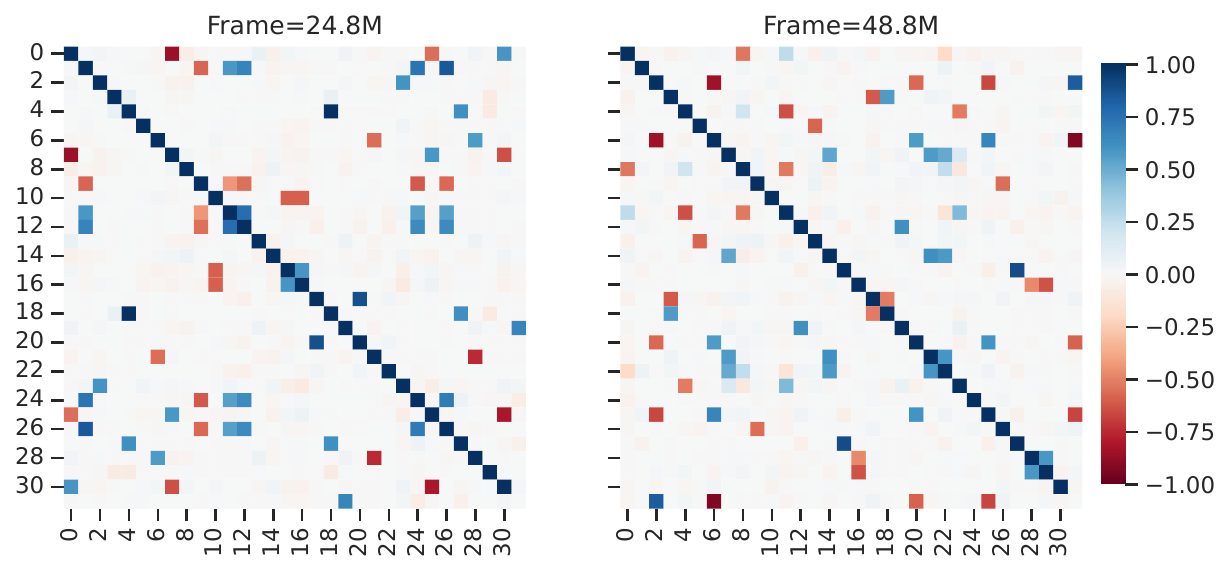}}
\subcaptionbox{$\elephant$}{\includegraphics[width=\figwidthtwo]{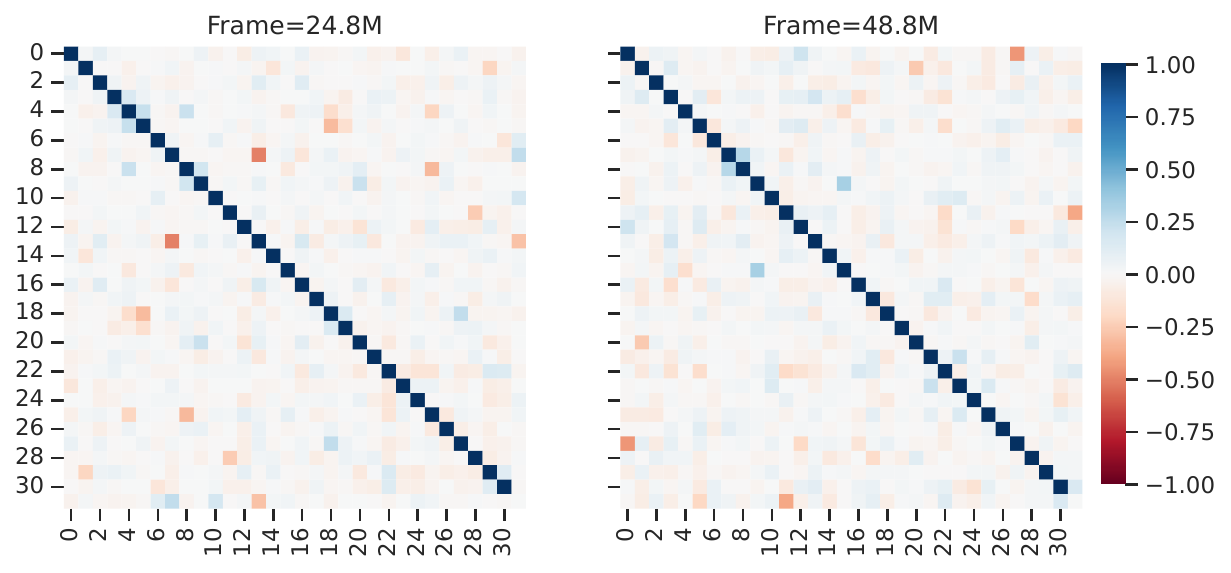}}
\hfill
\caption{Heatmaps of gradient covariance matrices for training DQN in DoubleDunk.}
\label{fig:atari_grad_dqn:DoubleDunk}
\end{figure}

\begin{figure}[htbp]
\vspace{-2em}
\centering
\subcaptionbox{$\relu$}{\includegraphics[width=\figwidthtwo]{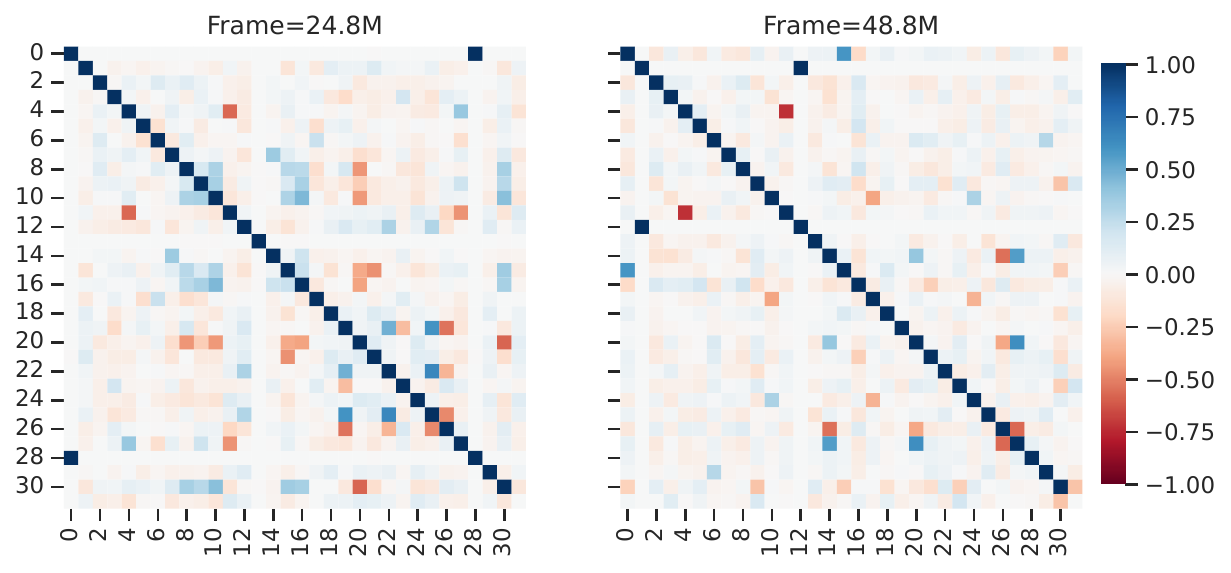}}
\subcaptionbox{$\tanh$}{\includegraphics[width=\figwidthtwo]{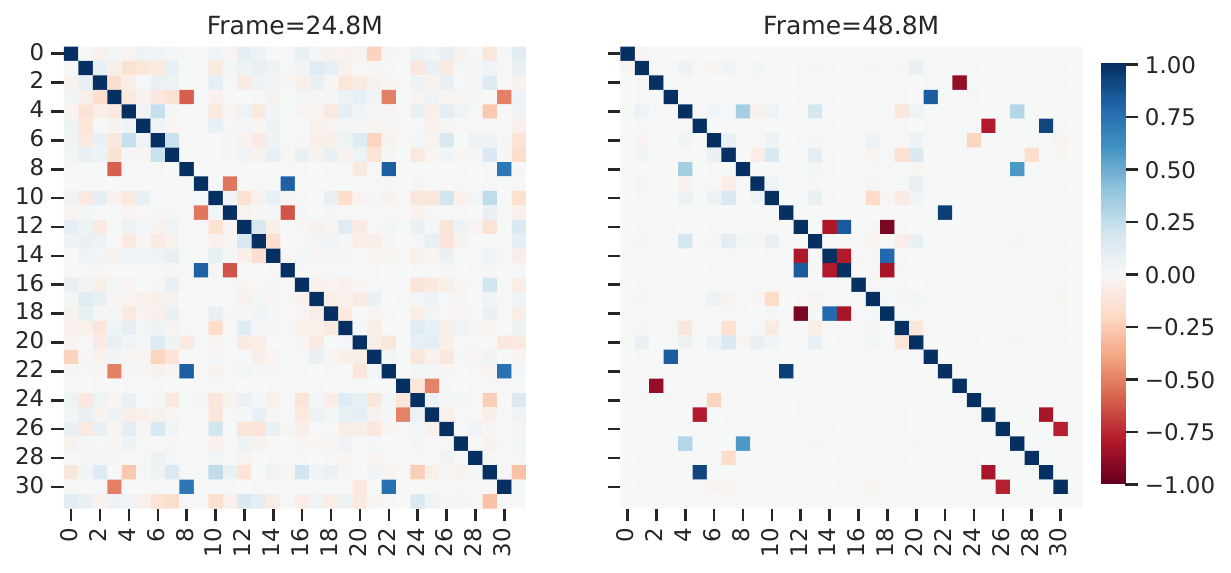}}
\subcaptionbox{$\maxout$}{\includegraphics[width=\figwidthtwo]{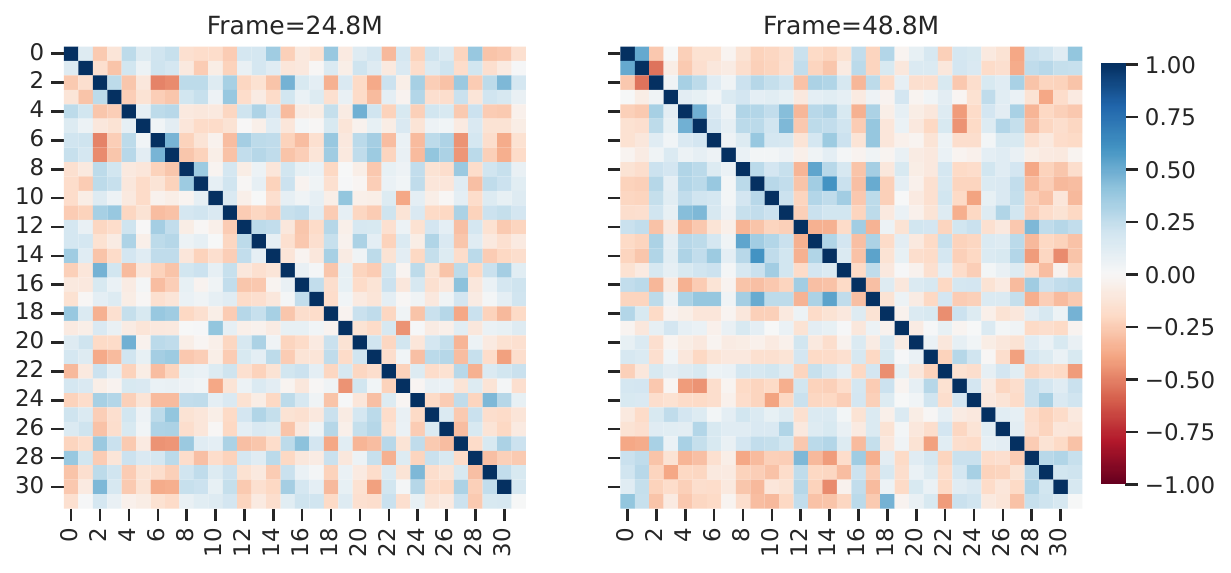}}
\subcaptionbox{$\lwta$}{\includegraphics[width=\figwidthtwo]{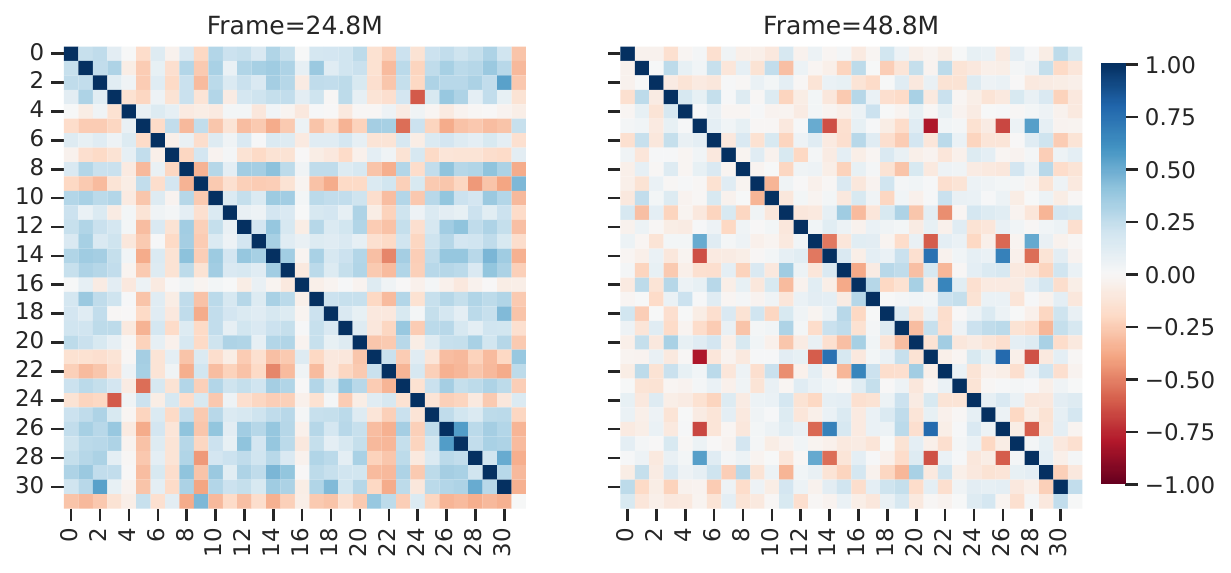}}
\subcaptionbox{$\fta$}{\includegraphics[width=\figwidthtwo]{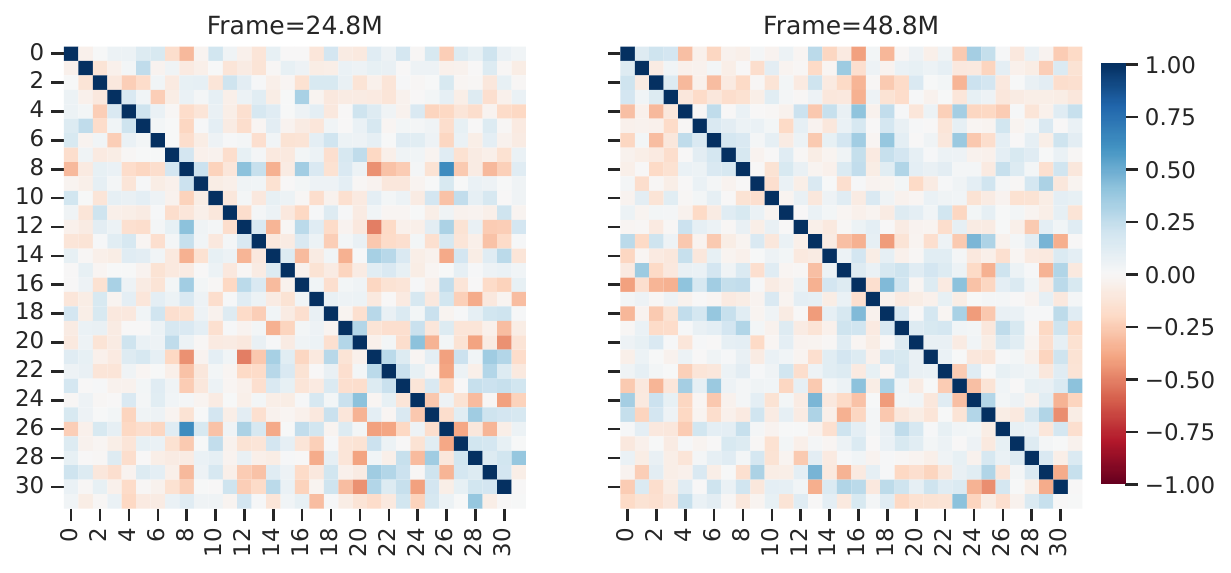}}
\subcaptionbox{$\elephant$}{\includegraphics[width=\figwidthtwo]{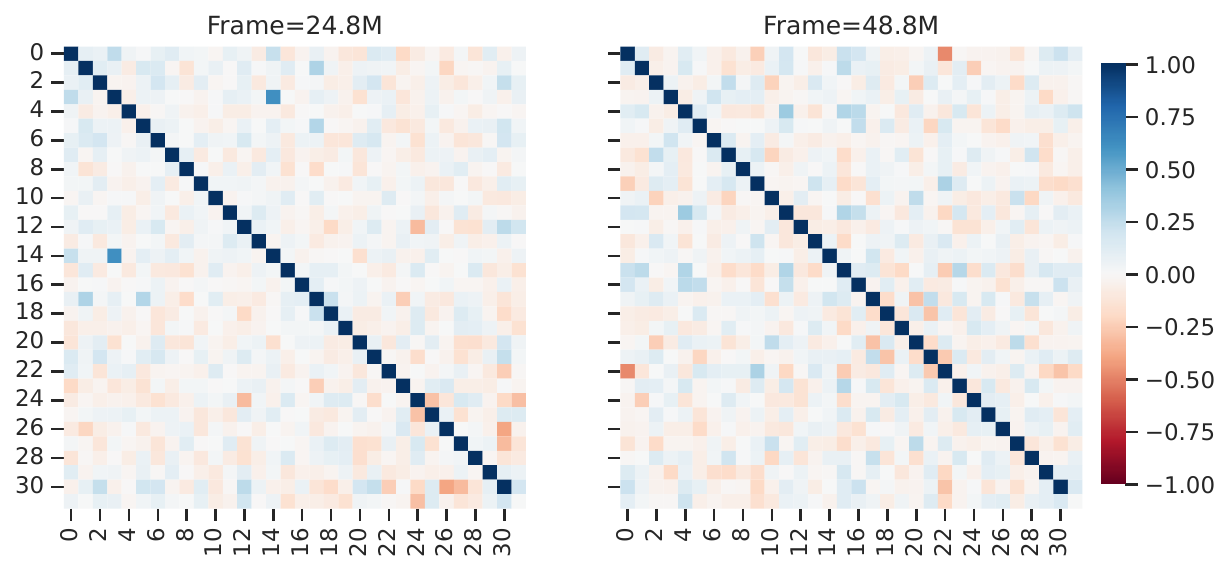}}
\hfill
\caption{Heatmaps of gradient covariance matrices for training DQN with in Frostbite.}
\label{fig:atari_grad_dqn:Frostbite}
\hfill \\
\subcaptionbox{$\relu$}{\includegraphics[width=\figwidthtwo]{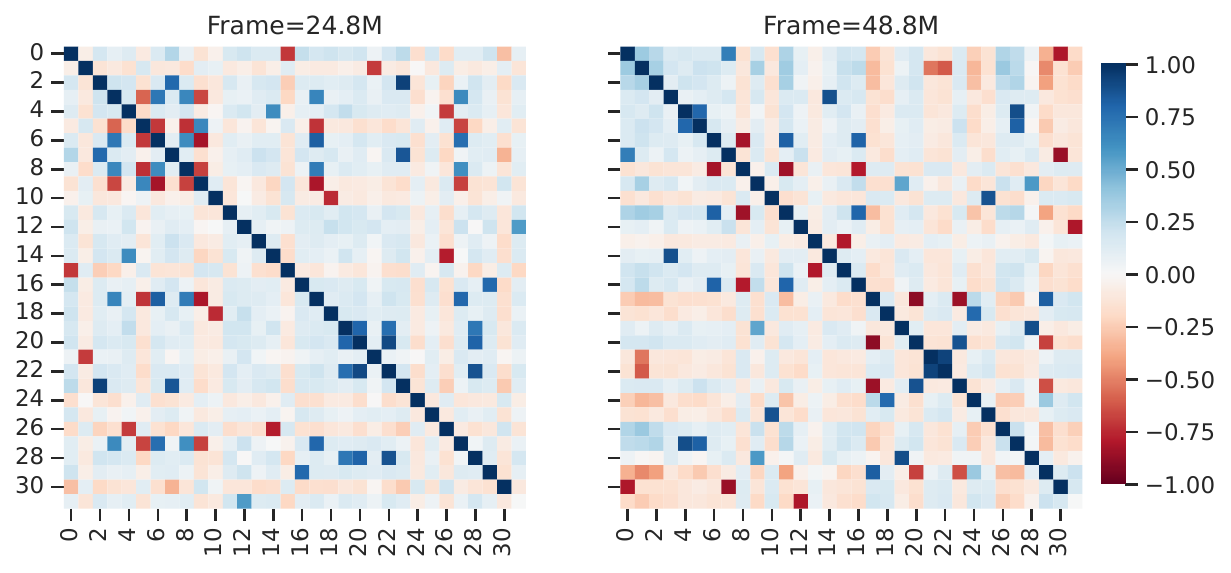}}
\subcaptionbox{$\tanh$}{\includegraphics[width=\figwidthtwo]{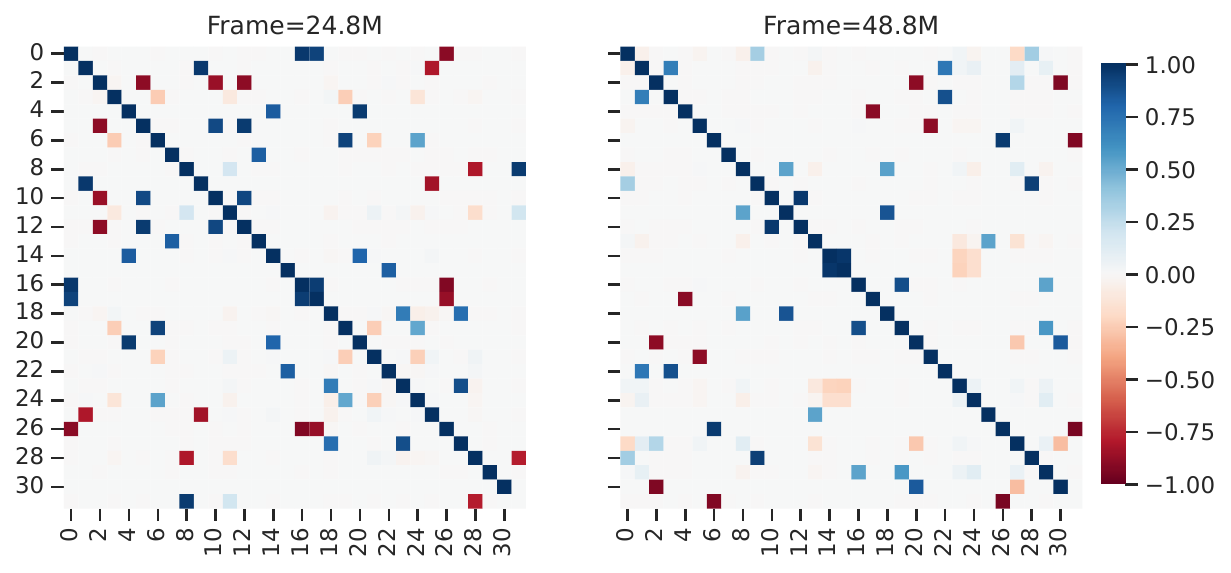}}
\subcaptionbox{$\maxout$}{\includegraphics[width=\figwidthtwo]{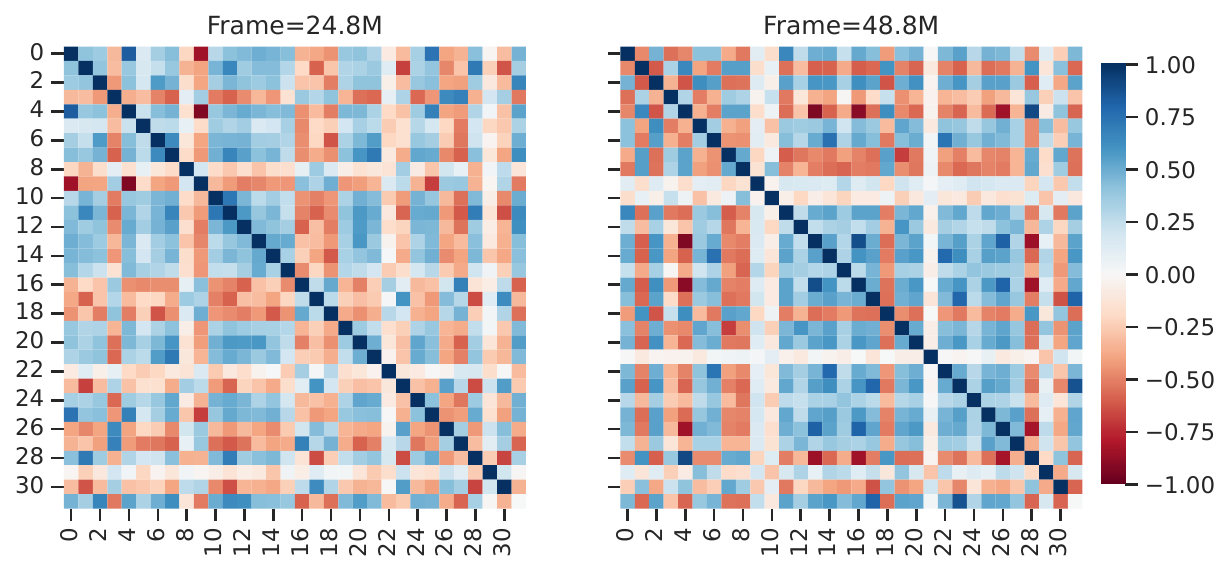}}
\subcaptionbox{$\lwta$}{\includegraphics[width=\figwidthtwo]{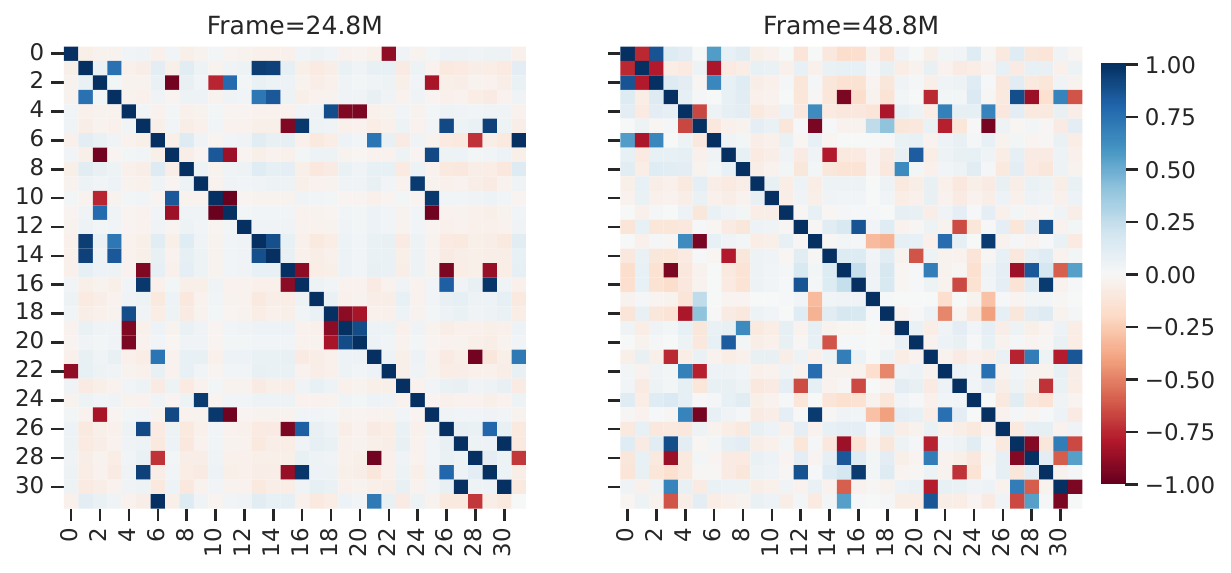}}
\subcaptionbox{$\fta$}{\includegraphics[width=\figwidthtwo]{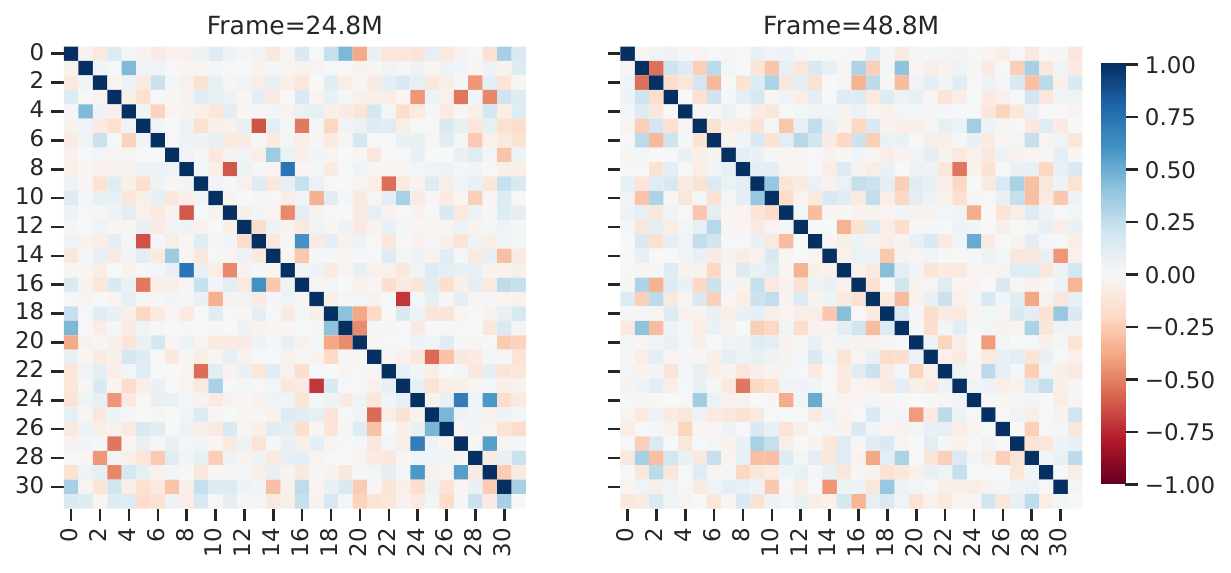}}
\subcaptionbox{$\elephant$}{\includegraphics[width=\figwidthtwo]{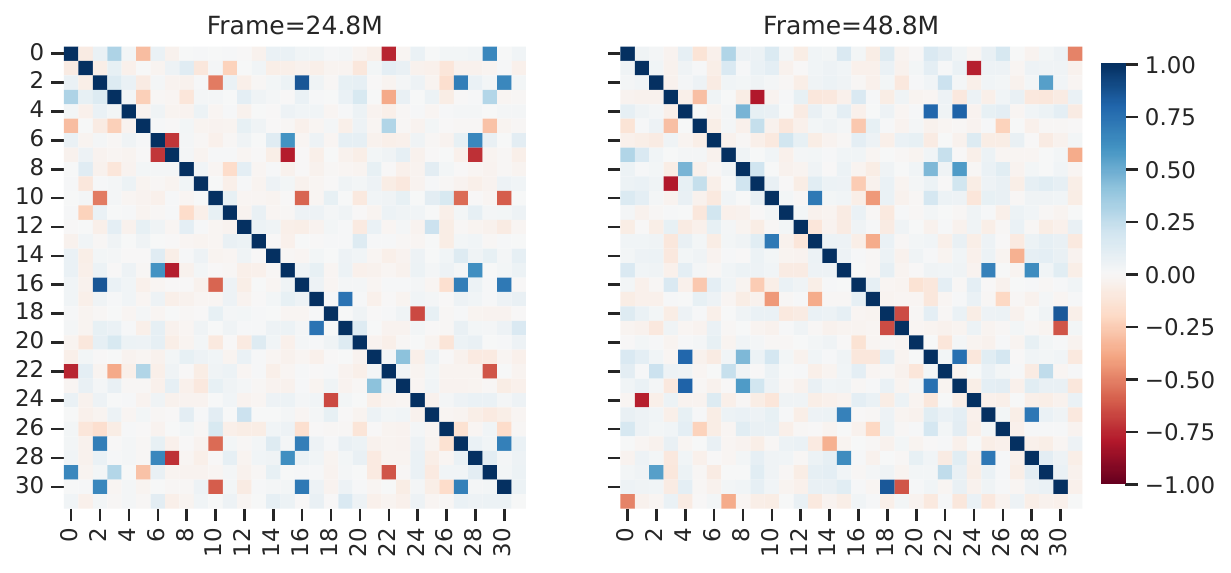}}
\hfill
\caption{Heatmaps of gradient covariance matrices for training DQN in Kung-Fu Master.}
\label{fig:atari_grad_dqn:KungFuMaster}
\end{figure}

\begin{figure}[htbp]
\vspace{-2em}
\centering
\subcaptionbox{$\relu$}{\includegraphics[width=\figwidthtwo]{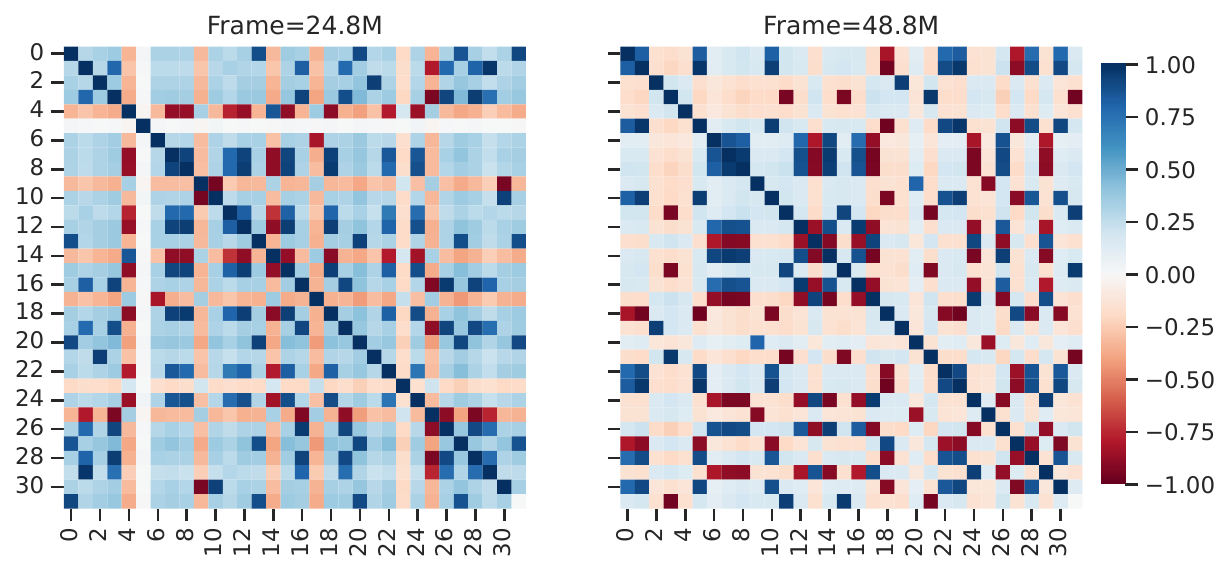}}
\subcaptionbox{$\tanh$}{\includegraphics[width=\figwidthtwo]{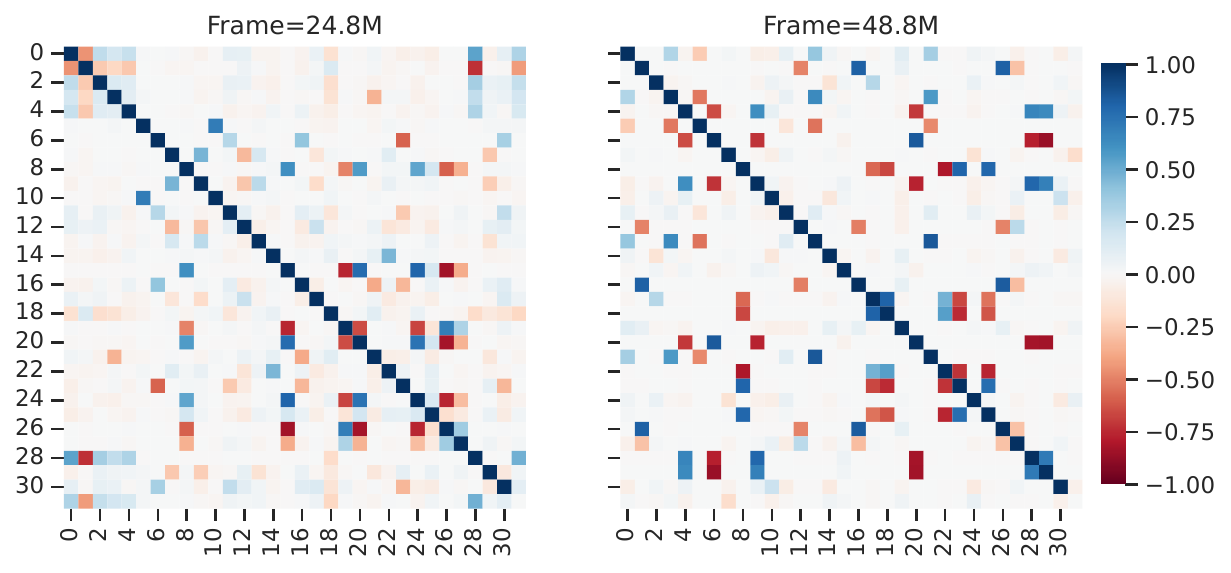}}
\subcaptionbox{$\maxout$}{\includegraphics[width=\figwidthtwo]{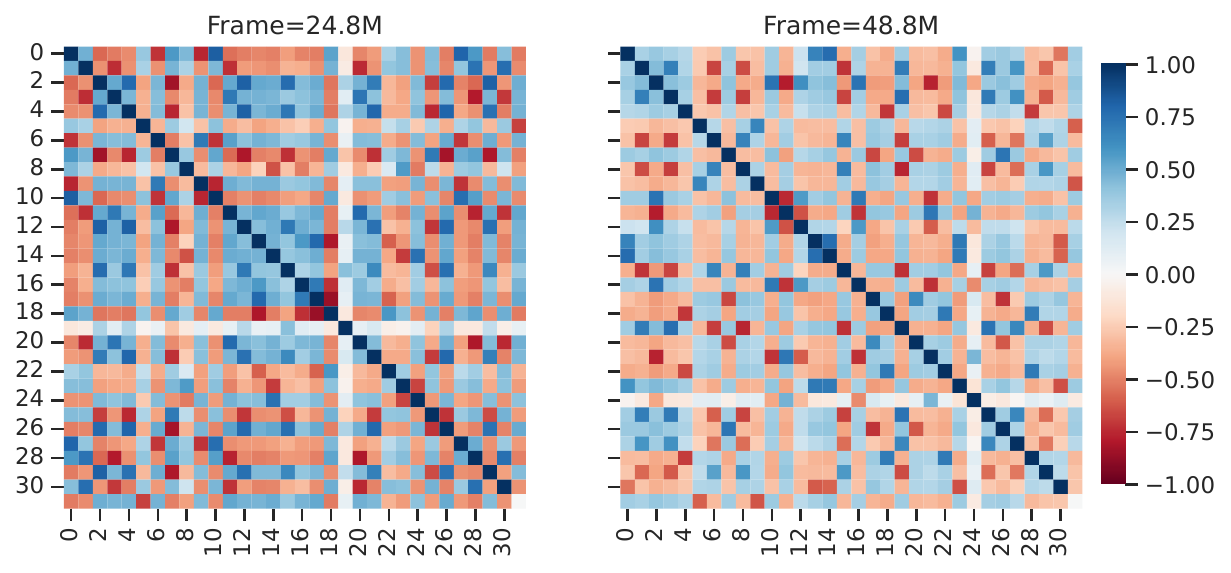}}
\subcaptionbox{$\lwta$}{\includegraphics[width=\figwidthtwo]{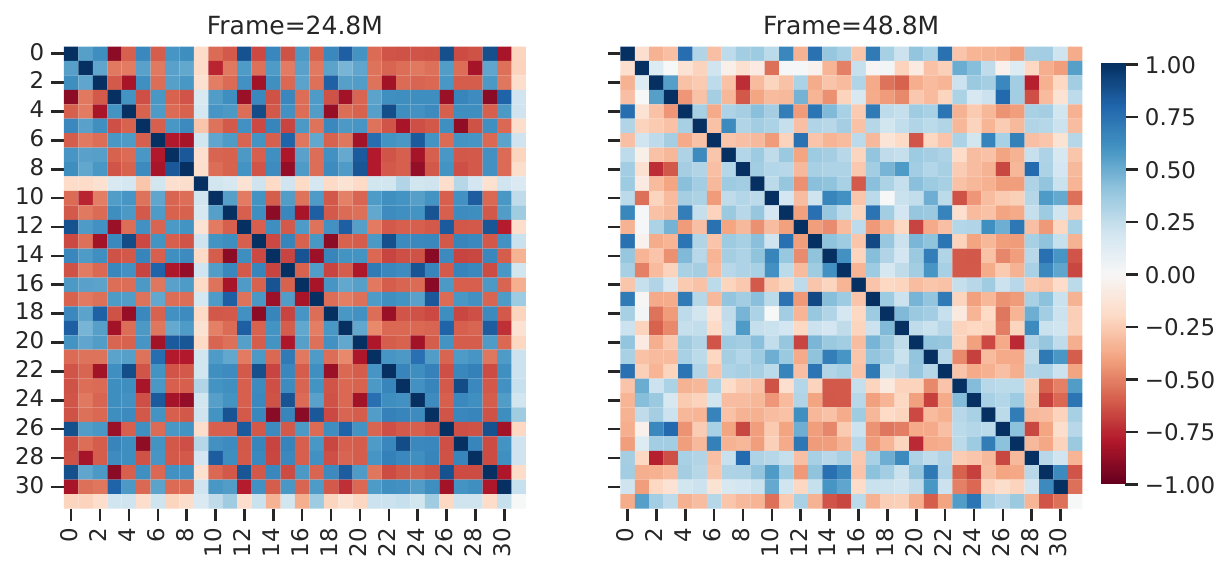}}
\subcaptionbox{$\fta$}{\includegraphics[width=\figwidthtwo]{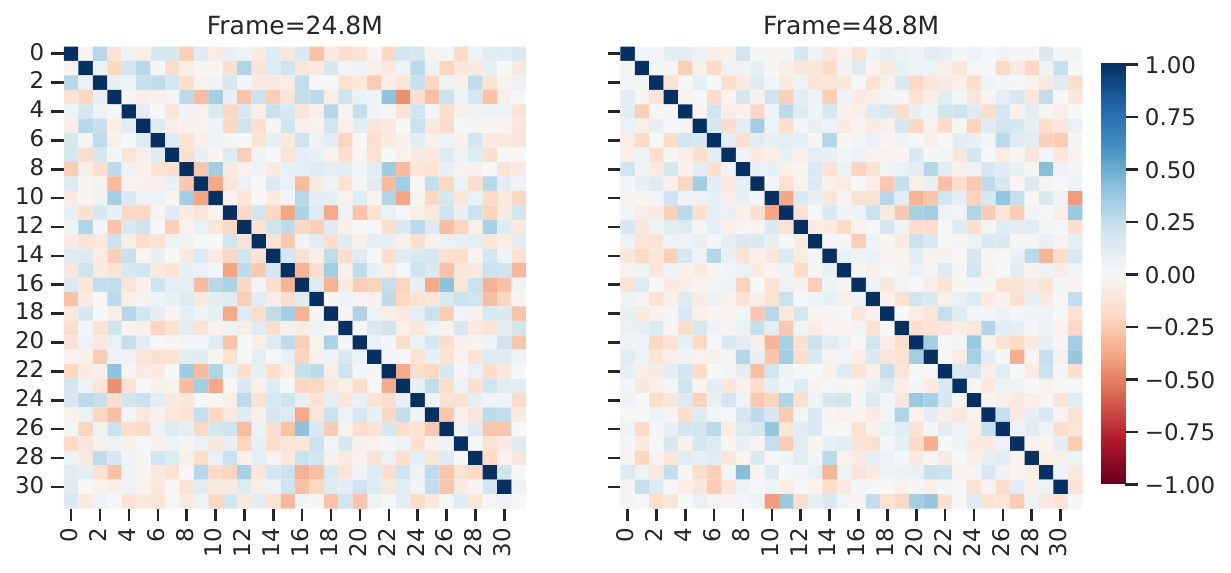}}
\subcaptionbox{$\elephant$}{\includegraphics[width=\figwidthtwo]{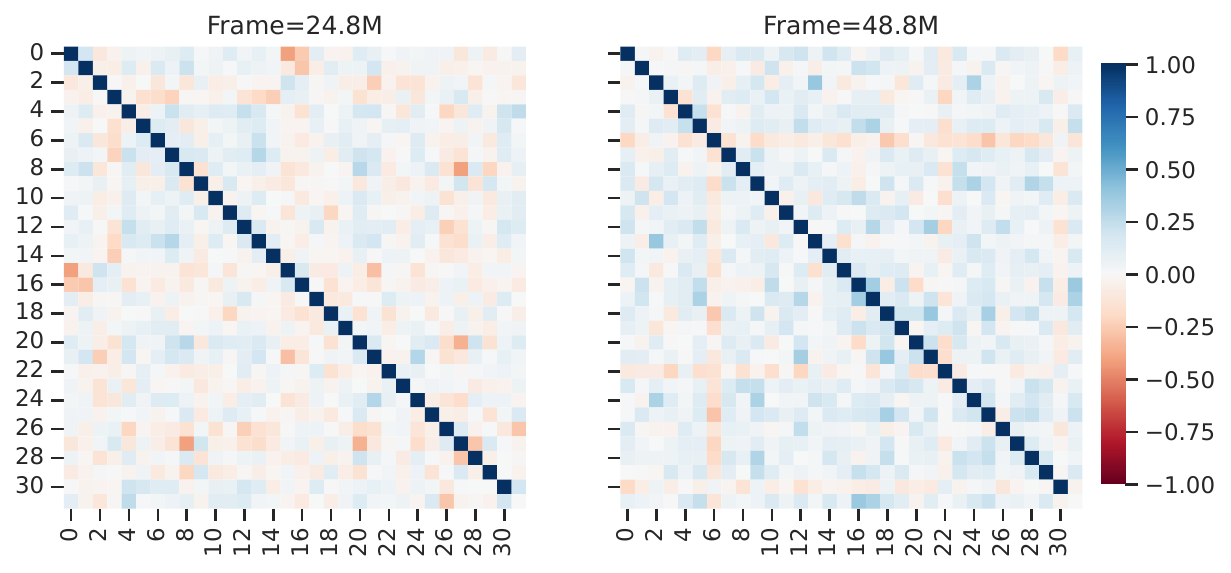}}
\hfill
\caption{Heatmaps of gradient covariance matrices for training DQN in Name This Game.}
\label{fig:atari_grad_dqn:NameThisGame}
\hfill \\
\subcaptionbox{$\relu$}{\includegraphics[width=\figwidthtwo]{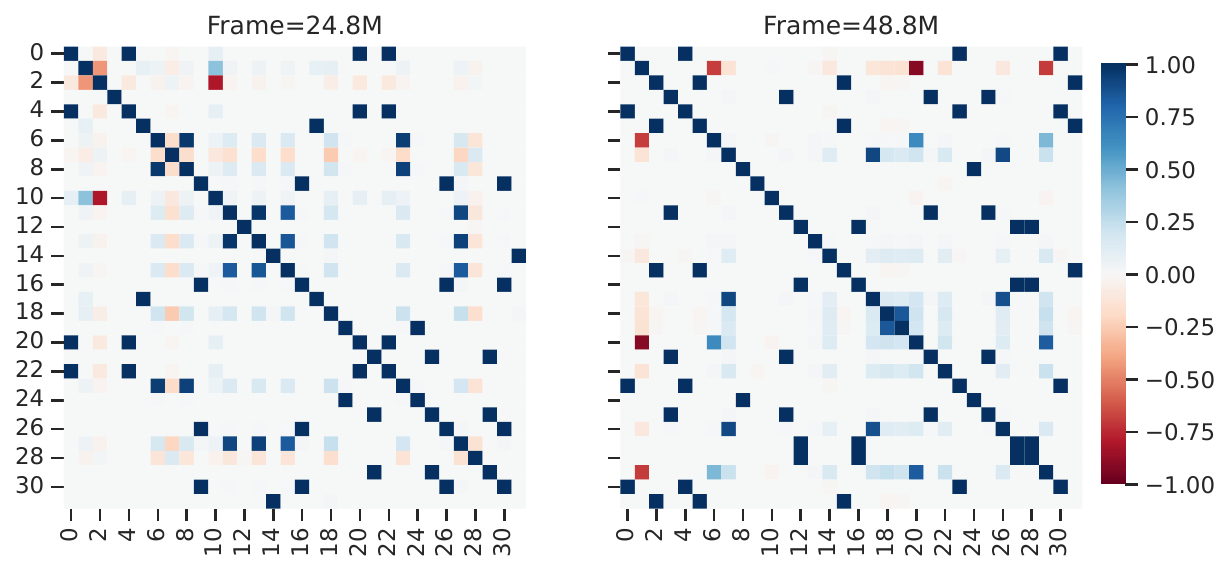}}
\subcaptionbox{$\tanh$}{\includegraphics[width=\figwidthtwo]{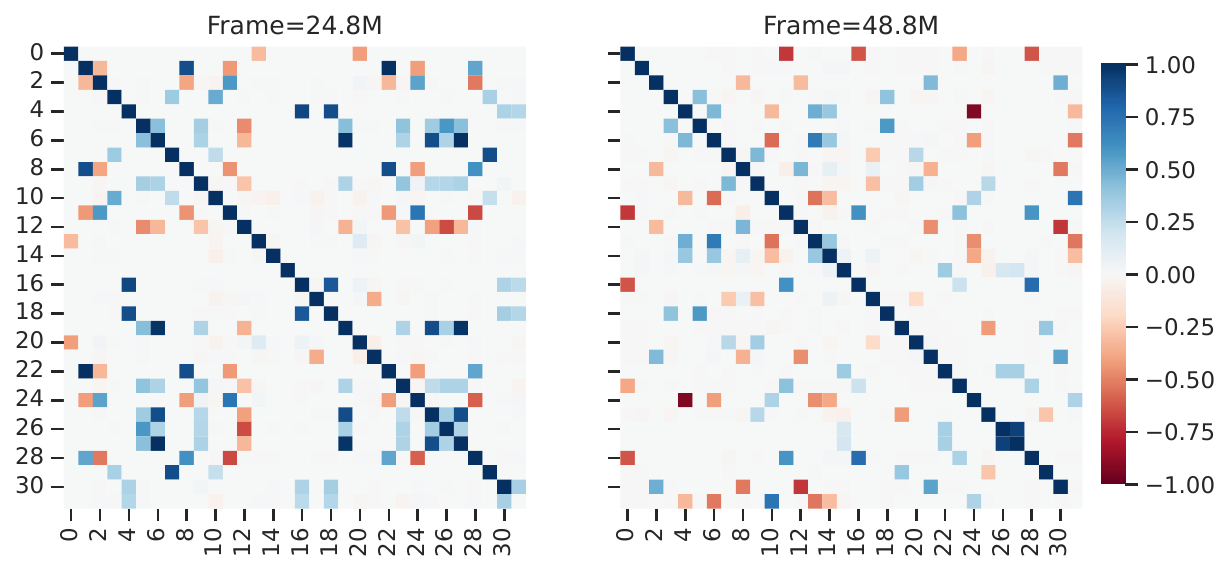}}
\subcaptionbox{$\maxout$}{\includegraphics[width=\figwidthtwo]{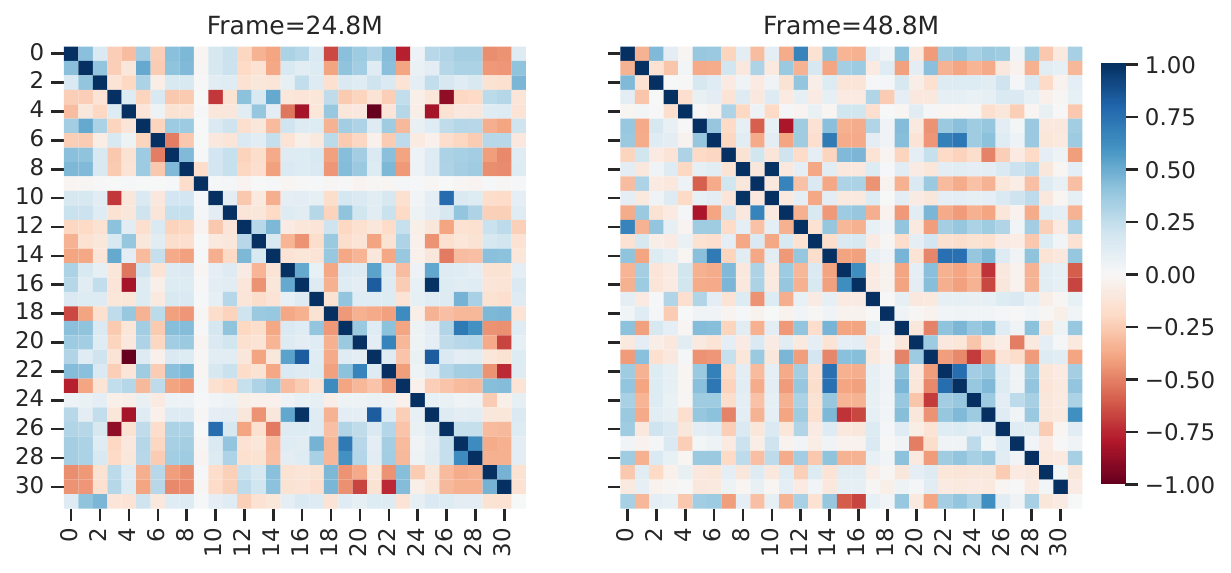}}
\subcaptionbox{$\lwta$}{\includegraphics[width=\figwidthtwo]{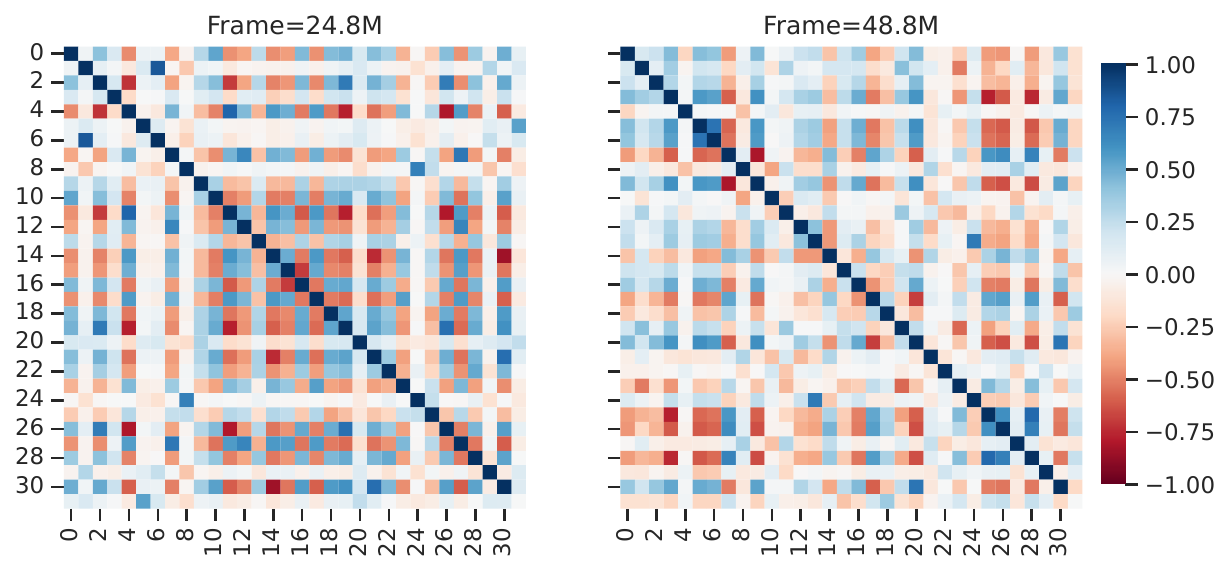}}
\subcaptionbox{$\fta$}{\includegraphics[width=\figwidthtwo]{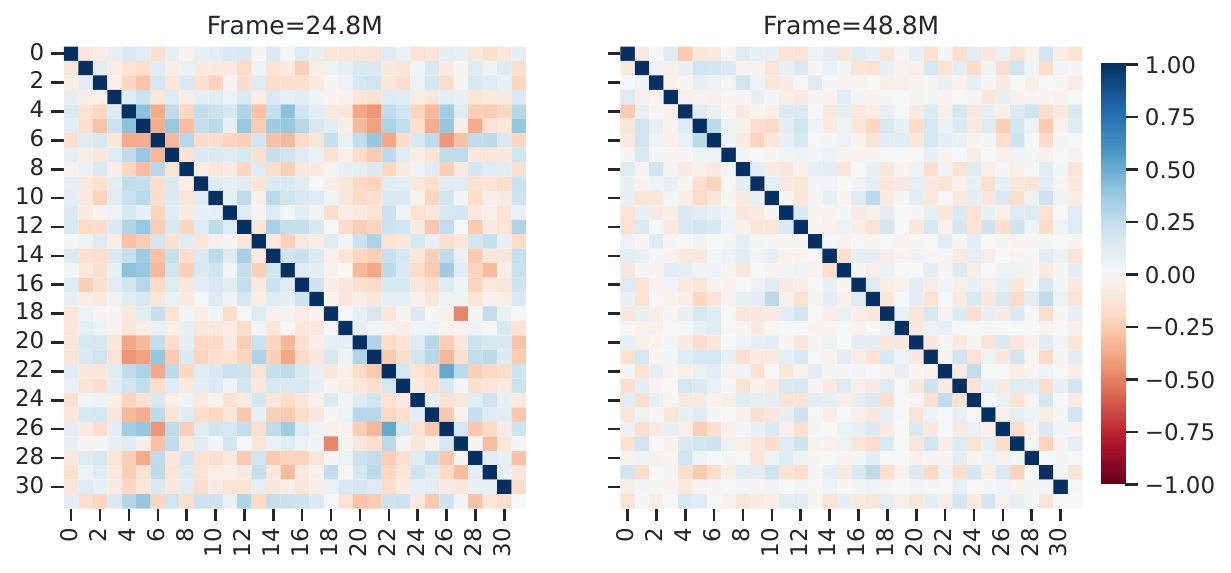}}
\subcaptionbox{$\elephant$}{\includegraphics[width=\figwidthtwo]{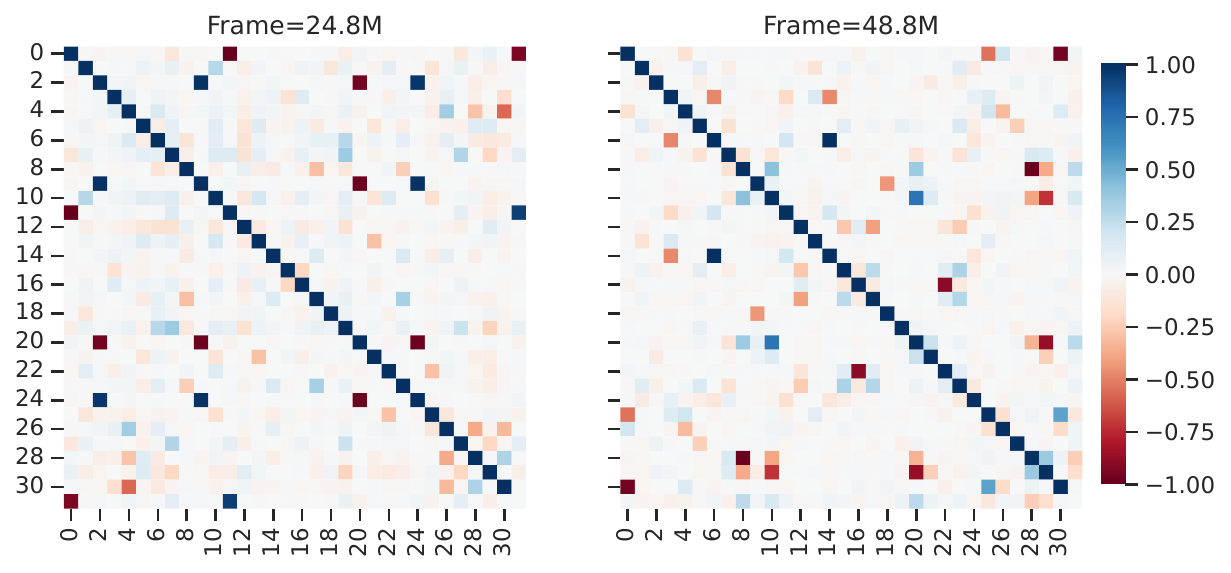}}
\hfill
\caption{Heatmaps of gradient covariance matrices for training DQN in Phoenix.}
\label{fig:atari_grad_dqn:Phoenix}
\end{figure}

\begin{figure}[htbp]
\vspace{-2em}
\centering
\subcaptionbox{$\relu$}{\includegraphics[width=\figwidthtwo]{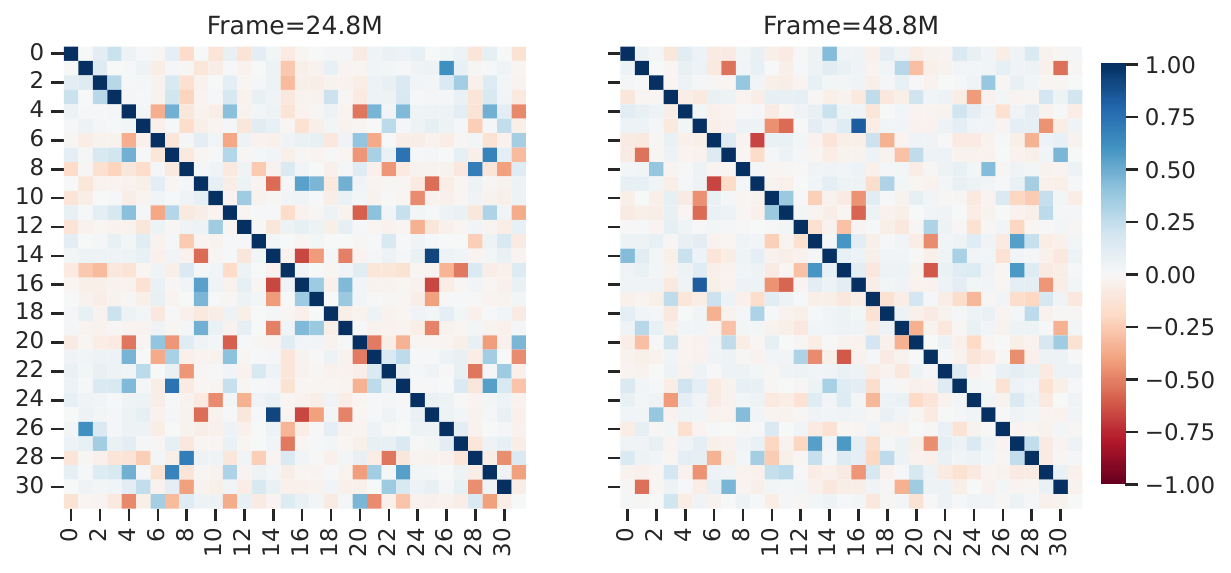}}
\subcaptionbox{$\tanh$}{\includegraphics[width=\figwidthtwo]{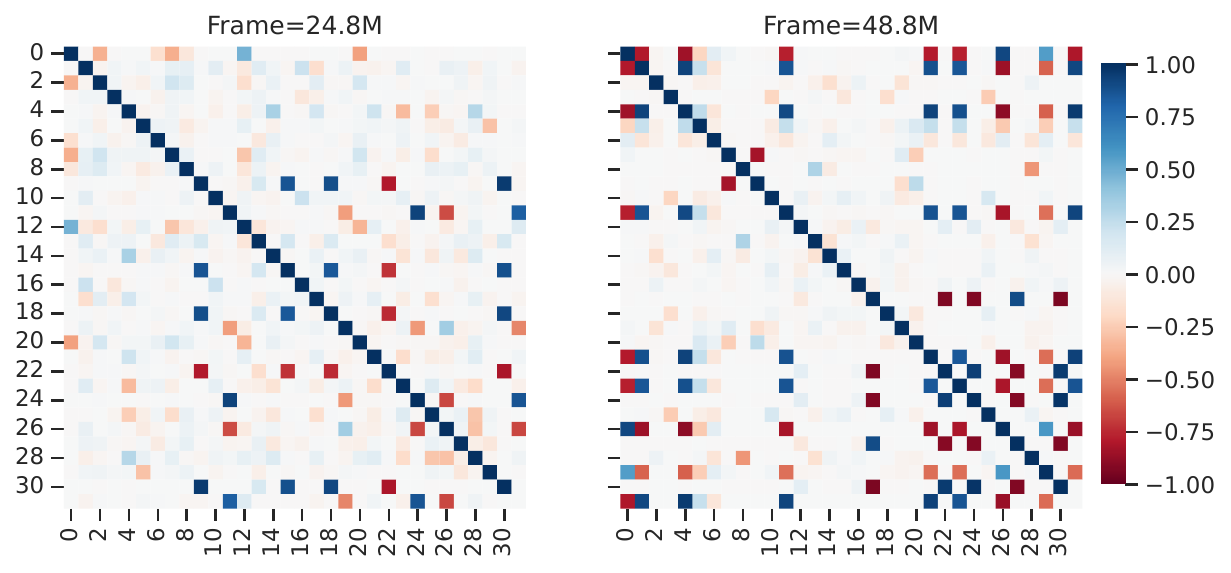}}
\subcaptionbox{$\maxout$}{\includegraphics[width=\figwidthtwo]{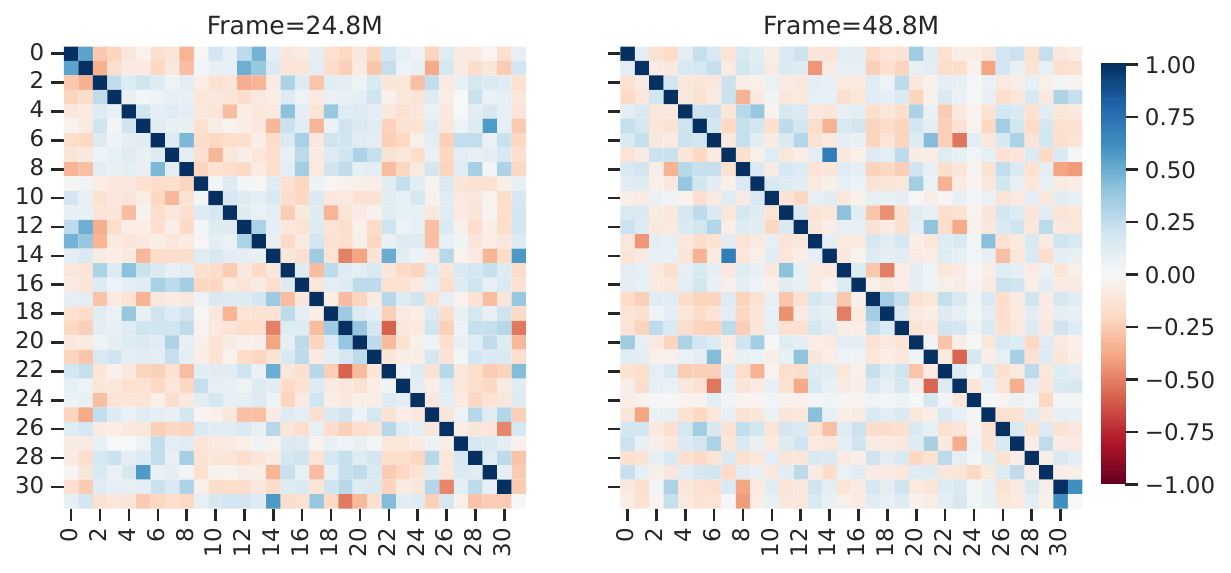}}
\subcaptionbox{$\lwta$}{\includegraphics[width=\figwidthtwo]{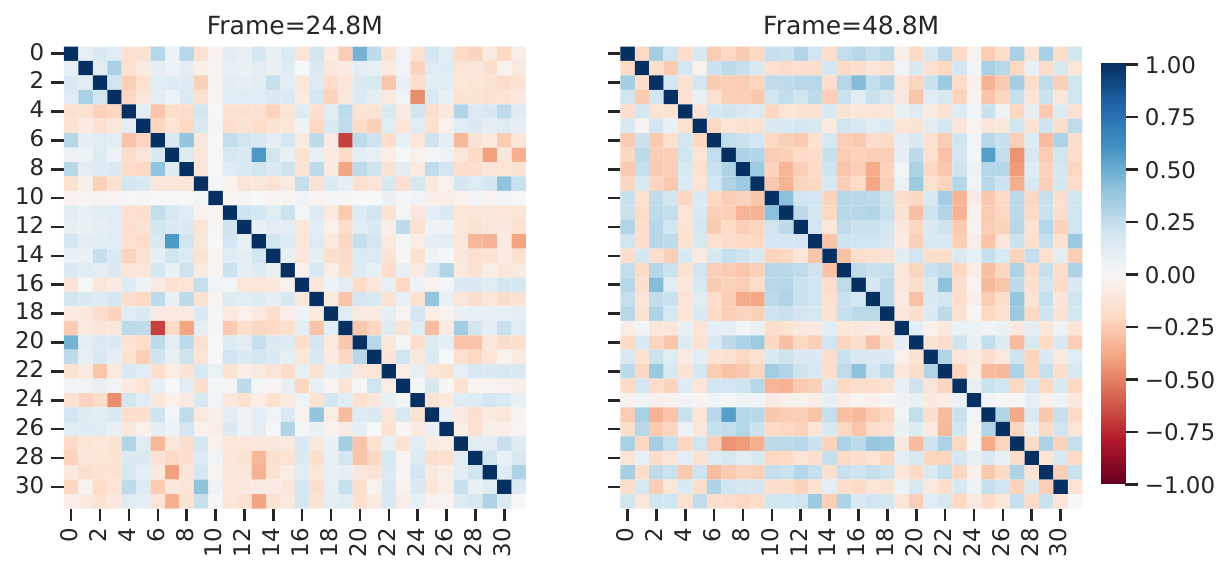}}
\subcaptionbox{$\fta$}{\includegraphics[width=\figwidthtwo]{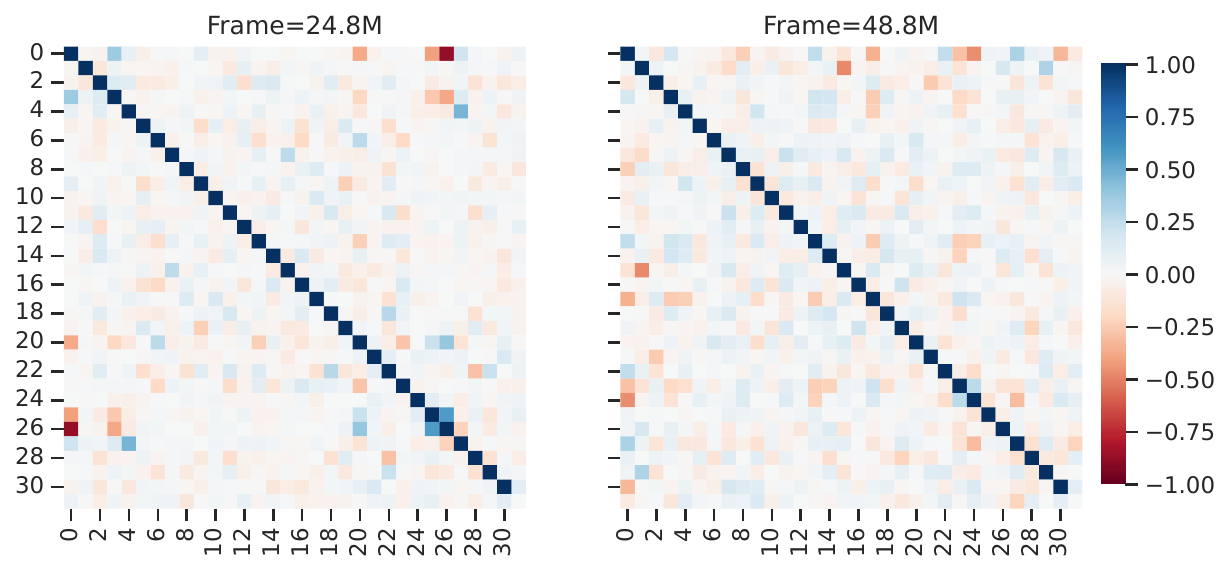}}
\subcaptionbox{$\elephant$}{\includegraphics[width=\figwidthtwo]{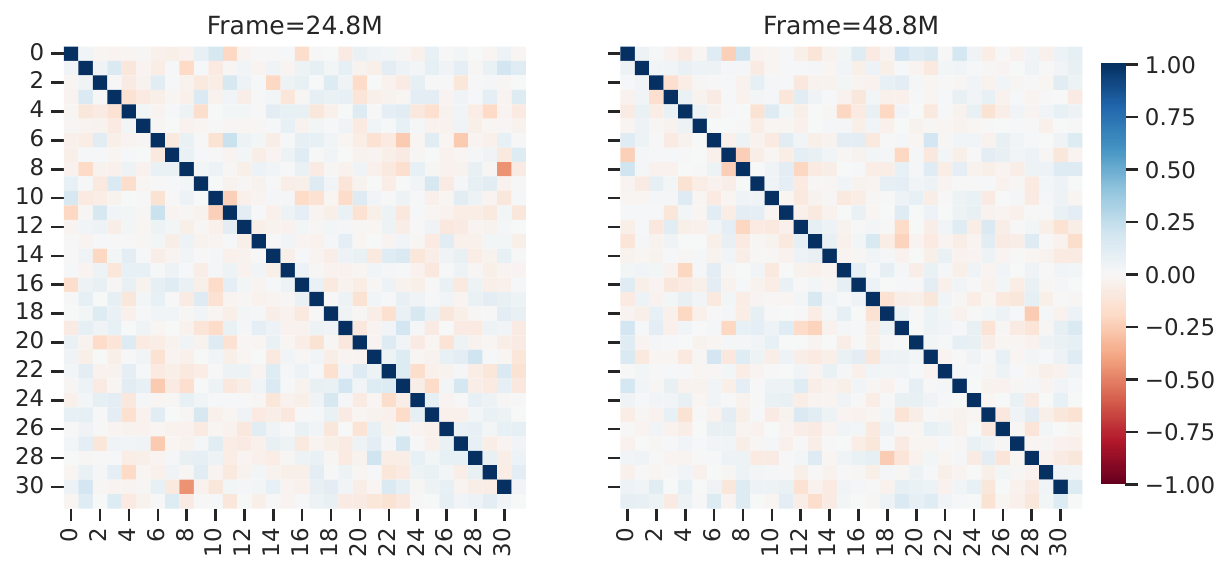}}
\hfill
\caption{Heatmaps of gradient covariance matrices for training DQN with in Q*bert.}
\label{fig:atari_grad_dqn:Qbert}
\hfill \\
\subcaptionbox{$\relu$}{\includegraphics[width=\figwidthtwo]{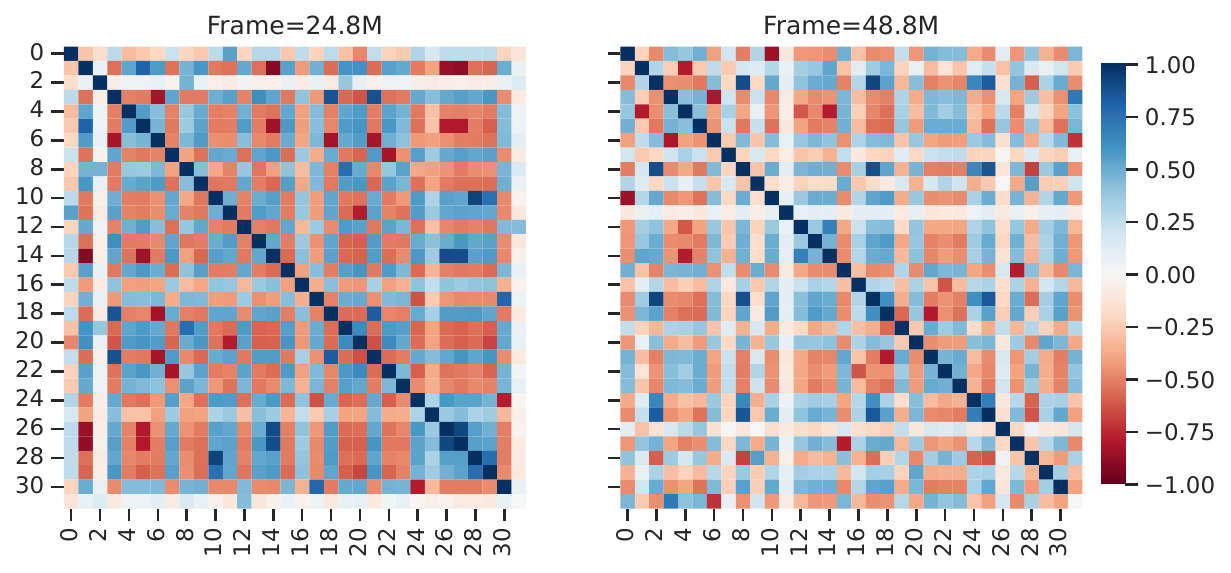}}
\subcaptionbox{$\tanh$}{\includegraphics[width=\figwidthtwo]{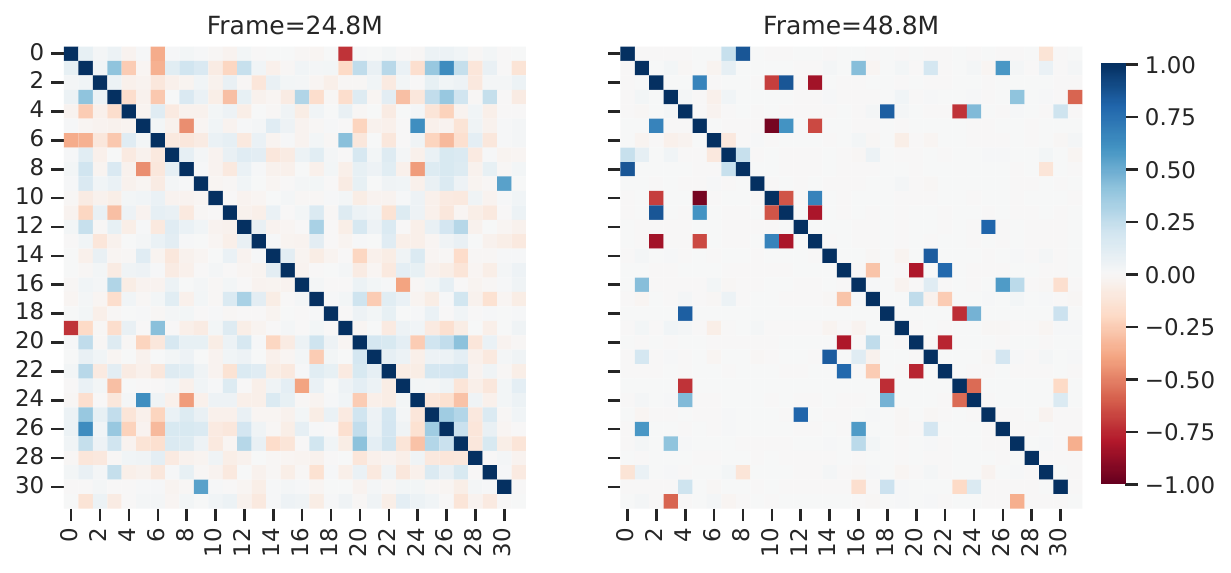}}
\subcaptionbox{$\maxout$}{\includegraphics[width=\figwidthtwo]{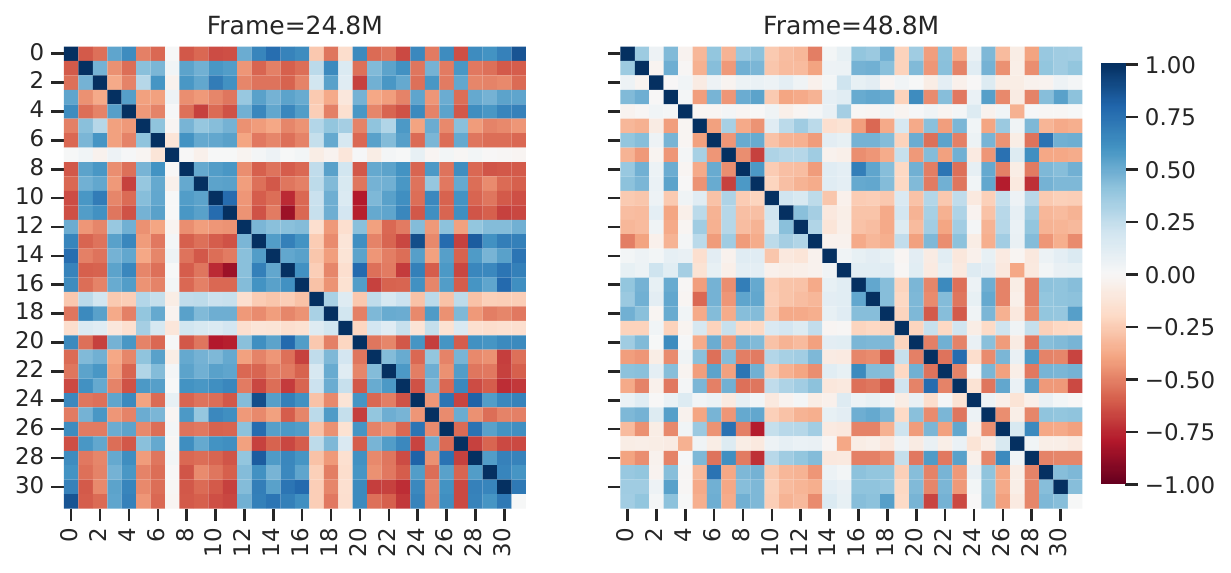}}
\subcaptionbox{$\lwta$}{\includegraphics[width=\figwidthtwo]{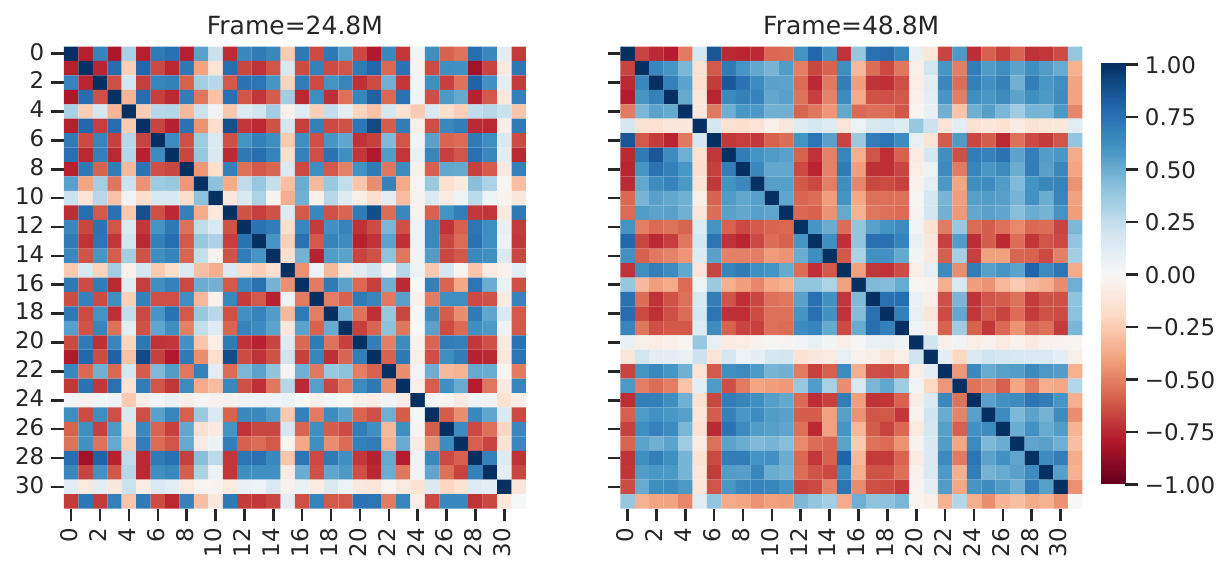}}
\subcaptionbox{$\fta$}{\includegraphics[width=\figwidthtwo]{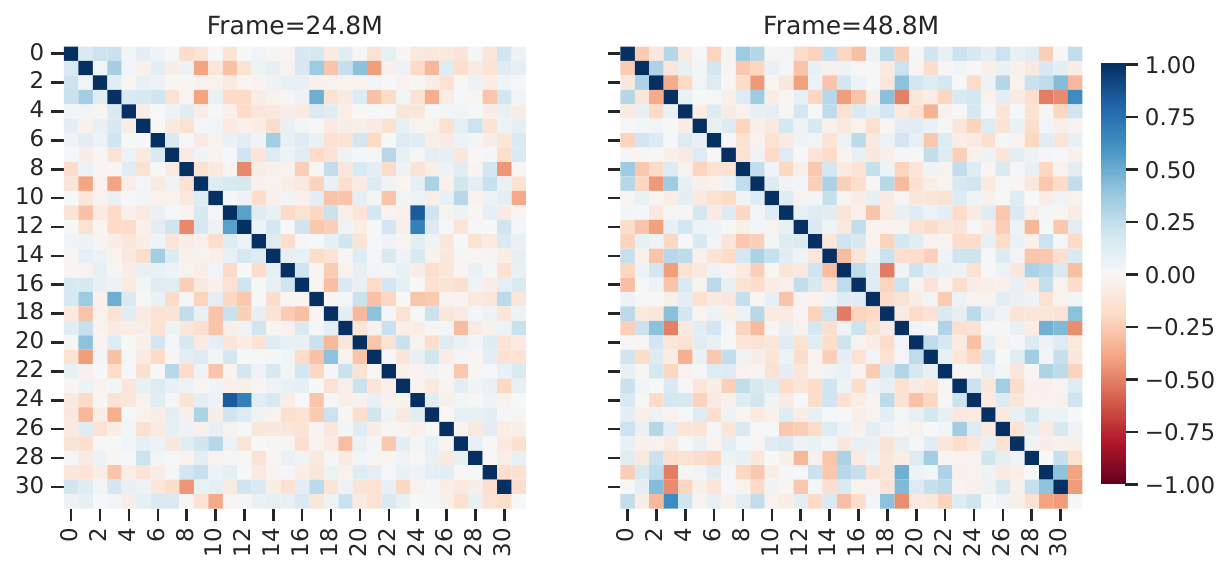}}
\subcaptionbox{$\elephant$}{\includegraphics[width=\figwidthtwo]{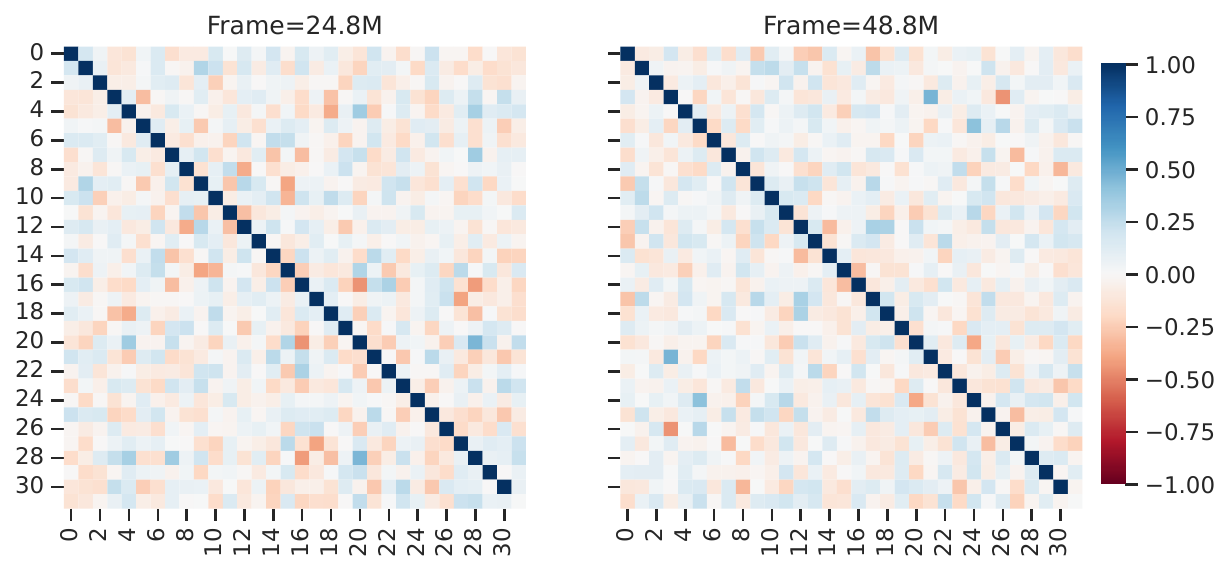}}
\hfill
\caption{Heatmaps of gradient covariance matrices for training DQN with in River Raid.}
\label{fig:atari_grad_dqn:Riverraid}
\end{figure}

\section{Additional Experiments}

\newpage
\subsection{Class Incremental Learning}\label{appendix:clari}

In addition to RL, our method can be applied to classification tasks as well by simply replacing classical activation functions with elephant activation functions in a classification model.
Though our model is agnostic to data distributions, we test it in class incremental learning in order to compare it with previous methods.
Moreover, we adopt a stricter variation of the continual learning setting by adding the following restrictions:
(1) same as streaming learning, each sample only occurs once during training,
(2) task boundaries are not provided or inferred~\citep{aljundi2019task},
(3) neither pre-training nor a fixed feature encoder is allowed~\citep{wolfe2022cold},
and (4) no buffer is allowed to store old task information in any form, such as training samples and gradients.

Surprisingly, we find no methods are designed for or have been tested in the above setting.
As a variant of EWC~\citep{kirkpatrick2017overcoming}, Online EWC~\citep{schwarz2018progress} almost meets these requirements, although it still requires task boundaries.
To overcome this issue, we propose Streaming EWC as one of the baselines, which updates the fisher information matrix after every training sample.
Streaming EWC can be viewed as a special case of Online EWC, treating each training sample as a new task.
Besides Streaming EWC, we consider SDMLP~\citep{bricken2023sparse} and FlyModel~\citep{shen2021algorithmic} as two strong baselines, although they require either task boundary information or multiple data passes. 
Finally, two naive baselines are included, which train MLPs and CNNs without any techniques to reduce forgetting.

We test various methods on several standard datasets --- Split MNIST~\citep{deng2012mnist}, Split CIFAR10~\citep{krizhevsky2009learning}, Split CIFAR100~\citep{krizhevsky2009learning}, and Split Tiny ImageNet~\citep{le2015tiny}.
For both MNIST and CIFAR10, we split them into five sub-datasets, and each of them contains two of the classes.

For SDMLP and FlyModel, we take results from~\citet{bricken2023sparse} directly.
For MLP and CNN, we use $\relu$ by default.
The CNN model consists of a convolution layer, a max pooling operator, and a linear layer.
For our methods, we apply $\elephant$ with $d=4$ in an MLP with one hidden layer and a simple CNN.
The resulting neural networks are named EMLP and ECNN, respectively.

\begin{table}
\vspace{-1em}
\caption{The test accuracy of various methods in class incremental learning on \textit{Split MNIST}. Higher is better. The number of neurons refers to the size of the last hidden layer, i.e., the feature dimension.}
\label{tb:mnist}
\centering
\begin{tabular}{lcccc}
\toprule
Method & Neurons & Dataset Passes & Task Boundary & Test Accuracy \\
\midrule
MLP               &  1K &  1  & \XSolidBrush & 0.665$\pm$0.014 \\
MLP+Streaming EWC &  1K &  1  & \XSolidBrush & 0.708$\pm$0.008 \\
SDMLP             &  1K & 500 & \XSolidBrush & 0.69  \\
FlyModel          &  1K &  1  &  \Checkmark  & \textbf{0.77}  \\
\textbf{EMLP (ours)} &  1K &  1  & \XSolidBrush & 0.723$\pm$0.006 \\
CNN               &  1K &  1  & \XSolidBrush & 0.659$\pm$0.016 \\
CNN+Streaming EWC &  1K &  1  & \XSolidBrush & 0.716$\pm$0.024 \\
\textbf{ECNN (ours)} &  1K &  1  & \XSolidBrush & 0.732$\pm$0.007 \\
ConvMixer            &  1K &  1  & \XSolidBrush & 0.110$\pm$0.003 \\
EConvMixer (ours)    &  1K &  1  & \XSolidBrush & 0.105$\pm$0.003 \\
\midrule
MLP               & 10K &  1  & \XSolidBrush & 0.621$\pm$0.010 \\
MLP+Streaming EWC & 10K &  1  & \XSolidBrush & 0.609$\pm$0.013 \\
SDMLP             & 10K & 500 & \XSolidBrush & 0.53  \\
FlyModel          & 10K &  1  &  \Checkmark  & \textbf{0.91} \\
\textbf{EMLP (ours)} & 10K &  1  & \XSolidBrush & 0.802$\pm$0.002 \\
CNN               & 10K &  1  & \XSolidBrush & 0.769$\pm$0.011 \\
CNN+Streaming EWC & 10K &  1  & \XSolidBrush & 0.780$\pm$0.010 \\
\textbf{ECNN (ours)} & 10K &  1  & \XSolidBrush & 0.850$\pm$0.004 \\
ConvMixer            & 10K &  1  & \XSolidBrush & 0.107$\pm$0.003 \\
EConvMixer (ours)    & 10K &  1  & \XSolidBrush & 0.104$\pm$0.003 \\
\bottomrule
\end{tabular}
\end{table}

\begin{table}
\caption{The test accuracy of various methods in class incremental learning on \textit{Split CIFAR10}. Higher is better. The number of neurons refers to the size of the last hidden layer, i.e., the feature dimension.}
\label{tb:cifar}
\centering
\begin{tabular}{lcc}
\toprule
Method & Neurons & Test Accuracy \\
\midrule
MLP               &  1K & 0.151$\pm$0.008 \\
MLP+Streaming EWC &  1K & 0.158$\pm$0.005 \\
\textbf{EMLP (ours)} &  1K & \textbf{0.197$\pm$0.003} \\
CNN               &  1K & 0.151$\pm$0.001 \\
CNN+Streaming EWC &  1K & 0.147$\pm$0.010 \\
\textbf{ECNN (ours)} &  1K & \textbf{0.192$\pm$0.004} \\
ConvMixer            &  1K & 0.100$\pm$0.0027 \\
EConvMixer (ours)    &  1K & 0.100$\pm$0.0001 \\
\midrule
MLP               & 10K & 0.173$\pm$0.004 \\
MLP+Streaming EWC & 10K & 0.169$\pm$0.003 \\
\textbf{EMLP (ours)} & 10K & \textbf{0.239$\pm$0.002} \\
CNN               & 10K & 0.151$\pm$0.001 \\
CNN+Streaming EWC & 10K & 0.179$\pm$0.007 \\
\textbf{ECNN (ours)} & 10K & \textbf{0.243$\pm$0.002} \\
ConvMixer            & 10K & 0.102$\pm$0.0015 \\
EConvMixer (ours)    & 10K & 0.100$\pm$0.0001 \\
\bottomrule
\end{tabular}
\end{table}

\begin{figure}[tbp]
\centering
\subcaptionbox{Split MNIST}{
\includegraphics[width=\figwidthtwo]{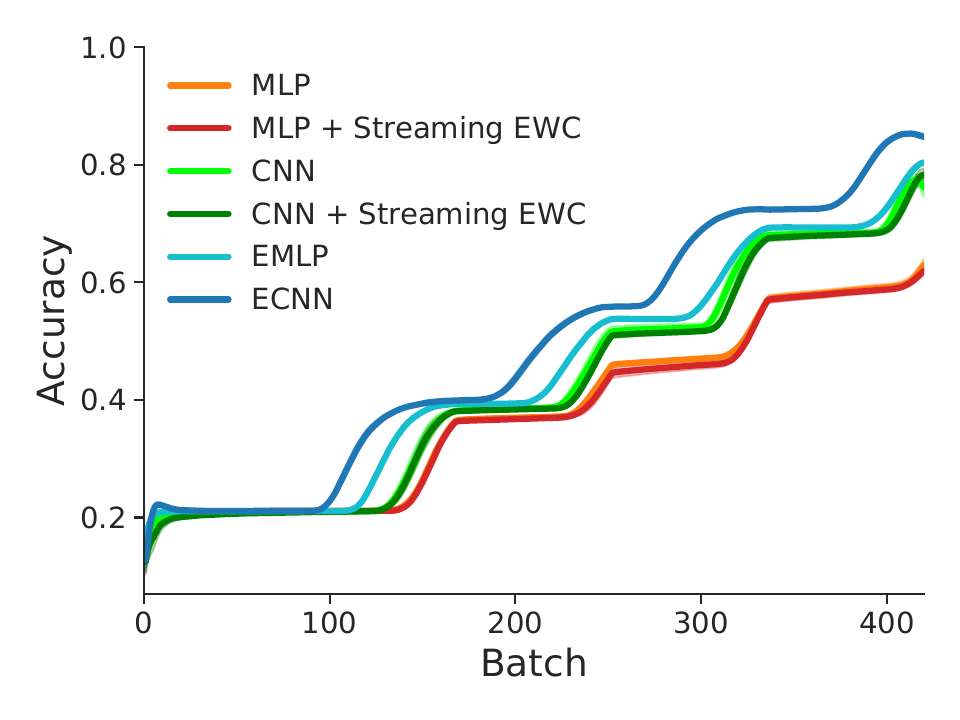}}
\subcaptionbox{Split CIFAR10}{
\includegraphics[width=\figwidthtwo]{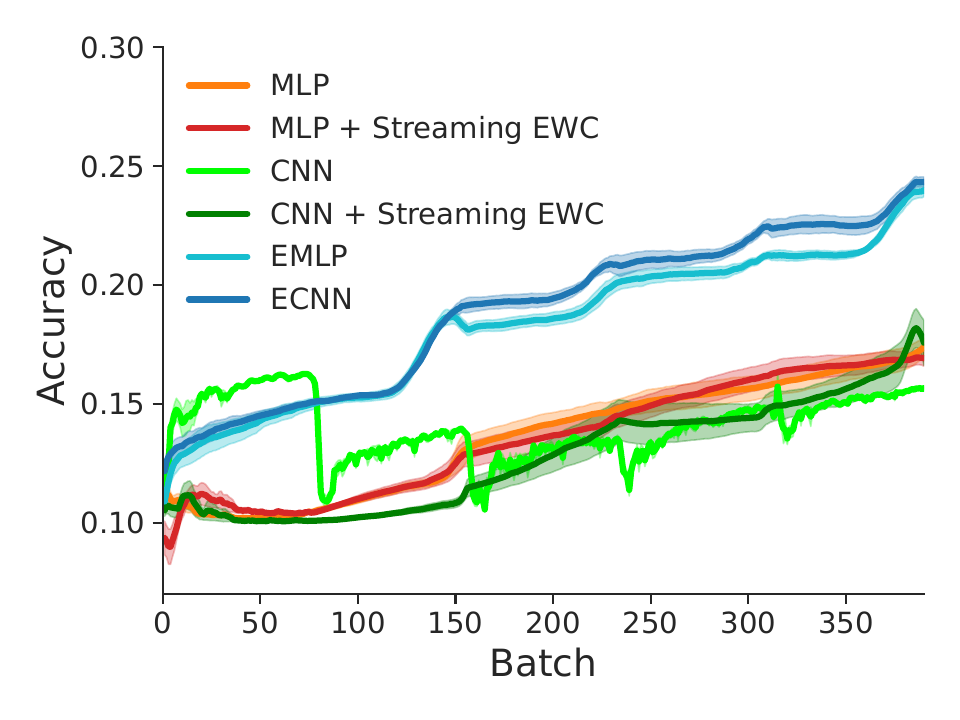}}
\caption{The test accuracy curves during training. The x-axis shows the number of training mini-batches. All results are averaged over $5$ runs, and the shaded regions represent standard errors.}
\label{fig:mnist_clari}
\end{figure}

The test accuracy is used as the performance metric.
The performance summaries of different methods on Split MNIST and Split CIFAR10 are shown in~\cref{tb:mnist} and~\cref{tb:cifar}, correspondingly. 
In~\cref{fig:mnist_clari}, we plot the test accuracy curves during training on Split MNIST and Split CIFAR10 when the number of neurons is $10K$.
All results are averaged over $5$ runs, reported with standard errors.
The number of neurons refers to the size of the last hidden layer in a neural network, i.e., the feature dimension.

Besides simple CNNs, we test ConvMixer~\citep{trockman2022patches} which can achieve around $92.5\%$ accuracy in just $25$ epochs in classical supervised learning~\footnote{\url{https://github.com/locuslab/convmixer-cifar10}}.
However, we find that ConvMixer completely fails in class incremental learning setting, as shown in~\cref{tb:mnist} and~\cref{tb:cifar}.
Same as~\citet{mirzadeh2022architecture}, the results show that using more advanced models does not necessarily lead to better performance in class incremental learning.
We also tried CNNs with more convolution layers but found the performance was worse and worse as we increased the number of convolution layers.
Overall, FlyModel performs the best, although it requires additional task boundary information.
Our methods (EMLP and ECNN) are the second best without utilizing task boundary information by training for a single pass.
Other findings are summarized in the following, reaffirming the findings in~\citet{mirzadeh2022wide,mirzadeh2022architecture}:
\begin{itemize}[leftmargin=1em]
\item Wider neural networks forget less by using more neurons.
\item Neural network architectures can significantly impact learning performance: CNN (ECNN) is better than MLP (EMLP), especially with many neurons.
\item Architectural improvement could be larger than algorithmic improvement: using a better architecture (e.g., EMLP and ECNN) is more beneficial than incorporating Streaming EWC.
\end{itemize}

Overall, we conclude that applying elephant activation functions significantly reduces forgetting and boosts performance in class incremental learning under strict constraints.

\paragraph{Combining with Pre-Training}
To show that our method can be combined with pre-training, we relax our experimental assumptions by adding the pre-training technique.
Specifically, for CIFAR10 and CIFAR100, we use 256-dimensional latent embeddings, which are provided by~\citet{bricken2023sparse}, taken from the last layer of a frozen ConvMixer~\citep{trockman2022patches} that is pre-trained on ImageNet32~\citep{chrabaszcz2017downsampled}.
The corresponding embedding datasets are called Embedding CIFAR10 and Embedding CIFAR100, respectively.
We split Embedding CIFAR10 into $5$ sub-datasets while Embedding CIFAR100 is split into $50$ sub-datasets; each sub-dataset contains two of the classes.
We test different methods on the two datasets and present results in~\cref{tb:embedcifar10} and~\cref{tb:embedcifar100}.
Note that the the results of SDMLP and FlyModel are from~\citet{bricken2023sparse}.

For Tiny ImageNet, we use 768-dimensional latent embeddings, which are taken from the last layer of a frozen ConvMixer~\citep{trockman2022patches} that is pre-trained on ImageNet-1k~\citep{imagenet15russakovsky}, provided by Hugging Face~\footnote{\url{https://huggingface.co/timm/convmixer_768_32.in1k}}.
The corresponding embedding dataset is called Embedding Tiny ImageNet.
We then split Embedding Tiny ImageNet into 100 sub-datasets; each of them contains two of the classes.
The experimental results are presented in~\cref{tb:embedtiny}.

Overall, the performance of EMLP is greatly improved with the help of pre-training.
For example, the test accuracy on CIFAR10 is boosted from 25\% to 75\%.
Moreover, EMLP still outperforms MLP significantly, showing that our method can be combined with common practices in class incremental learning, such as pre-training.

\begin{table}
\caption{The test accuracy of various methods in class incremental learning on \textit{Split Embedding CIFAR10}, averaged over $5$ runs. Higher is better. Standard errors are also reported.}
\label{tb:embedcifar10}
\centering
\begin{tabular}{lcccc}
\toprule
Method & Neurons & Dataset Passes & Task Boundary & Test Accuracy \\
\midrule
MLP               &  1K &  1  & \XSolidBrush & 0.662$\pm$0.006 \\
SDMLP             &  1K & 2000 & \XSolidBrush & 0.56  \\
FlyModel          &  1K &  1  &  \Checkmark  & 0.69  \\
\textbf{EMLP (ours)} &  1K &  1  & \XSolidBrush & \textbf{0.726$\pm$0.008} \\
\midrule
MLP               & 10K &  1  & \XSolidBrush & 0.697$\pm$0.002 \\
SDMLP             & 10K & 2000 & \XSolidBrush & 0.77  \\
FlyModel          & 10K &  1  &  \Checkmark  & \textbf{0.82} \\
\textbf{EMLP (ours)} & 10K &  1  & \XSolidBrush & 0.755$\pm$0.002 \\
\bottomrule
\end{tabular}
\end{table}

\begin{table}
\caption{The test accuracy of various methods in class incremental learning on \textit{Split Embedding CIFAR100}, averaged over $5$ runs. Higher is better. Standard errors are also reported.}
\label{tb:embedcifar100}
\centering
\begin{tabular}{lcccc}
\toprule
Method & Neurons & Dataset Passes & Task Boundary & Test Accuracy \\
\midrule
MLP               &  1K &  1  & \XSolidBrush & 0.157$\pm$0.001 \\
SDMLP             &  1K & 500 & \XSolidBrush & 0.39  \\
FlyModel          &  1K &  1  &  \Checkmark  & 0.36  \\
\textbf{EMLP (ours)} &  1K &  1  & \XSolidBrush & \textbf{0.391$\pm$0.003} \\
\midrule
MLP               & 10K &  1  & \XSolidBrush & 0.425$\pm$0.001 \\
SDMLP             & 10K & 500 & \XSolidBrush & 0.43  \\
FlyModel          & 10K &  1  &  \Checkmark  & \textbf{0.58} \\
\textbf{EMLP (ours)} & 10K &  1  & \XSolidBrush & 0.449$\pm$0.002 \\
\bottomrule
\end{tabular}
\end{table}

\begin{table}
\caption{The test accuracy of various methods in class incremental learning on \textit{Split Embedding Tiny ImageNet}, averaged over $5$ runs. Higher is better. Standard errors are also reported.}
\label{tb:embedtiny}
\centering
\begin{tabular}{lcccc}
\toprule
Method & Neurons & Dataset Passes & Task Boundary & Test Accuracy \\
\midrule
MLP               &  10K &  1  & \XSolidBrush & \textbf{0.249$\pm$0.003} \\
\textbf{EMLP (ours)} &  10K &  1  & \XSolidBrush & 0.241$\pm$0.002 \\
\midrule
MLP               & 100K &  1  & \XSolidBrush & 0.264$\pm$0.002 \\
\textbf{EMLP (ours)} & 100K &  1  & \XSolidBrush & \textbf{0.350$\pm$0.002} \\
\bottomrule
\end{tabular}
\end{table}

\paragraph{Hyper-parameter Settings}
We list the (swept) hyper-parameters for Split MNIST and Split CIFAR10 in~\cref{hyper:mnist_cifar}.
The (swept) hyper-parameters for Split Embedding CIFAR10, Split Embedding CIFAR100, and Split Embedding Tiny ImageNet are listed in~\cref{hyper:embed_cifar} and \cref{hyper:embed_tiny}.
Among them, $d$, $a$, and $\sigma_{bias}$ are specific hyper-parameters for ENNs, where $\gamma$ and $\lambda$ are two hyper-parameters used in (streaming) EWC~\citep{kirkpatrick2017overcoming}.
For each mini-batch data, we do $E$ optimization steps.

\begin{table}[htbp]
\caption{The (swept) hyper-parameters for \textit{Split MNIST} and \textit{Split CIFAR10}.}
\label{hyper:mnist_cifar}
\centering
\begin{tabular}{lc}
\toprule
Hyper-parameter & Value \\
\midrule
Optimizer & RMSProp with decay=$0.999$ \\
learning rate & $\{3e-6, 1e-6, 3e-7, 1e-7, 3e-8, 1e-8\}$ \\
mini-batch size & $125$ \\
$E$ & $\{1, 2\}$ \\
$d$ & $4$ \\
$a$ & $\{0.02, 0.04, 0.08, 0.16, 0.32\}$ \\
$\sigma_{bias}$ & $\{0.04, 0.08, 0.16, 0.32, 0.64\}$ \\
EWC $\gamma$ & $\{0.5, 0.8, 0.9, 0.95, 0.99, 0.999\}$ \\
EWC $\lambda$ & $\{1e1, 1e2, 1e3, 1e4, 1e5\}$ \\
\bottomrule
\end{tabular}
\end{table}

\begin{table}[htbp]
\caption{The (swept) hyper-parameters for \textit{Split Embedding CIFAR10} and \textit{Split Embedding CIFAR100}.}
\label{hyper:embed_cifar}
\centering
\begin{tabular}{lc}
\toprule
Hyper-parameter & Value \\
\midrule
Optimizer & RMSProp with decay=$0.999$ \\
learning rate & $\{1e-4, 3e-5, 1e-5, 3e-6, 1e-6, 3e-7, 1e-7\}$ \\
mini-batch size & $125$ \\
$E$ & $\{1, 2, 4\}$ \\
$d$ & $4$ \\
$a$ & $\{0.02, 0.04, 0.08, 0.16, 0.32\}$ \\
$\sigma_{bias}$ & $\{0.04, 0.08, 0.16, 0.32, 0.64\}$ \\
\bottomrule
\end{tabular}
\end{table}

\begin{table}[htbp]
\caption{The (swept) hyper-parameters for \textit{Split Embedding Tiny ImageNet}.}
\label{hyper:embed_tiny}
\centering
\begin{tabular}{lc}
\toprule
Hyper-parameter & Value \\
\midrule
Optimizer & RMSProp with decay=$0.999$ \\
learning rate & $\{1e-5, 3e-6, 1e-6, 3e-7, 1e-7\}$ \\
mini-batch size & $250$ \\
$E$ & $\{2, 4, 8\}$ \\
$d$ & $4$ \\
$a$ & $\{0.08, 0.16, 0.32, 0.64\}$ \\
$\sigma_{bias}$ & $\{0.32, 0.64, 1.28, 2.56\}$ \\
\bottomrule
\end{tabular}
\end{table}

\subsection{Policy Gradient Methods}\label{appendix:pg}

In this section, we test $\elephant$ by incorporating it into policy gradient methods.
We consider other activation functions as baselines, such as $\relu$, $\tanh$, $\maxout$, $\lwta$, and $\fta$.
Different activation functions are tested for 10 runs in 6 MuJoCo tasks~\citep{towers2023gymnasium}: HalfCheetah-v4, Hopper-v4, Walker2d-v4, Ant-v4, Reacher-v4, and Swimmer-v4.

We consider two representative policy gradient algorithms --- proximal policy optimization (PPO)~\citep{schulman2017proximal} and soft actor-critic (SAC)~\citep{haarnoja2018soft}.
Specifically, PPO is an on-policy policy gradient method and SAC is an off-policy policy gradient method.
We implement PPO and SAC with Jax~\citep{jax2018github} from scratch based on popular open-sourced implementations~\citep{jaxrl,huang2022cleanrl}.

For PPO, the default network is an MLP with one hidden layer of size $1,000$ in all tasks.
Following~\citet{huang2022cleanrl}, we apply Adam~\citep{tieleman2012rmsprop} with linearly decreasing learning rate starting from $1e-4$ to $0$ and global gradient norm $0.5$.
We set the number of the collected trajectory steps to $2,048$, which is also the buffer size in PPO. 
The discount factor $\gamma=0.99$. The mini-batch size is $64$.
For $\elephant$, for a fair comparison, we use the same hyper-parameters ($d=4$, $h=1$, $a=0.8$, and $\sigma_{bias}=1$) for all MuJoCo tasks, although tuning them for each task could further improve the performance.
For $\maxout$ and $\lwta$, we set $k=5$.
For $\fta$, we set $k=20$ and $[l,u]=[-20,20]$ following~\citet{pan2020fuzzy}.
Similar to the DQN case, we adjust the width of the hidden layer so that the total number of parameters is close to each other when different activation functions are applied.

For SAC, the default network is an MLP with hidden layers $[256, 256]$ in all tasks.
We apply Adam~\citep{tieleman2012rmsprop} with learning rate $1e-3$ for optimization.
For buffer sizes, we consider values in $\{1e4, 1e5, 1e6\}$ where the default buffer size is $1e6$.
The discount factor $\gamma=0.99$. The mini-batch size is $256$.
For $\elephant$, for a fair comparison, we use the same hyper-parameters ($d=4$, $h=1$, $a=1$, and $\sigma_{bias}=2$) for all MuJoCo tasks, although tuning them for each task and buffer size could further improve the performance.
For $\maxout$ and $\lwta$, we set $k=4$.
Following~\citet{pan2020fuzzy}, when apply $\fta$ only to the last hidden layer; and we set $k=20$ and $[l,u]=[-20,20]$.
We also adjust the width of the hidden layer so that the total number of parameters is close to each other when different activation functions are applied.

\subsubsection{Evaluation of Memory Efficiency}

First, we test SAC under various buffer sizes.~\footnote{The buffer size in PPO is already relatively small, hence it is not further tested.}
The agents' performance of different activation functions and buffer sizes is shown in~\cref{fig:mujoco_buffer}.
All results are averaged over $10$ runs and the shaded areas represent standard errors. 
There is no clear winner in general, and $\elephant$ either matches or outperforms $\relu$ in most tasks.
Notably, $\elephant$ shows its strong robustness to buffer sizes in Swimmer-v4 --- while other activations are negatively impacted by size changes, its performance stays high across buffer sizes.

\begin{figure}[htbp]
\centering
\begin{minipage}[c]{\textwidth}
    \centering
    \includegraphics[width=\figwidthone]{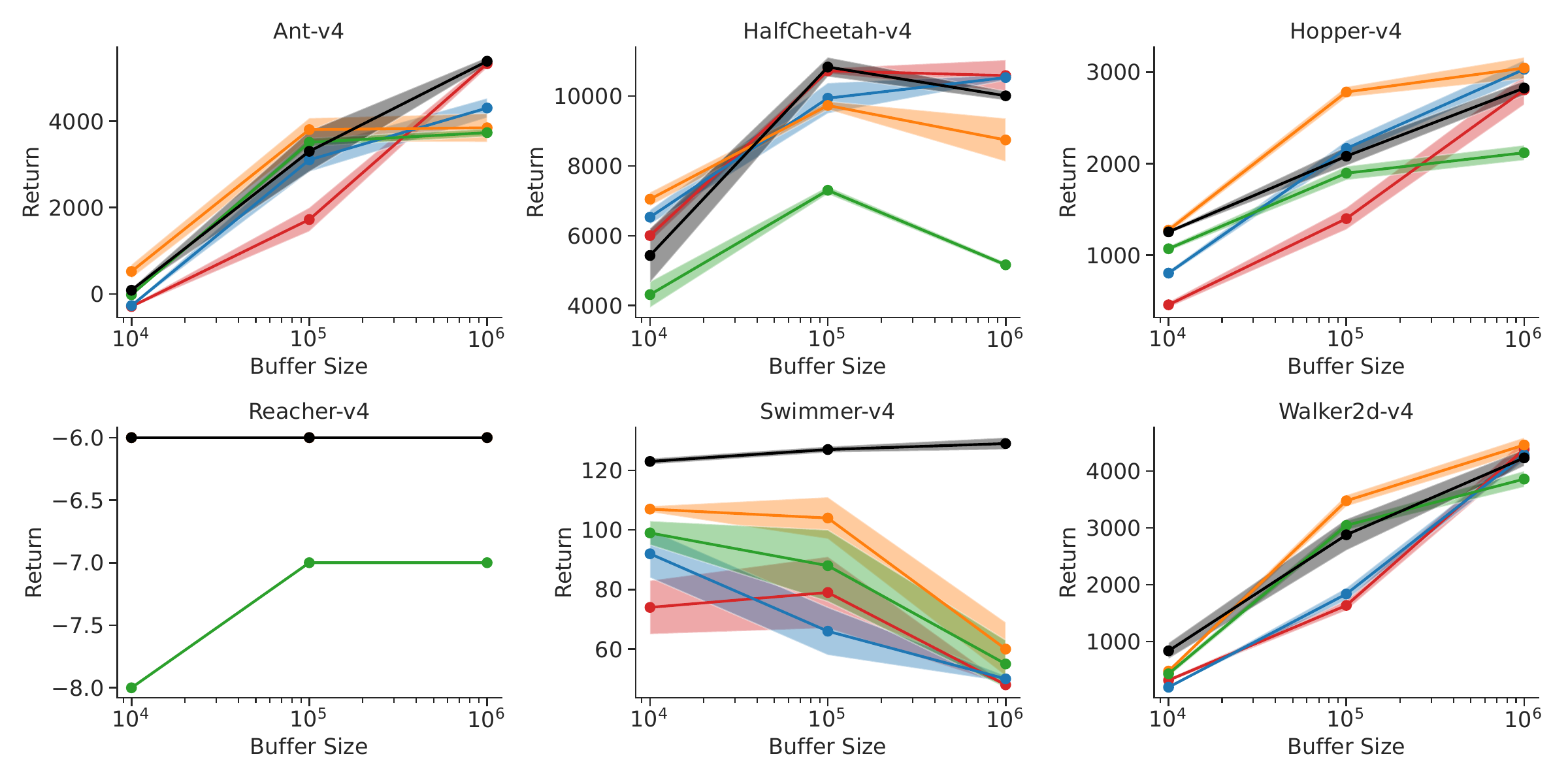}
    \includegraphics[width=0.55\textwidth]{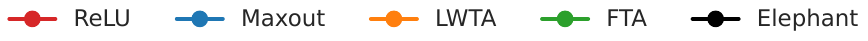}
\end{minipage}
\caption{The performance of DQN and SAC with different activation functions under various buffer sizes, measured as the average return of the last 10\% episodes. In Reacher-v4, the curves of $\maxout$, $\lwta$, and $\relu$ are presented as well but covered by $\elephant$.}
\label{fig:mujoco_buffer}
\end{figure}

\subsubsection{Evaluation of Sample Efficiency}

We then test activation functions in PPO and SAC with a default (large) buffer, shown in \cref{fig:mujoco}.
All results are averaged over $10$ runs and the shaded areas represent standard errors. 
For PPO, $\elephant$ achieves better performance than the default activation $\tanh$ in most tasks.
SAC with the default activation $\relu$ is already quite good, achieving similar or better performance than most other activations in most tasks.
Specifically, the results between $\elephant$ and $\relu$ are quite mixed: $\elephant$ matches the performance of $\relu$ in Hopper-v4, Walker2d-v4, Ant-v4, and Reacher-v4, slightly underperforms $\relu$ in HalfCheetah-v4, and significantly outperforms $\relu$ in Swimmer-v4.

\begin{figure}[htbp]
\centering
\subcaptionbox{PPO in 6 MuJoCo tasks \vspace{1em}}{
\begin{minipage}{\textwidth}
    \centering
    \includegraphics[width=\figwidthone]{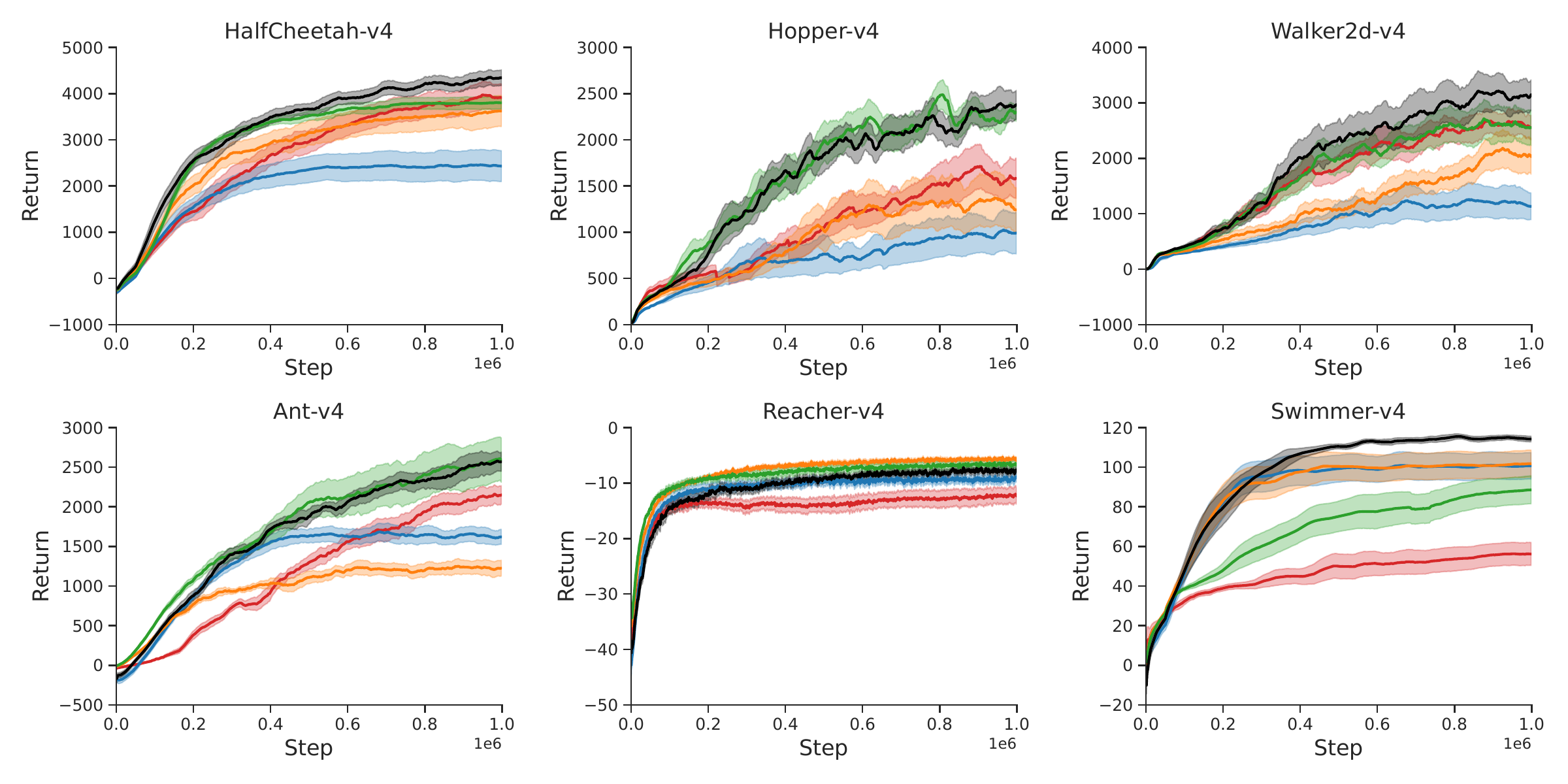}
    \includegraphics[width=0.55\textwidth]{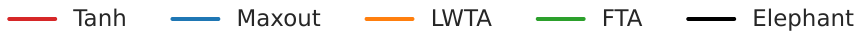}
\end{minipage}
}
\subcaptionbox{SAC in 6 MuJoCo tasks}{
\begin{minipage}{\textwidth}
    \centering
    \includegraphics[width=\figwidthone]{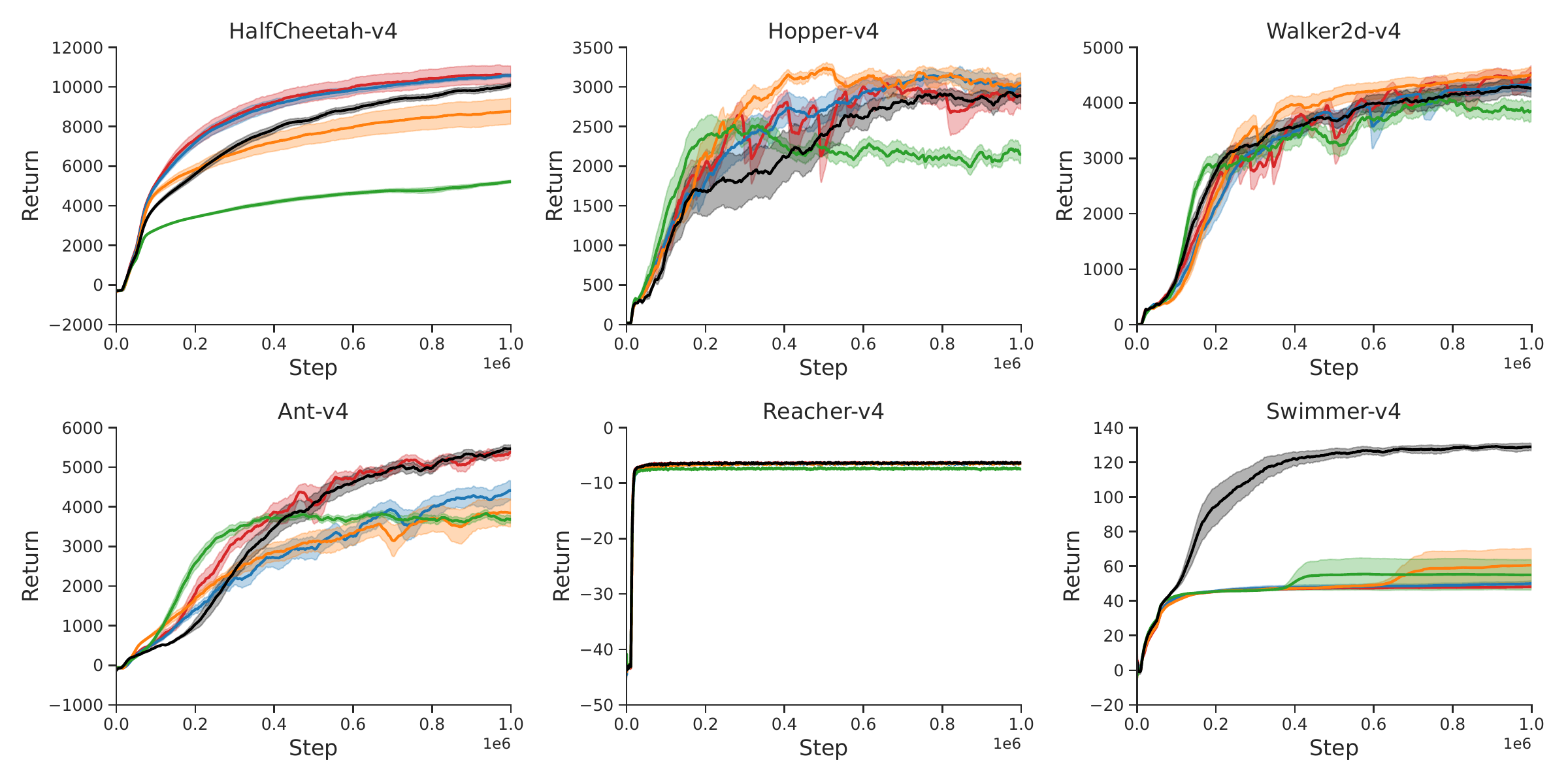}
    \includegraphics[width=0.55\textwidth]{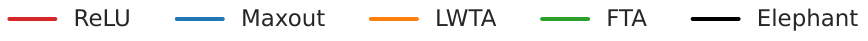}
\end{minipage}
}
\caption{The return curves of PPO and SAC in 6 MuJoCo tasks with different activation functions. A default large buffer is used. All results are averaged over $10$ runs and the shaded areas represent standard errors.}
\label{fig:mujoco}
\end{figure}

\subsubsection{Summary}

In summary, $\elephant$ does not show significant advantage over other activation functions for policy gradient methods, especially in SAC experiments --- it performs similar to $\relu$ in most tasks.
Notably, these results are consistent with the findings from~\citet{ceron2024value}, who report that increasing network sparsity when training PPO and SAC in MuJoCo tasks does not lead to performance improvement in most cases.
We leave a deeper investigation of this phenomenon for future work.

\subsection{Evaluation in A Real-Robot Task}

Finally, we test $\elephant$ in a real-robot task---UR-Reacher-2, introduced by~\cite{mahmood2018benchmarking}.
This task resembles the Reacher-v4 task in MuJoCo using a UR5 robot.
Specifically, the robot controls the angular speeds of the second and third joints from the base, within a range of $[-0.3, +0.3]$ rad/s.
The observation vector comprises joint angles, joint velocities, the previous action, and the vector difference between the target and the fingertip coordinates.
The goal of the learning agent is to reach arbitrary target positions on a 2D plane.
The experiment is conducted on a workstation with an AMD Ryzen Threadripper 2950 processor, an NVidia 2080Ti GPU, and 128GB memory.

Our implementation of SAC for the robot reacher task is adapted from~\cite{pytorch_sac}.
The networks are MLPs with hidden layers $[256, 256]$.
For every $2$ steps, we apply Adam~\citep{tieleman2012rmsprop} with learning rate $3e-4$ for optimization.
The discount factor $\gamma=0.99$.
The mini-batch size is $64$.
For $\elephant$, we set $d=4$, $h=1$, $a=1$, and $\sigma_{bias}=1.6$.

We apply SAC in this task and compare the performance between $\elephant$ and the default activation function $\relu$.
Moreover, we consider two different buffer sizes---a big buffer with size $200K$ and a small buffer with size $2K$.
All results are averaged over $5$ runs, and the shaded areas represent standard errors, as shown in~\cref{fig:robot}.
We observe that $\elephant$ achieves a similar performance as $\relu$ when a big buffer is used, while $\elephant$ achieves a higher sample efficiency than $\relu$ when the buffer is small.

\begin{figure}[htbp]
\centering
\includegraphics[width=0.7\textwidth]{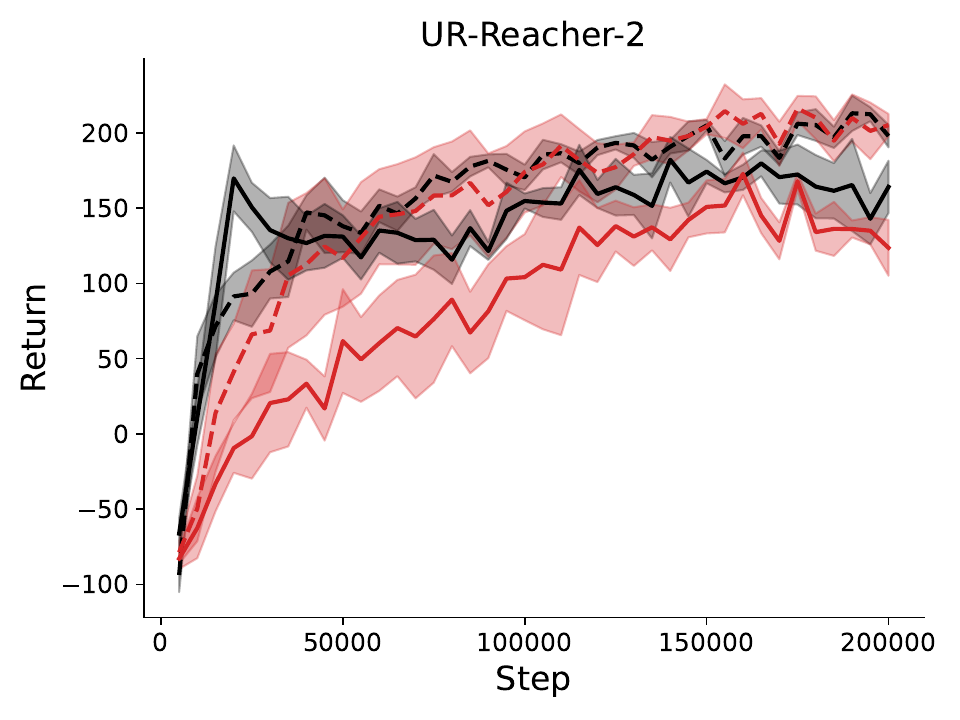}
\includegraphics[width=0.7\textwidth]{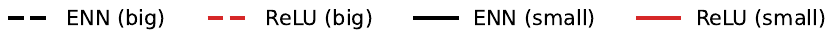}
\caption{The performance of SAC in the robot task UR-Reacher-2 with big and small buffers.}
\label{fig:robot}
\end{figure}

\end{document}